\documentclass{article} 
\usepackage[T1]{fontenc}
\usepackage{iclr2027_conference,times}
\renewcommand{\headrulewidth}{0pt}

\usepackage{amsmath,amsfonts,bm}

\def\1{\bm{1}}

\DeclareMathAlphabet{\mathsfit}{\encodingdefault}{\sfdefault}{m}{sl}
\SetMathAlphabet{\mathsfit}{bold}{\encodingdefault}{\sfdefault}{bx}{n}

\newcommand{\E}{\mathbb{E}}

\usepackage{amsmath,amssymb,amsfonts,amsthm,mathtools,bm}
\usepackage{algorithm}
\usepackage{microtype}
\usepackage{hyperref}
\usepackage{url}
\usepackage{algpseudocode}
\hypersetup{hidelinks}
\usepackage{enumitem}

\usepackage{booktabs,makecell,graphicx}

\newtheoremstyle{boldplain}
  {\topsep}{\topsep}{\itshape}{}{\normalfont}{.}{.5em}
  {\thmname{\textbf{#1}}\thmnumber{\textbf{ #2}}\thmnote{ (#3)}}
\newtheoremstyle{bolddefinition}
  {\topsep}{\topsep}{\normalfont}{}{\normalfont}{.}{.5em}
  {\thmname{\textbf{#1}}\thmnumber{\textbf{ #2}}\thmnote{ (#3)}}

\theoremstyle{boldplain}
\newtheorem{theorem}{Theorem}
\newtheorem{lemma}{Lemma}
\newtheorem{proposition}{Proposition}
\newtheorem{corollary}{Corollary}
\theoremstyle{bolddefinition}
\newtheorem{assumption}{Assumption}
\newtheorem{definition}{Definition}
\theoremstyle{bolddefinition}

\newtheoremstyle{restatedstyle}
  {3pt}{3pt}{\itshape}{}{\normalfont}{.}{.5em}
  {\thmname{\textbf{#1}}\thmnumber{\textbf{ #2}}\thmnote{ (#3)}}

\theoremstyle{restatedstyle}
\newtheorem*{lemnsrestated}{Lemma~\ref{lem:ns}}
\theoremstyle{plain}
\theoremstyle{restatedstyle}
\newtheorem*{lemtrackingrestated}{Lemma~\ref{lem:tracking}}
\theoremstyle{boldplain}

\usepackage{amsthm}
\newtheorem*{repproposition}{Proposition~\ref{prop:two-level}}

\newcommand{\normF}[1]{\left\lVert #1\right\rVert_F}

 \newcommand{\nf}[1]{\normF{#1}}

\hypersetup{
    colorlinks=false,
    pdflinkmargin=0.3pt,
    pdfborder={0 0 0.2},
    linkbordercolor={1 0 0},
    citebordercolor={0 0.6 0},
    urlbordercolor={0 0 1}
}

\makeatletter
\let\Muon@natlinkstart\hyper@natlinkstart
\let\Muon@natlinkend\hyper@natlinkend
\let\Muon@natlinkbreak\hyper@natlinkbreak
\newcommand{\Muon@citestrut}{%
  {\XeTeXLinkMargin=0pt
   \XeTeXLinkBox{\vrule width0pt height.7em depth.22em}}}
\renewcommand{\hyper@natlinkstart}[1]{\Muon@natlinkstart{#1}\Muon@citestrut}
\renewcommand{\hyper@natlinkend}{\Muon@citestrut\Muon@natlinkend}
\renewcommand{\hyper@natlinkbreak}[2]{%
  \Muon@citestrut\Muon@natlinkbreak{#1}{#2}\Muon@citestrut}
\makeatother

\title{Convergence of Practical Muon}

\author{
Haonan Wang$^{1}$,~Yu Wu$^{1}$,~Minghui Liwang$^{1,2}$,~
Xinlei Yi$^{1,2}$\thanks{Corresponding author:
\texttt{xinleiyi@tongji.edu.cn}.},~
Yiguang Hong$^{1,2}$\\
$^{1}$Tongji University\\
$^{2}$Shanghai Research Institute for Intelligent Autonomous Systems
}

\iclrfinalcopy 
\begin{document}

\maketitle

\begin{abstract}
Muon is emerging as a promising alternative to AdamW for large-scale neural network training, 
yet theoretical understanding of its practical implementation remains incomplete, 
as existing analyses often simplify or omit two key components: 
(i) practical Newton--Schulz iterations with empirically tuned polynomial coefficients $(3.4445,-4.7750,2.0315)$; 
and (ii) decoupled weight decay for regularization.
In this paper, we provide an optimization interpretation and establish convergence for practical Muon, jointly accounting for both components.
Specifically, we interpret practical Muon as right-preconditioned optimization of the original loss 
with a dynamic weighted $\ell_2$ regularizer that vanishes as stationarity is approached,
so that the optimization target remains the original objective.
We then establish, to our best knowledge, 
the first convergence guarantee for practical Muon in the stochastic nonconvex setting,
with an $\mathcal{O}(T^{-1/4})$ convergence rate in terms of the expected Frobenius norm of the gradient, 
improving the dimension dependence of the best known AdamW's convergence rate by a factor of $\sqrt{d}$, 
where $T$ is the iteration horizon and $d$ is the parameter dimension.
Experiments further support the theoretical convergence results.

\end{abstract}

\section{Introduction}

By exploiting the matrix structure of model parameters,
Muon \citep{jordan2024muon} has been shown the potential to become a dominant optimizer for large-scale neural network training,
displacing adaptive methods such as AdaGrad \citep{duchi2011adaptive}, RMSProp \citep{tieleman2012lecture}, Adam \citep{kingma2015adam}, and AdamW \citep{loshchilov2019decoupled},
which have dominated the field over the past decade,
as well as classical SGD with momentum \citep{robbins1951stochastic,sutskever2013importance}.
Remarkable empirical success has been reported for Muon and its
variants in language modeling 
\citep{liu2025muon,shah2025practical,kimi2025k2,xu2026deepseek},
multimodal and agentic systems \citep{kimi2026k25,zeng2026glm,hong2026glm},
and embodied intelligence \citep{fan2026rethinking}.
In particular, \citet{liu2025muon} report roughly twice AdamW's
computational efficiency on 399M--1.5B dense models,
while DeepSeek-V4-Pro demonstrates Muon's scalability to
1.6 trillion parameters and 33 trillion training tokens
\citep{xu2026deepseek}.

Despite the empirical success, convergence remains an open problem
for practical Muon, which combines two key ingredients:
(i) practical Newton--Schulz (NS) iterations that use  matrix polynomials
with empirically tuned coefficients $(3.4445,-4.7750,2.0315)$
to produce approximately orthogonalized momentum matrices as update directions;
 and
(ii) decoupled weight decay (WD) to provide regularization
 and support stable training at scale \citep{liu2025muon}.
The convergence analysis faces two main challenges.
First, practical NS is difficult to analyze directly, and existing studies often replace it with exact SVD-based orthogonalization \citep{shen2026on},
or standard NS using different polynomial coefficients \citep{kim2026muon,choudhury2026muon} 
which can yield poorer orthogonalization of ill-conditioned momentum matrices than practical NS \citep{jordan2024muon}.
To our best knowledge, neither alternative has been publicly validated in large-scale commercial model training \citep{liu2025muon,kimi2025k2,xu2026deepseek,kimi2026k25,zeng2026glm,hong2026glm}.
As for practical NS without WD,
\citet{do2026muon} assume strong convexity and establish convergence to a neighborhood of the minimizer,
while \citet{qian2026convergence} establish an $\mathcal{O}(\sqrt{r}/T^{1/4})$ convergence rate for
nuclear norm stationarity in the stochastic nonconvex setting,
where $r$ is the smaller dimension of the matrix-valued parameter and $T$ is the iteration horizon.

\begin{table}[t]
\centering
\caption{Comparison of existing convergence results for Muon.
The rate column uses $\mathbb{E}\|\nabla f\|_{\mathrm F}$ unless
a KKT stationarity measure or expected objective gap  (obj.) is indicated.
$\dagger$: The nuclear norm rates established in the original papers
are converted to Frobenius norm rates under our assumptions.
Std. and Pract. denote standard and practical NS, respectively.
}
\label{tab:muon-convergence}
\normalsize
\setlength{\tabcolsep}{3pt}
\renewcommand{\arraystretch}{0.95}
\setlength{\aboverulesep}{2pt}
\setlength{\belowrulesep}{2pt}
\begin{tabular*}{\linewidth}{@{\hspace{4pt}\extracolsep{\fill}}lcccc@{\hspace{4pt}}}
\toprule
Reference & WD & NS & Nonconvex & Convergence rate \\
\midrule
\cite{shen2026on}
& $\times$ & Exact & $\checkmark$
& $\mathcal{O}(\sqrt r/T^{1/4})$ \\
\cite{kim2026muon}
& $\times$ & Standard & $\checkmark$
& $\mathcal{O}(\sqrt r/T^{1/4})^{\dagger}$ \\
\cite{choudhury2026muon}
& $\times$ & Standard & $\checkmark$
& $\mathcal{O}(r/T^{1/4})$ \\
\cite{do2026muon}
& $\times$ & Practical & Strongly convex
& neighborhood (obj.) \\
\cite{qian2026convergence}
& $\times$ & Std./Pract. & $\checkmark$
& $\mathcal{O}(r/T^{1/4})^{\dagger}$ \\
\midrule
\cite{chen2026muon}
& $\checkmark$ & Exact & $\checkmark$
& neighborhood (KKT) \\
\cite{pethick2025training}
& $\checkmark$ & Exact & $\checkmark$
& $\mathcal{O}(r/T^{1/4})$ (KKT) \\
\cite{qian2026convergence}
& $\checkmark$ & Std./Pract. & Star-convex
& $\mathcal{O}(\sqrt{r} (\log{T}/T)^{1/3})$ (obj.) \\
\midrule
\textbf{Ours}
& $\checkmark/\times$ & Std./Pract. & $\checkmark$
& $\boldsymbol{\mathcal{O}(1/T^{1/4})}$ \\
\bottomrule
\end{tabular*}%
\end{table}

Second, WD further complicates the convergence analysis through its interaction with NS.
Existing studies that account for WD typically replace NS with exact 
SVD-based 
orthogonalization,
and show that Muon is equivalent to solving the following constrained optimization problem 
\citep{pethick2025training,chen2026muon}:
\begin{equation}
\min\nolimits_{W\in\mathbb{R}^{m\times n}} f(W)
\quad \text{s.t.} \quad
\|W\|_{\mathrm{op}}\le 1/\lambda, \nonumber
\end{equation}
where $f$ denotes the original loss function,  
$W$ is the matrix-valued parameter, $\lambda>0$ is the WD coefficient, and $\|\cdot\|_{\mathrm{op}}$ denotes the spectral norm.
For stochastic nonconvex objectives, \citet{pethick2025training} establish an $\mathcal{O}(r/T^{1/4})$ convergence rate in terms of a KKT stationarity measure for the above constrained problem.
When both NS and WD are considered, \citet{qian2026convergence} provide the only convergence guarantee that we are aware of.
Under star-convexity and with parameter choices that depend on the norm of an unknown global minimizer, 
they prove convergence to an optimal solution of the original problem.

These observations naturally raise two questions about Muon with practical NS and WD in the stochastic nonconvex setting:

\emph{
\hspace{-0.25em}Q1) What does Muon actually optimize 
 when practical NS and WD are combined?
\\
Q2) Does it converge? If so, at what rate, and can
the convergence rate be sharper than AdamW's?}

\paragraph{Controbutions.}
We address these questions and make the following contributions:
\begin{itemize}[leftmargin=*, labelsep=0.5em]
    \item
    \looseness=-1 
    We interpret Muon with NS and WD as right-preconditioned
    optimization of $f$ with a dynamic weighted $\ell_{2}$ regularizer
    that vanishes as $\|\nabla f\|_{F}\to0$, indicating that
    Muon still targets the original objective.
    While this interpretation is exact when momentum in Muon and noise in stochastic grad\-ient
     are omitted, our convergence guarantees remain valid in their presence.
    Thus we address Q1.
 
    \item 
    To the best of our knowledge, we establish the first convergence
    guarantee for Muon with practical NS and WD in the stochastic
    nonconvex setting, as summarized in Table~\ref{tab:muon-convergence}, achieving
    \vspace{-0.25em}
    $$
    \frac1T\sum\nolimits_{t=0}^{T-1}\E\nf{\nabla f(W_t)}
    =
    \mathcal{O}\Big(\frac{\sqrt[4]{L(f(W_0)-f_{\inf})\sigma^2/b_{\mathrm{mb}}}}{{T^{1/4}}}
    \Big).
    $$
    Notably, the rate above has sharper dimension dependence than
    the best known AdamW's rate \citep{li2026frac} by a factor
    of $\sqrt{d}=\sqrt{mn}$ in the Frobenius norm,
    with AdamW satisfying
    \vspace{-0.25em}
    \begin{align} \label{rate:adamW}
         \frac1T\sum\nolimits_{t=0}^{T-1}\E\|\nabla f(W_t)\|_1 = \mathcal{O}\Big( \frac{\sqrt{d}\sqrt[4]{L(f(W_0)-f_{\inf})\sigma^2/b_{\mathrm{mb}}}}{T^{1/4}} \Big),
    \end{align}
    where
    $\|A\|_1=\sum_{i,j}|A_{ij}|$ denotes
    the entrywise $\ell_1$ norm, and
    the Frobenius norm comparison follows from $\|A\|_F\le\|A\|_1$.
    Our rate also matches that of SGD in the Frobenius norm \citep{ghadimi2013stochastic}
    and that of AdamW in the entrywise $\ell_1$ norm.
    Thus we address Q2.
 \end{itemize}

 \paragraph{Outline.}
 The rest of the paper is organized as follows.
Section~\ref{sec:pre} presents the preliminaries.
Then Sections~\ref{sec:interpret}--\ref{sec:proof}
provide an optimization interpretation of practical Muon, its convergence
results, and the corresponding proofs, respectively.
Section~\ref{sec:advPNS} illustrates practical NS's superiority over standard NS.
Section~\ref{sec:simu} empirically supports Muon's theoretical advantage.
Finally, Section~\ref{sec:con} concludes the paper, with additional related work
in Appendix~\ref{app:realted} and further details in the remaining appendices.

\section{Preliminaries} \label{sec:pre}

We consider the following stochastic nonconvex matrix optimization problem:
\begin{equation}
    \min\nolimits_{W\in\mathbb R^{m\times n}}
    f(W)=\mathbb E_{\xi\sim\mathcal D}[F(W,\xi)],
    \label{eq:problem}
\end{equation}
where $\xi\in\Xi$ is a data sample drawn from the distribution
$\mathcal D$, and
$F:\mathbb R^{m\times n}\times\Xi\to\mathbb R$ is a differentiable
sample loss with stochastic gradient $\nabla_W F(W,\xi)$.

\subsection{Practical Muon algorithm}

To solve \eqref{eq:problem}, we consider Muon
(Algorithm~\ref{alg:muon}), which combines Nesterov-style momentum,
Newton--Schulz (NS) iterations, and decoupled weight decay (WD).
The NS iterations approximately orthogonalize the momentum matrices 
through matrix polynomials, avoiding an explicit SVD.
We consider two choices of the polynomial $p$ in Definition~\ref{def:ns-polynomials},
 corresponding to practical and standard NS, and refer to Muon with practical NS as practical Muon.

\begin{algorithm}[h]
\caption{Muon (with Newton--Schulz iterations and decoupled weight decay)}
\label{alg:muon}
\begin{algorithmic}[1]

\State \textbf{Input:} Iteration horizon $T\in\mathbb N$,
mini-batch size $b_{\mathrm{mb}}\in\mathbb N$,
momentum coefficient $\beta\in[0,1)$,
normalization floor $\varepsilon>0$, 
NS polynomial $p$,
stepsize $\eta>0$, 
and WD coefficient $\lambda\ge0$

\State \textbf{Initialization:}
parameter $W_0\in\mathbb R^{m\times n}$,
and momentum buffer $B_{-1}=0$

\For{$t=0,\ldots,T-1$}
    \State Independently draw $\xi_{t,1},\ldots,\xi_{t,b_{\mathrm{mb}}}\sim\mathcal D$,
    and compute
    $G_t\gets b_{\mathrm{mb}}^{-1}
    \sum_{i=1}^{b_{\mathrm{mb}}}\nabla_W F(W_t,\xi_{t,i})$

  \State $B_t\gets\beta B_{t-1}+(1-\beta)G_t$
      \hfill
      \raisebox{-0.5\baselineskip}[0pt][0pt]{%
          \algorithmiccomment{Nesterov-style momentum}%
      }
  \State $M_t\gets\beta B_t+(1-\beta)G_t$

    \State $\tau_t\gets\max\{\normF{M_t},\varepsilon\}$\footnotemark,
    \quad $X_{t,0}\gets M_t/\tau_t$

    \For{$j=0,\ldots,4$}
        \State
        $X_{t,j+1}\gets p(X_{t,j}X_{t,j}^\top)X_{t,j}$
                \Comment{Newton--Schulz iterations (Line 7--11)}
    \EndFor

    \State $D_t\gets X_{t,5}$,

    \State
    $W_{t+1}\gets(1-\eta\lambda)W_t-\eta D_t
    =W_t-\eta H_t$
    \Comment{Decoupled weight decay}
\EndFor

\end{algorithmic}
\end{algorithm}

\begin{definition}[Practical and standard Newton--Schulz polynomials]
\label{def:ns-polynomials}
The practical NS polynomial is
\[
    p_{\mathrm P}(x)=a_{\mathrm P}+b_{\mathrm P}x+c_{\mathrm P}x^2,
    \qquad
    (a_{\mathrm P},b_{\mathrm P},c_{\mathrm P})
    =(3.4445,-4.7750,2.0315).
\]
For $k\in\mathbb N$, the degree-$k$ standard NS polynomial
is the Taylor polynomial of $x^{-1/2}$ at $x=1$:
\[
    p_{\mathrm S}^{(k)}(x)
    =\sum\nolimits_{\ell=0}^{k}
      \frac{(2\ell)!}{4^\ell(\ell!)^2}(1-x)^\ell.
\]
In particular,  
$
p_{\mathrm S}^{(2)}(x)
=a_{\mathrm S}+b_{\mathrm S}x+c_{\mathrm S}x^2$ with
$(a_{\mathrm S},b_{\mathrm S},c_{\mathrm S})
=(15/8,-5/4,3/8).
$
\end{definition}

In Algorithm~\ref{alg:muon}, the NS polynomial is chosen as either
$p_{\mathrm P}$ or $p_{\mathrm S}^{(k)}\hspace{-0.3em}.$
For notational simplicity, we denote either choice by $p$,
write $a,b,c$ for its coefficients of $1,x,x^2$, respectively.
Moreover, define
$
q(s)=sp(s^2),~
q_0(s)=s,~
q_{j+1}(s)=q(q_j(s)),~j=0,\ldots,4
$, and write
 $(q_{\mathrm P},q_{\mathrm P,j})$ and
$(q_{\mathrm S}^{(k)},q_{\mathrm S,j}^{(k)})$ for $(q,q_j)$
under practical and standard NS, respectively.
The following lemma characterizes how these polynomials are used in NS iterations to approximate matrix orthogonalization.

\footnotetext{Another common normalization uses
$\normF{M_t}+\varepsilon$.  The two denominators differ by
$\min\{\normF{M_t},\varepsilon\}\le\varepsilon$.
We adopt the maximum form, consistent with the PyTorch implementation.}

\begin{lemma}[Approximate orthogonalization via NS;
\citealp{kim2026muon}, Proposition~1 and Lemma~1]
\label{lem:ns-polynomials}
 \textbf{(i) NS output.}
    In Algorithm~\ref{alg:muon}, write the compact SVD
    $M_t=U_t\operatorname{diag}(\sigma_{1,t},\ldots,\sigma_{r_t,t})V_t^\top$,
    with $r_t=\operatorname{rank}(M_t)$, and 
    $s_{i,t}=\sigma_{i,t}/\tau_t\in(0,1]$.
The five NS steps preserve the left and right singular-vector
matrices $U_t$ and $V_t$, and yield
\[
D_t=X_{t,5}
=U_t\operatorname{diag}\bigl(
q_5(s_{1,t}),\ldots,q_5(s_{r_t,t})
\bigr)V_t^\top.
\]
\textbf{(ii) Approximation error.}
The singular-value map $q_{\mathrm P}$ of one practical NS step
is nonmonotone on $[0,1]$ and does not fix $1$. Specifically,
\[
q_{\mathrm P}'(0)=3.4445>0,\qquad
q_{\mathrm P}'(1)=-0.7230<0,\qquad
q_{\mathrm P}(1)=0.7010\ne1.
\]
In contrast, for every $k\ge1$, the singular-value map
$q_{\mathrm S}^{(k)}$ of one standard NS step is nondecreasing
on $[0,1]$ and fixes $1$.
These properties, together with the Taylor construction, yield
\begin{equation}
\|X_{t,j}-U_tV_t^\top\|_{\mathrm{op}}
\le
(1-\min\nolimits_{1\le i\le r_t}s_{i,t}^{\,2})^{(k+1)^j},
\qquad j=0,\ldots,5.
\label{standard error}
\end{equation}
Thus, each standard NS step approximates exact orthogonalization with convergence order $k+1$.
\end{lemma}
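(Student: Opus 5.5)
Part (i) follows by induction on $j$, showing that every iterate has the form $X_{t,j}=U_t\operatorname{diag}(q_j(s_{1,t}),\ldots,q_j(s_{r_t,t}))V_t^\top$. The base case $j=0$ is immediate from $X_{t,0}=M_t/\tau_t$. For the induction step, suppose $X_{t,j}=U_t\Sigma_jV_t^\top$ with $\Sigma_j$ diagonal. Then $X_{t,j}X_{t,j}^\top=U_t\Sigma_j^2U_t^\top$. Since $U_t^\top U_t=I$, any polynomial satisfies $p(U_t\Sigma_j^2U_t^\top)=aI+U_t(b\Sigma_j^2+c\Sigma_j^4)U_t^\top$. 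Multiplying on the right by $U_t\Sigma_jV_t^\top$ and using $U_t^\top U_t=I$ once more gives
\[
X_{t,j+1}=U_t\,\Sigma_j\,p(\Sigma_j^2)\,V_t^\top .
\]
The $i$-th diagonal entry is $q(q_j(s_{i,t}))=q_{j+1}(s_{i,t})$, which closes the induction. For $p_{\mathrm S}^{(k)}$ the same argument applies, because $p(U\Sigma^2U^\top)=U p(\Sigma^2)U^\top + p(0)(I-UU^\top)$ and the second term is annihilated by the right factor $U_t$. Finally, $s_{i,t}\in(0,1]$ because $\sigma_{i,t}\le\|M_t\|_{\mathrm{op}}\le\|M_t\|_F\le\tau_t$.

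For the first half of (ii), the claims are direct computations from $q_{\mathrm P}(s)=as+bs^3+cs^5$:
\[
q_{\mathrm P}'(0)=a=3.4445,\qquad q_{\mathrm P}(1)=a+b+c=0.7010,\qquad q_{\mathrm P}'(1)=a+3b+5c=-0.7230 .
\]
Nonmonotonicity on $[0,1]$ follows because the derivative is positive at $0$ and negative at $1$. For the standard map, write $\phi(x)=x\,p_{\mathrm S}^{(k)}(x)^2$ with $x=s^2$, so that $q_{\mathrm S}^{(k)}(s)^2=\phi(s^2)$. The fixed point is immediate: $p_{\mathrm S}^{(k)}(1)=1$ because only the $\ell=0$ term survives. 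For monotonicity, set $u=1-x$ and note that $p_{\mathrm S}^{(k)}(x)$ is the degree-$k$ truncation of $(1-u)^{-1/2}$. The key identity I would use is
\[
1-(1-u)\,T_k(u)^2=\text{a power series in }u\text{ starting at }u^{k+1}\text{ with nonnegative coefficients},
\]
where $T_k$ is the Taylor truncation. This holds because $(1-u)T_k^2=(1-u)\bigl((1-u)^{-1/2}-R_k\bigr)^2$ with remainder $R_k\ge0$ coefficientwise.

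From this identity I would obtain $0\le 1-\phi\le 1-x$, i.e.\ $0\le 1-q_{\mathrm S}^{(k)}(s)^2$. It also yields the error contraction $1-\phi(x)\le(1-x)^{k+1}$ for $x\in[0,1]$, since the series has nonnegative coefficients, is bounded by its value at $u=1$ (which equals $1$), and starts at $u^{k+1}$. Nondecreasing $q_{\mathrm S}^{(k)}$ then follows from $\phi'(x)=p(p+2xp')$. To see $\phi'\ge0$, I would check $p>0$ and $p+2xp'\ge0$ on $[0,1]$ via the explicit derivative of the truncated series. Concretely, $\frac{d}{du}\bigl[(1-u)T_k^2\bigr]$ has only one nonzero coefficient, at $u^k$, and it is negative, which gives $\phi'(x)\ge0$. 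I expect this coefficient bookkeeping to be the main technical obstacle.

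The bound \eqref{standard error} then follows by iteration. Define $e_j=\max_i\bigl(1-q_j(s_{i,t})^2\bigr)$. The contraction gives $e_{j+1}\le e_j^{k+1}$, hence $e_j\le e_0^{(k+1)^j}$ with $e_0=1-\min_i s_{i,t}^2$. Since each $q_j(s_{i,t})\in[0,1]$, we have $1-q\le 1-q^2$. By part (i), $\|X_{t,j}-U_tV_t^\top\|_{\mathrm{op}}=\max_i|1-q_j(s_{i,t})|\le e_j$, which is the claimed estimate.
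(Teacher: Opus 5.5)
\paragraph{Verdict.} Your argument is essentially correct and self-contained, which the paper's treatment is not: the paper gives no proof of this lemma and imports it from \citet{kim2026muon}. The only related computation it carries out is the identity $q_{\mathrm S}'(s)=(2k+1)\omega_k(1-s^2)^k$ in its last appendix. Part (i), the practical-NS numerics, and the final iteration $e_{j+1}\le e_j^{k+1}$ with $1-q\le 1-q^2$ are all fine. Two steps in the standard-NS part need repair.

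\paragraph{The derivative you called a monomial is not one.} Your claim that $\frac{d}{du}\bigl[(1-u)T_k^2\bigr]$ has a single nonzero coefficient is false. Already for $k=1$ you get $(1-u)T_1^2=1-\tfrac34u^2-\tfrac14u^3$, whose derivative has two nonzero coefficients. The quantity that is a monomial is
\[
T_k(u)-2(1-u)T_k'(u)=(2k+1)\omega_k u^k,\qquad \omega_\ell=\frac{(2\ell)!}{4^\ell(\ell!)^2}.
\]
This follows from $2(\ell+1)\omega_{\ell+1}=(2\ell+1)\omega_\ell$. Since $T_k(u)-2(1-u)T_k'(u)$ is exactly $p+2xp'=q_{\mathrm S}'(s)$, it gives monotonicity directly. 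It also yields
\[
\frac{d}{du}\bigl[(1-u)T_k(u)^2\bigr]=-(2k+1)\omega_k\,u^kT_k(u),
\]
which is nonpositive, so your conclusion $\phi'\ge0$ survives.

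\paragraph{The key identity needs a different justification.} Appealing to $R_k\ge0$ coefficientwise does not give nonnegative coefficients for $1-(1-u)T_k^2$. Expanding gives $1-(1-u)T_k^2=2\sqrt{1-u}\,R_k-(1-u)R_k^2$. This shows the expansion starts at $u^{k+1}$, but both factors $\sqrt{1-u}$ and $-(1-u)$ have negative coefficients. Yet nonnegativity is exactly what you use to bound the polynomial by its value at $u=1$. Get it instead by integrating the previous display from $u=0$:
\[
1-(1-u)T_k(u)^2=(2k+1)\omega_k\int_0^u v^kT_k(v)\,dv .
\]
This is a polynomial with nonnegative coefficients, beginning at $u^{k+1}$ and equal to $1$ at $u=1$. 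With this one identity doing the work in both places, the rest of your proof goes through as written.
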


\paragraph{Challenges in analyzing practical Muon.}
The role of NS in Muon is to approximate the orthogonalized
momentum matrix $U_tV_t^\top$ without explicitly computing an SVD.
This amounts to preserving the singular vectors while transforming each positive
singular value toward $1$. The normalization in
Algorithm~\ref{alg:muon} places the initial singular values in $(0,1]$.
For standard NS, monotonicity and the fixed point at $1$ yield the
superlinear error bound in \eqref{standard error}. Practical NS
lacks both properties, 
making its approximation
error hard to control and Muon's convergence difficult
to analyze.

\subsection{Assumptions}

Our analysis is based on the following assumptions, which are standard in stochastic nonconvex optimization \citep[e.g.,][]{ghadimi2013stochastic,li2026frac,shen2026on,choudhury2026muon}.

\begin{assumption}[Low boundedness]
\label{ass:lower} 
The objective satisfies
$
 f_{\inf}=\inf_{W\in\mathbb R^{m\times n}}f(W)>-\infty.
$
\end{assumption}

\begin{assumption}[Smoothness]
\label{ass:smooth}
There exists a constant $L>0$ such that, for all $X,Y\in\mathbb R^{m\times n}$,
\[
    \|\nabla f(X)-\nabla f(Y)\|_F
    \le L\|X-Y\|_F.
\]
\end{assumption}

\begin{assumption}[Stochastic gradient oracle]
    \label{ass:noise}
    \hspace{-0.3em}
There exists a constant $\sigma\hspace{-0.1em}\ge\hspace{-0.1em}0$ such that, for all $W\hspace{-0.35em}\in\hspace{-0.2em}\mathbb R^{\hspace{-0.1em}m\hspace{-0.15em}\times \hspace{-0.1em}n}$\hspace{-0.25em},
\[
\mathbb E_{\xi}[\nabla F_W(W,\xi)]=\nabla f(W),
\qquad
\mathbb E_{\xi}\!\left[
\|\nabla F_W(W,\xi)-\nabla f(W)\|_F^2
\right]\le \sigma^2.
\]
For any mini-batch of size $b_{\mathrm{mb}}$,
the averaged gradient has variance at most $\sigma^2/b_{\mathrm{mb}}$.
\end{assumption}

\section{Optimization interpretation} \label{sec:interpret}

\paragraph{Main viewpoint.}
We show that Muon with NS and WD, using either practical or standard NS,
admits a unified interpretation as right-preconditioned optimization
on a dynamic objective consisting of $f$ and a weighted $\ell_2$ regularizer, with exact equivalence
when the momentum is omitted.

\paragraph{Dynamic objective.}
We seek an objective for which the Muon update provides a descent
direction. A natural idea is to examine the fixed point of Algorithm~\ref{alg:muon} (with $m=n=1$ for illustration
, allowing us to write $D_t=q_5(M_t/\tau_t)$ without an SVD representation), where $M_t=B_t=G_t$ and
\[
D_t+\lambda W_t=q_5(G_t/\tau_t)+\lambda W_t=0.
\]
Inverting $\hspace{-0.1em}q_5\hspace{-0.1em}$ would allow us to isolate the gradient
and identify a corresponding objective.
For practical NS, however, both $\hspace{-0.05em}q_{\mathrm P}\hspace{-0.05em}$ and
$\hspace{-0.05em}q_{\mathrm P,5}\hspace{-0.05em}$ are nonmonotone, so the latter admits no global
inverse, obstructing this direct approach.
Motivated by the preconditioning interpretation of Muon with exact orthogonalization
\citep{ma2026preconditioning}, beyond fixed points,
we seek a right preconditioner $P_t$ such that
\[D_t=M_tP_t,~~\text{i.e.,}~~
U_t\operatorname{diag}\bigl(
q_5(\sigma_{1,t}/\tau_t),\ldots,q_5(\sigma_{r_t,t}/\tau_t)
\bigr)V_t^\top \hspace{-0.2em}=\hspace{-0.1em} U_t\operatorname{diag}\bigl(
\sigma_{1,t},\ldots,\sigma_{r_t,t})V_t^\top P_t.
\]
This determines $ P_t $ on the span of $ V_t\hspace{-0.05em}$,
 while the WD term $\lambda W_t $ suggests a $\hspace{-0.05em}P_t^{\hspace{-0.025em}-\hspace{-0.1em}1}\hspace{-0.1em}$ weighted
$ \ell_2$\hspace{-0.05em} regularizer, which requires $P_t$ to be invertible.
Thus, we extend $P_t$ positively to the nullspace of $M_t$
and define
\begin{equation}
\begin{aligned}
    P_t
    &=V_t\operatorname{diag}\Big(
      \frac{\chi_5(s_{1,t})}{\tau_t},\ldots,
      \frac{\chi_5(s_{r_t,t})}{\tau_t}
      \Big)V_t^\top\ +\frac{\chi_5(0)}{\tau_t}
      (I_n-V_tV_t^\top),
\end{aligned}
\label{eq:right-preconditioner}
\end{equation}
where $s_{i,t}\in(0,1]$, and $\chi_5(s)=q_5(s)/s>0$ for $s\in(0,1]$,
with the continuous extension $\chi_5(0)=a^5$.
The resulting dynamic objective is
\begin{equation}
\Phi_t(W)=f(W)+\mathcal R_t(W),
\qquad
\mathcal R_t(W)
=\frac{\lambda}{2}\operatorname{tr}(WP_t^{-1}W^\top).
\label{eq:dynamic-objective}
\end{equation}
Here, $P_t$ is treated as a fixed
matrix when differentiating $\Phi_t$ in $W$.
The following proposition establishes the exact equivalence between the Muon update
and right-preconditioned optimization of the dynamic objective
$\Phi_t$.

\begin{proposition} \label{prop:dynamic-regularization}
\textbf{(i) Properties of the right preconditioner and regularizer.}
For both practical and standard NS, the preconditioner $P_t$
in \eqref{eq:right-preconditioner}  
satisfies
\begin{align*}
    P_t\succ0,\qquad D_t=M_tP_t,\qquad \langle M_t, D_t \rangle_F = \|M_tP_t^{1/2}\|_F^2\ge0.
\end{align*}
Thus, $D_t$ is positively aligned with the momentum direction $M_t$
whenever $M_t\ne0$.
Moreover, for $\lambda>0$, the regularizer $\mathcal R_t$
in \eqref{eq:dynamic-objective} is
$\lambda\tau_t/C_1$-strongly convex, where
$C_1=\max_{s\in[0,1]}\chi_5(s)>0$.

\textbf{(ii) Exact right-preconditioned equivalence.}
Ignore the momentum by setting $\beta=0$, Muon update in Algorithm~\ref{alg:muon} is exactly one
right-preconditioned stochastic gradient descent step on the
dynamic objective \eqref{eq:dynamic-objective}, i.e., 
\[
W_{t+1}=W_t-\eta\widehat{\nabla}\Phi_t(W_t)P_t,
\qquad
\widehat{\nabla}\Phi_t(W_t)
=G_t+\lambda W_tP_t^{-1}.
\]
If Assumption~\ref{ass:noise} futher holds, we have
$
\mathbb E[
\widehat{\nabla}\Phi_t(W_t)-\nabla\Phi_t(W_t)
]=0.
$
\end{proposition}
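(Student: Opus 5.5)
We start from the spectral form of $D_t$ in Lemma~\ref{lem:ns-polynomials}(i) and reduce everything to scalar facts about $\chi_5(s)=q_5(s)/s$ on $[0,1]$.

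\textbf{Scalar facts.} Since $q(s)=sp(s^2)$, we have $q(s)/s=p(s^2)$. Iterating gives
\[
\chi_5(s)=\prod\nolimits_{j=0}^{4}p\bigl(q_j(s)^2\bigr),
\]
which is a polynomial in $s$, so it is continuous on $[0,1]$ with $\chi_5(0)=p(0)^5=a^5>0$.

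Positivity on $(0,1]$ needs two ingredients: $q$ maps $[0,1]$ into a bounded interval $[0,\bar s]$ on which $p(x^2)>0$, and this interval is invariant under $q$.
\begin{itemize}
\item For standard NS, monotonicity together with $q(0)=0$ and $q(1)=1$ gives $q([0,1])\subseteq[0,1]$. Moreover $p^{(k)}_{\mathrm S}(x)>0$ on $[0,1]$, because it is a sum of nonnegative terms starting with $1$.
\item For practical NS, we verify directly that the discriminant $b_{\mathrm P}^2-4a_{\mathrm P}c_{\mathrm P}=22.80-27.99<0$. Hence $p_{\mathrm P}>0$ on all of $\mathbb R$, so $q_{\mathrm P}(s)>0$ for every $s>0$ and $\chi_5>0$ follows at once from the product formula.
\end{itemize}
This discriminant computation is the only place where the specific practical coefficients enter, and it is the step I expect to need care: it replaces the monotonicity argument that fails for $q_{\mathrm P}$. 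We then set $C_1=\max_{[0,1]}\chi_5$, which is finite and positive by continuity and compactness.

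\textbf{Part (i).} $P_t$ is symmetric. Its eigenvalues are $\chi_5(s_{i,t})/\tau_t$ on the span of $V_t$ and $\chi_5(0)/\tau_t$ on the orthogonal complement, all positive, so $P_t\succ0$.

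Using $V_t^\top V_t=I$ and $V_t^\top(I-V_tV_t^\top)=0$,
\[
M_tP_t=U_t\operatorname{diag}\bigl(\sigma_{i,t}\chi_5(s_{i,t})/\tau_t\bigr)V_t^\top=U_t\operatorname{diag}\bigl(q_5(s_{i,t})\bigr)V_t^\top=D_t .
\]
Then
\[
\langle M_t,D_t\rangle_F=\operatorname{tr}(M_tP_tM_t^\top)=\|M_tP_t^{1/2}\|_F^2\ge0 .
\]

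For strong convexity, $\mathcal R_t$ is a quadratic with Hessian $\lambda(P_t^{-1}\otimes I_m)$, i.e.
\[
\nabla^2\mathcal R_t[H,H]=\lambda\operatorname{tr}(HP_t^{-1}H^\top)\ge\lambda\,\lambda_{\min}(P_t^{-1})\|H\|_F^2 .
\]
Since every eigenvalue of $P_t$ is at most $C_1/\tau_t$, we get $\lambda_{\min}(P_t^{-1})\ge\tau_t/C_1$, which gives the modulus $\lambda\tau_t/C_1$.

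\textbf{Part (ii).} With $\beta=0$ we have $B_t=G_t$ and $M_t=\beta B_t+(1-\beta)G_t=G_t$, so $D_t=G_tP_t$. Treating $P_t$ as fixed and using that $P_t^{-1}$ is symmetric, $\nabla\mathcal R_t(W)=\lambda WP_t^{-1}$. Therefore
\[
\eta\bigl(G_t+\lambda W_tP_t^{-1}\bigr)P_t=\eta D_t+\eta\lambda W_t ,
\]
which matches $W_{t+1}=(1-\eta\lambda)W_t-\eta D_t$.

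For unbiasedness, $\widehat\nabla\Phi_t(W_t)-\nabla\Phi_t(W_t)=G_t-\nabla f(W_t)$, because the regularizer terms cancel when both are evaluated with the same fixed $P_t$. Taking expectation conditional on $W_t$ and using Assumption~\ref{ass:noise} gives zero. The tower property then yields the unconditional statement.

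The subtlety here is that $P_t$ depends on $G_t$. The cancellation nevertheless holds pathwise, since both gradients use the same $P_t$, so only $\E[G_t\mid W_t]=\nabla f(W_t)$ is needed.
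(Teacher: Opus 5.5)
Your proof is correct and follows essentially the same route as the paper: positivity of $\chi_5=\prod_j p(q_j(s)^2)$ via the negative discriminant of $p_{\mathrm P}$ and, for standard NS, invariance of $[0,1]$ with $p_{\mathrm S}^{(k)}\ge 1$ there; eigenvalue bounds on $P_t$ for $P_t\succ0$ and the $\lambda\tau_t/C_1$ modulus; $V_t^\top(I_n-V_tV_t^\top)=0$ for $D_t=M_tP_t$; and direct algebra at $\beta=0$. Your observation that the regularizer terms cancel pathwise, so only $\mathbb E[G_t\mid W_t]=\nabla f(W_t)$ is needed even though $P_t$ depends on $G_t$, is a slightly more careful version of the paper's ``Assumption~\ref{ass:noise} directly implies'' step.
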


\begin{proof}
    Recall that $q_0(s)=s$ and
$q_{j+1}(s)=q(q_j(s))=q_j(s)p(q_j(s)^2)$ for $j=0,\ldots,4$.
For practical NS, $p_{\mathrm P}(x)>0$ for all $x\in\mathbb R$, since $ c_{\mathrm P}\hspace{-0.1em}>\hspace{-0.1em}0 $ and
$ b_{\mathrm P}^2\hspace{-0.1em}-\hspace{-0.1em}4a_{\mathrm P}c_{\mathrm P}\hspace{-0.1em}<\hspace{-0.1em}0\hspace{-0.05em}$.
For standard NS, whenever $s\in[0,1]$, Lemma~\ref{lem:ns-polynomials} gives
$q_{\mathrm S}^{(k)}(s)\in[0,1]$, 
$p_{\mathrm S}^{(k)}(s)
    =1+\sum_{\ell=1}^{k}
    \frac{(2\ell)!}{4^\ell(\ell!)^2}(1-s)^\ell>0$ , and hence $p(q_{\mathrm S}^{(k)}(s))\hspace{-0.1em}>\hspace{-0.1em}0$.
As a result, for either NS choice, applying the recurrence
over all five NS steps gives
\[
    \chi_5(s)
    =\prod\nolimits_{j=0}^{4}p(q_j(s)^2)>0,
    \qquad s\in(0,1].
\]
The eigenvalues of $P_t$ are either 
$\chi_5(s_{i,t})/\tau_t$ with $s_{i,t}\in(0,1]$, or $\chi_5(0)/\tau_t$, giving
\begin{equation}
  \frac{C_2}{\tau_t}I_n\preceq P_t
  \preceq\frac{C_1}{\tau_t}I_n,   \qquad C_2=\min_{s\in[0,1]}\chi_5(s)>0.     \label{eq:P-bounds}
\end{equation}   
Thus $P_t\succ0$, and the strong convexity of $\mathcal R_t$
follows from $P_t^{-1}\succeq(\tau_t/C_1)I_n$.
By Lemma~\ref{lem:ns-polynomials} and $V_t^\top(I_n-V_tV_t^\top)=0$,  
 we have
\[
    D_t
    =U_t\operatorname{diag}\!\left(
      q_5(s_{1,t}),\ldots,q_5(s_{r_t,t})
      \right)V_t^\top
    =M_tP_t,
\]
which implies
$\langle M_t,D_t\rangle_F=\|M_tP_t^{1/2}\|_F^2\ge0$.
When $\beta=0$, we have $M_t=G_t$, and
\[
    \widehat{\nabla}\Phi_t(W_t)P_t
    =G_tP_t+\lambda W_t
    =D_t+\lambda W_t=H_t,
\]
proving the update identities.
Holding $P_t$ fixed when differentiating gives
$\nabla\mathcal R_t(W)=\lambda WP_t^{-1}$, and
Assumption~\ref{ass:noise} directly implies
$
\mathbb E~[
\widehat{\nabla}\Phi_t(W_t)-\nabla\Phi_t(W_t)]=0.
$
\end{proof}

\paragraph{Recovery of the original problem.}
Consider the deterministic setting where $G_t=\nabla f(W_t)$.
By the definition of $\mathcal R_t$ in \eqref{eq:dynamic-objective}
and the bounds \eqref{eq:P-bounds}, we have
\begin{equation}
    0\le\mathcal R_t(W)
    \le\frac{\lambda\tau_t}{2C_2}\nf{W}^2,
    \qquad
    \nf{\nabla\mathcal R_t(W)}
    \le\frac{\lambda\tau_t}{C_2}\nf{W}. 
    \label{eq:regularizer-vanishes}
\end{equation}
\looseness=-1
If $\nf{\nabla f(W_t)}\to0$, then $\nf{M_t}\to0$ and hence
$\tau_t=\max\{\nf{M_t},\varepsilon\}\to\varepsilon$,
where $\varepsilon$ is chosen to be very small in practice,
e.g., $10^{-7}$ in the official PyTorch implementation.
 Together with the boundedness of $\{W_t\}$, which WD naturally
promotes,
  inequalities \eqref{eq:regularizer-vanishes} show that  
both $\mathcal R_t(W_t)$ and $\nf{\nabla\mathcal R_t(W_t)}$ are
$\mathcal O(10^{-7})$, and the dynamic objective reduces to the original
objective up to numerical precision.
Independently of this interpretation, our main theorem in the next
section allows for stochastic setting and directly
establishes stationarity for the original stochastic nonconvex problem.


\section{Main results} \label{sec:main:res:rate}

For Muon with practical or standard NS and WD, 
we establish the following convergence guarantees in the stochastic nonconvex setting.

\begin{theorem}[NS with WD]\label{thm:main}
Under Assumptions~\ref{ass:lower}--\ref{ass:noise},
consider the iterates of Algorithm~\ref{alg:muon} with
\begin{equation}
 \eta=\frac{c_1}{T^{3/4}},
 ~1-\beta=\frac{c_2}{T^{1/2}},~
  \varepsilon \in \left(0,\frac{c_3}{T^{1/4}}\right],~
 \lambda=
 \frac{ C_2}
 {2\bigl(\nf{W_0}+ C_3c_1T^{1/4}\bigr)},~
 T\ge\max\{1,c_2^2\},               \label{eq:parameters}
\end{equation}
where $c_1,c_2,c_3$ are $\mathcal O(1)$ constants defined in
\eqref{eq:balanced-constants}, and $C_3$ is an $\mathcal O(1)$
constant defined in \eqref{eq:kappa}.
Then 
\begin{align}
 \frac1T\sum\nolimits_{t=0}^{T-1}\E\nf{\nabla f(W_t)}
 =
    \mathcal{O}\Big(\frac{\sqrt[4]{L(f(W_0)-f_{\inf})\sigma^2/b_{\mathrm{mb}}}}{{T^{1/4}}}
    \Big).
                  \label{eq:rate}
\end{align}
\end{theorem}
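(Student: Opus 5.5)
We use the standard normalized-momentum descent template: one-step descent via $L$-smoothness, a momentum tracking-error bound, and telescoping. The only Muon-specific ingredient is the spectral bounds \eqref{eq:P-bounds} on $P_t$ from Proposition~\ref{prop:dynamic-regularization}. Write $E_t=M_t-\nabla f(W_t)$. By smoothness,
\[
f(W_{t+1})\le f(W_t)-\eta\langle\nabla f(W_t),D_t\rangle_F-\eta\lambda\langle\nabla f(W_t),W_t\rangle_F+\tfrac{L\eta^2}{2}\nf{H_t}^2 .
\]

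\textbf{Step 1 (descent term).} Use $D_t=M_tP_t$ to write $\langle\nabla f(W_t),D_t\rangle_F=\langle M_t,M_tP_t\rangle_F-\langle E_t,M_tP_t\rangle_F$. The lower bound in \eqref{eq:P-bounds} gives $\langle M_t,M_tP_t\rangle_F\ge C_2\nf{M_t}^2/\tau_t$. When $\nf{M_t}\ge\varepsilon$ this is $C_2\nf{M_t}$. The upper bound gives $\nf{D_t}\le C_1\nf{M_t}/\tau_t\le C_1$, so the cross term is at most $C_1\nf{E_t}$. Combining these with $\nf{M_t}\ge\nf{\nabla f(W_t)}-\nf{E_t}$ and the case $\nf{M_t}<\varepsilon$ yields
\[
\langle\nabla f(W_t),D_t\rangle_F\ge C_2\nf{\nabla f(W_t)}-(C_1+C_2)\nf{E_t}-C_2\varepsilon .
\]
Since $\nf{D_t}\le C_1$, we also have $\nf{H_t}\le C_1+\lambda\nf{W_t}$. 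This step needs no monotonicity of $q_{\mathrm P}$, which is why practical and standard NS can be treated uniformly.

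\textbf{Step 2 (boundedness from WD, the main obstacle).} The WD term $\eta\lambda|\langle\nabla f(W_t),W_t\rangle_F|$ must be controlled. Unrolling $W_{t+1}=(1-\eta\lambda)W_t-\eta D_t$ with $\nf{D_t}\le C_1$ gives, deterministically, $\nf{W_t}\le\nf{W_0}+C_1\eta t\le\nf{W_0}+C_1c_1T^{1/4}$. This is exactly why $\lambda$ is tied to $\nf{W_0}+C_3c_1T^{1/4}$ in \eqref{eq:parameters}, presumably with $C_3\ge C_1$ as in \eqref{eq:kappa}. Then $\lambda\nf{W_t}\le C_2/2$, so
\[
\eta\lambda|\langle\nabla f(W_t),W_t\rangle_F|\le\tfrac{\eta C_2}{2}\nf{\nabla f(W_t)} .
\]
This absorbs into half the descent term, and the quadratic term becomes $\tfrac{L\eta^2}{2}(C_1+C_2/2)^2$. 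The delicate part is showing that this absorption does not degrade the rate, and that the regularizer gradient stays bounded by an $\mathcal O(1)$ constant.

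\textbf{Step 3 (momentum error and telescoping).} For the Nesterov momentum, $E_t$ satisfies a recursion of the form $E_t=\beta(\text{previous error})+(1-\beta)(\text{noise})+\beta(\nabla f(W_{t-1})-\nabla f(W_t))$. Using $\nf{W_t-W_{t-1}}\le\eta(C_1+C_2/2)$, independence of the mini-batch noise, and a standard lemma (cf.\ Lemma~\ref{lem:tracking}), we get
\[
\frac1T\sum_t\E\nf{E_t}\lesssim\frac{\nf{\nabla f(W_0)}}{T(1-\beta)}+\sqrt{1-\beta}\,\frac{\sigma}{\sqrt{b_{\mathrm{mb}}}}+\frac{L\eta}{1-\beta}.
\]
Summing the descent inequality, telescoping $f(W_0)-f_{\inf}$, and dividing by $\eta C_2T/2$ gives
\[
\frac1T\sum\E\nf{\nabla f(W_t)}\lesssim\frac{f(W_0)-f_{\inf}}{\eta T}+L\eta+\frac1T\sum\E\nf{E_t}+\varepsilon .
\]
Substituting $\eta=c_1T^{-3/4}$, $1-\beta=c_2T^{-1/2}$ and $\varepsilon\le c_3T^{-1/4}$ makes every term $\mathcal O(T^{-1/4})$; the condition $T\ge c_2^2$ ensures $\beta\in[0,1)$. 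Finally, choosing $c_1,c_2,c_3$ as in \eqref{eq:balanced-constants} balances the terms, giving the constant $\sqrt[4]{L(f(W_0)-f_{\inf})\sigma^2/b_{\mathrm{mb}}}$ in \eqref{eq:rate}.
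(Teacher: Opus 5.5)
Your overall template is the same as the paper's: smoothness, splitting $\langle\nabla f(W_t),H_t\rangle_F$ into alignment, momentum-error and WD terms, a Nesterov tracking bound, and telescoping. However, Step~2 does not go through for the $\lambda$ in \eqref{eq:parameters}. By \eqref{eq:kappa}, $C_3=\min\{C_1,q_{\max}\sqrt r\}\le C_1$, not $\ge C_1$. Using only $\nf{D_t}\le C_1$, you get $\nf{W_t}\le\nf{W_0}+C_1c_1T^{1/4}$, and hence only
\[
\lambda\nf{W_t}\le\frac{C_2}{2}\cdot\frac{\nf{W_0}+C_1c_1T^{1/4}}{\nf{W_0}+C_3c_1T^{1/4}},
\]
which tends to $C_2C_1/(2C_3)$ as $T$ grows. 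For practical NS with, e.g., $r=768$, you have $C_3=q_{\max}\sqrt r\approx33.3$ and $C_1/C_3\approx14.6$. This bound then exceeds $C_2$, so the WD cross term can cancel the entire descent term, and your absorption into $\tfrac{C_2}{2}\nf{\nabla f(W_t)}$ fails. As written, your argument proves the result only for the smaller coefficient $\lambda=C_2/(2(\nf{W_0}+C_1c_1T^{1/4}))$, not for the stated one. This also shows that your claim that the spectral bounds \eqref{eq:P-bounds} on $P_t$ are the only Muon-specific ingredient is not enough.

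The missing ingredient is the rank-based bound $\nf{D_t}\le q_{\max}\sqrt r$, which the bounds on $P_t$ cannot give. The direction $D_t$ has at most $r$ nonzero singular values $q_5(s_{i,t})$ with $s_{i,t}\in(0,1]$. Since $0\le q_5\le q_{\max}$ on $[0,1]$, you get $\nf{D_t}\le\min\{C_1,q_{\max}\sqrt r\}=C_3$. Here $q_{\max}$ is finite because $q_5$ is continuous; the paper computes $q_{\max}=1.2023\ldots$ for practical NS via the invariance $q_{\mathrm P}([0,\bar q_{\mathrm P}])\subset[0,\bar q_{\mathrm P}]$, and $q_{\max}=1$ for standard NS. Next, check that $\eta\lambda\le C_2/(2C_3T)<1$, so that $\nf{W_{t+1}}\le\nf{W_t}+\eta C_3$. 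Then $\lambda\nf{W_t}\le C_2/2$ holds exactly as calibrated. The rest of your argument goes through with $C_3$ replacing $C_1$ in the cross term and $C_4=C_2/2+C_3$ in the quadratic term, matching the paper. A minor point: the one-step error recursion you wrote holds for $B_t$, not for $M_t$; the paper instead unrolls $M_t$ directly. Since you defer to Lemma~\ref{lem:tracking}, this is harmless.
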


\paragraph{Discussion: practical NS with WD.}
\begin{itemize}[leftmargin=*, labelsep=0.5em]
    \item \textbf{\textit{Convergence guarantee.}}
    To the best of our knowledge, Theorem~\ref{thm:main} provides
    the first convergence guarantee for practical Muon in the
    stochastic nonconvex setting, based on a direct analysis of the
    practical NS iteration and its interaction with WD.
    While \citet{qian2026convergence} also cover practical NS with WD,
    their guarantees rely on star-convexity and parameter choices that
    depend on the unknown norm $\|W^*\|$ of a global minimizer $W^*$.
Other analyses either use exact orthogonalization when accounting for WD \citep{pethick2025training,chen2026muon}, 
or omit WD while studying practical NS \citep{do2026muon,qian2026convergence}, standard NS \citep{kim2026muon,choudhury2026muon}, or exact orthogonalization \citep{shen2026on}.

    \item \textbf{\textit{Comparison with AdamW and SGD.}}
    The theoretical convergence rate in \eqref{eq:rate} improves dimension dependence by
    a factor of $\sqrt d$ over the Frobenius norm
    guarantee implied by the best-known AdamW result given by 
    \eqref{rate:adamW} \citep{li2026frac} in the stochastic nonconvex setting.
    Specifically, AdamW's entrywise $\ell_1$ guarantee contains
    an additional $\sqrt d$ factor that persists after conversion
    to the Frobenius norm.
    Moreover, our rate matches that of SGD in the Frobenius norm \citep{ghadimi2013stochastic},
    and that of AdamW in the entrywise $\ell_1$ norm.

    \item \textbf{\textit{Parameter choices.}}
    The choice $\lambda\hspace{-0.1em}=\hspace{-0.1em}\mathcal{O}(T^{-1/4})$ is reasonable for practical
    training horizons. For example, Moonlight pretraining with Muon takes approximately
    $1.76\times10^5$ updates \citep{liu2025muon},
    giving $T^{-1/4}\approx0.05$.
    This mild dependence on $T$ permits a non-negligible WD
    coefficient that retains its regularization role.
    The theoretical condition on $\varepsilon$ is also mild,
    since commonly used values, such as the PyTorch default $10^{-7}$,
    are far below the corresponding scale $T^{-1/4}$.
    Moreover, the condition on $T$ imposes only an $\mathcal O(1)$
    lower threshold.
    Small stepsize $\eta$ and momentum coefficient
    $\beta$ close to one are common in stochastic nonconvex
    convergence analyses \citep{ghadimi2013stochastic,li2026frac}.
    Finally, the problem dependent choices of $c_1,c_2,c_3$ only
    improve the multiplicative factor in the convergence rate, and any positive values retain
    the $\mathcal O(T^{-1/4})$ rate.
\end{itemize}

\paragraph{Discussion: standard NS with WD.}

The comparisons with AdamW and SGD and the discussion of
parameter choices above also apply to standard NS with WD.

\begin{itemize}[leftmargin=*, labelsep=0.5em]

    \item \textbf{\textit{Convergence guarantee.}}
    To the best of our knowledge, Theorem~\ref{thm:main} provides
    the first convergence guarantee for Muon with standard NS and WD
    in the stochastic nonconvex setting.
    In contrast, the existing analysis requires both star-convexity and
    parameter choices that depend on the norm $\|W^*\|$
    of an unknown global minimizer \citep{qian2026convergence}, while
    the proof of Proposition~3.3 in \citet{sato2025convergence}
    contains an error identified by \citet{apte2026scale}.
    The counterexample in Apte's Proposition~3.2
    also invalidates Sato et al.'s Proposition~3.3
    when standard NS is used.

\item \textbf{\textit{Convergence rate.}}
    By analyzing practical and standard NS in a unified framework,
    we obtain the rate in \hspace{-0.035em}\eqref{eq:rate}\hspace{-0.05em} independent of
    the orthogonalization error of standard NS,
    which can lead to a very large multiplicative factor
    in existing convergence rates \citep{kim2026muon,choudhury2026muon}.
    For example, \citet{kim2026muon} obtain the factor
$(1\hspace{-0.1em}-\hspace{-0.1em}\epsilon_5)^{-1}$ for standard NS without WD, where
    \[
        \epsilon_{t,5}
        =\|X_{t,5}-U_tV_t^\top\|_{\mathrm{op}}=1-q_5\big(\min\nolimits_{1\le i\le r_t} s_{i,t}\big),
        \qquad
        \epsilon_5=\max\nolimits_{0\le t<T}\epsilon_{t,5}.
    \]
    Despite the superlinear decay in \eqref{standard error},
    this factor can be large for small normalized singular values.
    For a quantitative illustration,
    \citet[Figure~1(c)]{wu2026achieving}
    show a Muon momentum spectrum in a LLaMA attention layer
    spanning roughly eight or more orders of magnitude.
    For such an NS input, five degree-$2$ standard NS steps
    would give $(1-\epsilon_5)^{-1}>10^6$.

\end{itemize}

The following corollary gives the corresponding convergence guarantee
without WD.

\begin{corollary}[NS without WD]\label{cor:no-wd}
Under Assumptions~\ref{ass:lower}--\ref{ass:noise},
consider the iterates of Algorithm~\ref{alg:muon} with
\[
 \eta=\frac{c_1}{T^{3/4}},\quad
 1-\beta=\frac{c_2}{T^{1/2}},\quad
 \varepsilon\in\left(0,\frac{c_3}{T^{1/4}}\right],\quad
 \lambda=0,\quad
 T\ge\max\{1,c_2^2\},
\]
where $c_1,c_2,c_3$ are defined in
\eqref{eq:balanced-constants} with $(C_2/2+C_3)$ replaced by $C_3$.
Then
\begin{equation}
 \frac1T\sum\nolimits_{t=0}^{T-1}\E\nf{\nabla f(W_t)}
=  \mathcal{O}\Big(\frac{\sqrt[4]{L(f(W_0)-f_{\inf})\sigma^2/b_{\mathrm{mb}}}}{{T^{1/4}}}
    \Big).
 \label{eq:no-wd-rate}
\end{equation}
\end{corollary}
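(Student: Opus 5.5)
The corollary is the special case $\lambda=0$ of the argument behind Theorem~\ref{thm:main}. I will rerun that argument with $\lambda=0$, so that $H_t=D_t$, and track how the constants change. I will not try to take a limit in the theorem, since its $\lambda$ is pinned by \eqref{eq:parameters}.

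\textbf{Descent inequality.} By Assumption~\ref{ass:smooth},
\[
f(W_{t+1})\le f(W_t)-\eta\langle\nabla f(W_t),D_t\rangle_F+\tfrac{L\eta^2}{2}\nf{D_t}^2 .
\]
Proposition~\ref{prop:dynamic-regularization}(i) gives $D_t=M_tP_t$ with $C_2\tau_t^{-1}I_n\preceq P_t\preceq C_1\tau_t^{-1}I_n$. Hence $\nf{D_t}\le C_1\nf{M_t}/\tau_t\le C_1$, because $\tau_t\ge\nf{M_t}$. Write $E_t=M_t-\nabla f(W_t)$ and split
\[
\langle\nabla f(W_t),D_t\rangle_F=\langle M_t,M_tP_t\rangle_F-\langle E_t,D_t\rangle_F .
\]
The first term is at least $C_2\nf{M_t}^2/\tau_t$. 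When $\nf{M_t}\ge\varepsilon$ this equals $C_2\nf{M_t}$. When $\nf{M_t}<\varepsilon$, I use $C_2\nf{M_t}^2/\varepsilon\ge C_2\nf{M_t}-C_2\varepsilon/4$. The error term is at most $C_1\nf{E_t}$. Using $\nf{M_t}\ge\nf{\nabla f(W_t)}-\nf{E_t}$, this gives
\[
\eta C_2\nf{\nabla f(W_t)}\le f(W_t)-f(W_{t+1})+\eta(C_1+C_2)\nf{E_t}+\eta C_2\varepsilon/4+\tfrac{L\eta^2C_1^2}{2}.
\]
Presumably $C_3$ in \eqref{eq:kappa} collects constants of the form $C_1+C_2$. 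With $\lambda=0$ the weight-decay term $-\eta\lambda\langle\nabla f,W_t\rangle$ is absent, which is why $(C_2/2+C_3)$ collapses to $C_3$ in \eqref{eq:balanced-constants}.

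\textbf{Momentum error.} This is the main technical step, and it is the same as in the theorem. Write $M_t=\beta B_t+(1-\beta)G_t$ and bound $\E\nf{B_t-\nabla f(W_t)}$ by the standard recursion
\[
B_t-\nabla f(W_t)=\beta\bigl(B_{t-1}-\nabla f(W_{t-1})\bigr)+\beta\bigl(\nabla f(W_{t-1})-\nabla f(W_t)\bigr)+(1-\beta)\bigl(G_t-\nabla f(W_t)\bigr).
\]
The gradient-drift term is bounded by $L\nf{W_t-W_{t-1}}=L\eta\nf{D_{t-1}}\le L\eta C_1$. Here the absence of WD makes things easier, because there is no $\lambda\nf{W_t}$ contribution to the step and hence no need to bound $\nf{W_t}$. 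The noise terms form a martingale sum, which I bound in $L^2$. Unrolling gives
\[
\E\nf{E_t}\lesssim \beta^{t+1}\nf{\nabla f(W_0)}+\frac{L\eta C_1}{1-\beta}+\sqrt{1-\beta}\,\frac{\sigma}{\sqrt{b_{\mathrm{mb}}}}+(1-\beta)\frac{\sigma}{\sqrt{b_{\mathrm{mb}}}} .
\]
The transient term sums to at most $\nf{\nabla f(W_0)}/(1-\beta)$, which contributes $O(T^{-1/2})$ after averaging and is lower order.

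\textbf{Summation and rates.} Sum over $t$, telescope using Assumption~\ref{ass:lower}, and divide by $\eta C_2T$:
\[
\frac1T\sum_{t=0}^{T-1}\E\nf{\nabla f(W_t)}\lesssim \frac{f(W_0)-f_{\inf}}{\eta T}+\frac{L\eta}{1-\beta}+\sqrt{1-\beta}\,\frac{\sigma}{\sqrt{b_{\mathrm{mb}}}}+\varepsilon+L\eta .
\]
Substituting $\eta=c_1T^{-3/4}$, $1-\beta=c_2T^{-1/2}$ and $\varepsilon\le c_3T^{-1/4}$ makes every term $O(T^{-1/4})$. The condition $T\ge c_2^2$ ensures $\beta\in[0,1)$. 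Choosing $c_1,c_2,c_3$ as in \eqref{eq:balanced-constants} (with $C_3$ in place of $C_2/2+C_3$) balances the terms and yields the stated factor $\sqrt[4]{L(f(W_0)-f_{\inf})\sigma^2/b_{\mathrm{mb}}}$, which gives \eqref{eq:no-wd-rate}.

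The only delicate point is verifying that the theorem's proof uses $\lambda>0$ solely through the WD terms, so that the $\lambda=0$ case carries over with this constant replacement.
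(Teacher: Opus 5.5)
Your proposal is correct and follows the paper's route: the paper obtains the corollary by rerunning the proof of Theorem~\ref{thm:main} with $\lambda=0$ (so $H_t=D_t$), using the same NS alignment bound, the same momentum-tracking decomposition with a martingale $L^2$ bound on the noise, and the same parameter substitution. One small correction: $C_3=\min\{C_1,q_{\max}\sqrt r\}$ is not a sum like $C_1+C_2$ but the bound on $\nf{D_t}$ from Lemma~\ref{lem:ns}, while $C_2/2+C_3=C_4$ bounds $\nf{H_t}$, with the $C_2/2$ coming from $\lambda\nf{W_t}\le C_2/2$; with $\lambda=0$ it therefore collapses to $C_3$, and your use of $C_1\ge C_3$ in its place changes only constant factors, not the rate.
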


\paragraph{Discussion: practical and standard NS without WD.}

Corollary~\ref{cor:no-wd} yields a convergence rate similar to that with WD.
The Muon rate \eqref{eq:no-wd-rate} improves dimension dependence
by a factor of $\sqrt d$ over the AdamW Frobenius norm rate
implied by the best-known result \eqref{rate:adamW},
and does not depend on the worst-case orthogonalization error.
Regarding the choice of NS polynomial, the corollary does not
reveal a clear difference in convergence behavior between
practical and standard NS.

\section{Convergence analysis} \label{sec:proof}

We first state the key insight for our analysis: quantifying
the alignment of NS directions, particularly those produced by practical NS,
with the momentum direction, which supports their use as effective
descent directions.
We then prove convergence, 
with a few routine details deferred to Appendix~\ref{app:proof}.

\subsection{Key insight}

The favorable alignment of practical and standard NS directions
with the momentum is key to Muon's convergence, and is quantified
in the following lemma.

\begin{lemma}[NS direction alignment]\label{lem:ns}
For both practical and standard NS,
the momentum $M_t$ and the resulting
direction $D_t$ in Algorithm~\ref{alg:muon} satisfy
{
\setlength{\abovedisplayskip}{5pt}
\setlength{\belowdisplayskip}{5pt}
\[
\langle M_t,D_t\rangle_F
\ge C_2\nf{M_t}-C_2\varepsilon/4,
\qquad
\nf{D_t}\le C_3,
\]
}\noindent
where $C_2\hspace{-0.1em}=\hspace{-0.1em}\min_{s\in[0,1]}\chi_5(s)\hspace{-0.1em}>\hspace{-0.1em}0,~C_1\hspace{-0.1em}=\hspace{-0.1em}\max_{s\in[0,1]}\chi_5(s)\hspace{-0.1em}>\hspace{-0.1em}0$,
recalled here for convenience, and
\begin{align}
C_3=\min\{C_1,
q_{\max}\sqrt{r}\},\qquad
q_{\max}=\max\nolimits_{s\in[0,1]}q_5(s),\qquad
r=\min\{m,n\}.
 \label{eq:kappa}
\end{align}
 In particular, 
 \vspace{-0.5em}
\begin{align} \label{constant:particular}
    (C_2,C_1,q_{\max})=
    \begin{cases}
    (0.6964\ldots,\,484.8762\ldots,\,1.2023\ldots),
        & \text{practical NS},\\
    (1,\,a^5,\,1),
        & \text{standard NS}.
    \end{cases}
\end{align}
\end{lemma}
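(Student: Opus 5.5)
Our plan is to work entirely in the SVD coordinates supplied by Lemma~\ref{lem:ns-polynomials}(i) and the factorization $D_t=M_tP_t$ from Proposition~\ref{prop:dynamic-regularization}. Writing $M_t=U_t\operatorname{diag}(\sigma_{i,t})V_t^\top$ and $D_t=U_t\operatorname{diag}(q_5(s_{i,t}))V_t^\top$, we get
\[
\langle M_t,D_t\rangle_F=\sum\nolimits_{i}\sigma_{i,t}q_5(s_{i,t})
=\tau_t\sum\nolimits_i s_{i,t}^2\chi_5(s_{i,t})
\ge \frac{C_2}{\tau_t}\sum\nolimits_i\sigma_{i,t}^2=\frac{C_2\nf{M_t}^2}{\tau_t},
\]
using $q_5(s)=s\chi_5(s)$ and $\chi_5\ge C_2$ on $[0,1]$. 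This bound is established in the proof of Proposition~\ref{prop:dynamic-regularization} via \eqref{eq:P-bounds}.

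For the inner-product bound, we split into cases on $\tau_t$. If $\nf{M_t}\ge\varepsilon$, then $\tau_t=\nf{M_t}$ and the bound gives $C_2\nf{M_t}\ge C_2\nf{M_t}-C_2\varepsilon/4$. If $\nf{M_t}<\varepsilon$, then $\tau_t=\varepsilon$, and with $x=\nf{M_t}$ we need $x^2/\varepsilon\ge x-\varepsilon/4$. This is equivalent to $(x-\varepsilon/2)^2\ge0$, which always holds. So the constant $1/4$ comes exactly from completing the square, and $M_t=0$ is covered trivially.

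For the norm bound, $\nf{D_t}^2=\sum_i q_5(s_{i,t})^2$. First, each $|q_5(s_{i,t})|\le q_{\max}$, and there are at most $r$ terms, which gives $\nf{D_t}\le q_{\max}\sqrt r$. We should check that $q_5\ge0$ on $[0,1]$ so that $q_{\max}$ bounds the absolute value. This follows since $q_5(s)=s\chi_5(s)$ with $\chi_5>0$. Second, $q_5(s_{i,t})=s_{i,t}\chi_5(s_{i,t})\le C_1 s_{i,t}$, so
\[
\nf{D_t}\le C_1\Big(\sum\nolimits_i s_{i,t}^2\Big)^{1/2}=C_1\nf{M_t}/\tau_t\le C_1.
\]
Taking the minimum of the two bounds yields $C_3$.

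The main obstacle is the numerical constants in \eqref{constant:particular}, especially for practical NS. There, $C_2=\min\chi_5$, $C_1=\max\chi_5=\chi_5(0)=a_{\mathrm P}^5$ (approximately $484.88$), and $q_{\max}$ must be certified over $[0,1]$ for the fivefold composition of a nonmonotone map. We would proceed as follows:
\begin{itemize}
\item Show that $q_{\mathrm P}$ maps $[0,1]$ into a bounded interval $[0,\bar q]$ with $\bar q\approx1.2023$, by maximizing the quintic $q_{\mathrm P}$ via its critical points.
\item Show that $q_{\mathrm P}$ maps $[0,\bar q]$ back into $[0,\bar q]$. Together with the first step, this identifies $q_{\max}=\bar q$, since the value is already attained after one step on an interior point.
\item Bound $\chi_5=\prod_j p(q_j^2)$ below by tracking $p_{\mathrm P}$ on that invariant interval, combined with a rigorous grid or interval-arithmetic evaluation for the exact minimum $0.6964\ldots$.
\end{itemize}

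For standard NS, monotonicity and the fixed point at $1$ give $q_j(s)\in[0,1]$, so $q_{\max}=1$. Each factor $p^{(k)}_{\mathrm S}(x)\ge1$ on $[0,1]$, so $\chi_5$ is minimized at $s=1$, giving $C_2=1$. Moreover $p^{(k)}_{\mathrm S}$ is decreasing on $[0,1]$, so the maximum is at $s=0$, giving $C_1=a^5$.
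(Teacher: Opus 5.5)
Your proof of the two inequalities and of the standard-NS constants is the paper's argument. The inequality steps are the SVD computation $\langle M_t,D_t\rangle_F=\tau_t\sum_i s_{i,t}q_5(s_{i,t})\ge C_2\nf{M_t}^2/\tau_t$, the case split on $\tau_t$ with $(\nf{M_t}-\varepsilon/2)^2\ge0$, and the bounds $C_1\nf{M_t}/\tau_t\le C_1$ and $q_{\max}\sqrt r$ on $\nf{D_t}$.

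\textbf{Where you diverge: the practical-NS constant $C_2$.} You delegate it to a validated grid or interval-arithmetic computation of $\min_{[0,1]}\chi_5$, a polynomial of degree $5^5-1$. Your ``tracking $p_{\mathrm P}$ on the invariant interval'' ingredient contributes little here. On $[0,\bar q_{\mathrm P}^2]$, $p_{\mathrm P}$ dips to $a_{\mathrm P}-b_{\mathrm P}^2/(4c_{\mathrm P})\approx0.639$, so the factorwise bound is only about $0.106$. All of the sharp content therefore sits in a computation you have not carried out.

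The paper instead gives a structural argument:
\begin{itemize}
\item Let $\underline q_{\mathrm P}=q_{\mathrm P}(s_+)\approx0.6818$. Then $p_{\mathrm P}$ is decreasing on $[0,\underline q_{\mathrm P}^2]$ with $p_{\mathrm P}(\underline q_{\mathrm P}^2)\approx1.6636>1$, so $q_{\mathrm P}(s)\ge s$ below $\underline q_{\mathrm P}$.
\item The interval $[\underline q_{\mathrm P},\bar q_{\mathrm P}]$ is invariant. Hence $q_{\mathrm P,5}(s)\ge\min\{s,\underline q_{\mathrm P}\}\ge q_{\mathrm P,5}(1)s$ for $s\le s_\star=\underline q_{\mathrm P}/q_{\mathrm P,5}(1)\approx0.979$.
\item On $[s_\star,1]$, interval enclosures of the five iterates make the chain-rule product for $q_{\mathrm P,5}'$ have three negative and two positive factors. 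So $q_{\mathrm P,5}$ is decreasing there and $q_{\mathrm P,5}(s)\ge q_{\mathrm P,5}(1)\ge q_{\mathrm P,5}(1)s$.
\end{itemize}
This identifies $C_2=q_{\mathrm P,5}(1)$ in closed form, with minimizer $s=1$, using only a few endpoint evaluations. Your route is conceptually simpler and does not depend on the particular coefficients. However, it would certify the digits $0.6964\ldots$ rather than locate the minimizer, and it needs real validated-numerics work that you leave unspecified.

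\textbf{Two small repairs.}
\begin{itemize}
\item You assert $C_1=a_{\mathrm P}^5$ without argument. It needs the check $\bar q_{\mathrm P}^2\approx1.4457<-b_{\mathrm P}/c_{\mathrm P}\approx2.35$, so that $p_{\mathrm P}\le p_{\mathrm P}(0)=a_{\mathrm P}$ along every orbit in the invariant interval $[0,\bar q_{\mathrm P}]$.
\item For $q_{\max}$, attainment of $\bar q_{\mathrm P}$ after one step does not propagate along the orbit, since $q_{\mathrm P}(\bar q_{\mathrm P})\approx0.9465$. Use instead that $q_{\mathrm P}$ maps $[0,\bar q_{\mathrm P}]$ onto itself: this interval contains $[0,1]$, whose image is already $[0,\bar q_{\mathrm P}]$ by the intermediate value theorem, and it is invariant. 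Hence $q_{\mathrm P,5}([0,1])=[0,\bar q_{\mathrm P}]$.
\end{itemize}
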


\begin{proof}

The proof of Proposition~\ref{prop:dynamic-regularization}
ensures that $\chi_5$, $C_1$, $C_2$, and $q_{\max}$ are
well defined.
 By definition,
\begin{align}
    C_2s\le q_5(s)\le C_1s,
    \qquad
    0\le q_5(s)\le q_{\max},
    \qquad s\in[0,1]. \label{eq:both}
\end{align}
We then prove the bounds in Lemma~\ref{lem:ns}, 
with sharp constants $C_1$, $C_2$, and $q_{\max}$ determined in Appendix~\ref{app:proof}.
By Lemma~\ref{lem:ns-polynomials},
$D_t=U\operatorname{diag}(q_5(s_{i,t}),\ldots,q_5(s_{r_t,t})V^\top$.
Then \eqref{eq:both} yields
\[
 \langle M_t,D_t \rangle_F
 =\tau_t\textstyle\sum\nolimits_i s_{i,t}q_5(s_{i,t})
 \ge C_2\tau_t\textstyle\sum\nolimits_i s_{i,t}^2= C_2\nf{M_t}^2/\tau_t,
\]
where
 $\nf{M_t}^2/\tau_t\ge \nf{M_t}-\varepsilon/4$, since for $\nf{M_t}\le\varepsilon$ the inequality reduces to
$(\nf{M_t}-\varepsilon/2)^2/\varepsilon\ge0$, whereas for $\nf{M_t}\ge\varepsilon$, the left-hand side equals $\nf{M_t}$.
Moreover, \eqref{eq:both} together with $\sum_i s_{i,t}^2=\nf{M_t}^2/\tau_t^2\le1$ gives
 \[
 \nf{D_t}=\sqrt{\textstyle\sum\nolimits_i q_5(s_{i,t})^2}
 \le\min\Big\{C_1\sqrt{\textstyle\sum\nolimits_i s_{i,t}^2},
 q_{\max}\sqrt{r}\Big\}
 \le C_3. \qedhere
\] 
 \end{proof}

 \paragraph{Discussion.}
Lemma~\ref{lem:ns} establishes favorable alignment of both practical
and standard NS directions with the momentum, up to a small
$\hspace{-0.05em}O(\hspace{-0.0em}\varepsilon\hspace{-0.05em})\hspace{-0.05em}$ term, with $\varepsilon\hspace{-0.25em}=\hspace{-0.275em}10^{\hspace{-0.05em}-\hspace{-0.05em}7}$ by default in PyTorch.
This estimate is independent of the orthogonalization error,
which is particularly useful for practical NS since its singular-value map
is nonmonotone and does not fix $1$, precluding asymptotically exact orthogonalization, 
let alone superlinear convergence of standard NS.
Moreover, the lemma 
establishes a uniform bound on the Frobenius norm of the NS direction, $\hspace{-0.1em}\|D_t\|_F\hspace{-0.1em}\le\hspace{-0.1em} C_3\hspace{-0.1em}\le\hspace{-0.1em} C_1\hspace{-0.1em}=\hspace{-0.1em}\mathcal O(1)$,
preventing unbounded growth with $r$ and limiting
the rank dependence of the resulting convergence rate.
Together, alignment and boundedness form the basis of our descent analysis
of the original objective.
 
\subsection{Convergence proof}

We now prove Theorem~\ref{thm:main}, deferring a few routine
details of the momentum analysis to Appendix~\ref{app:proof}.
Corollary~\ref{cor:no-wd} follows from the same argument
by omitting the WD term, i.e., setting $\lambda=0$.

\begin{proof}
    
The update 
$W_{t+1}=W_t-\eta H_t=W_t-\eta(D_t+\lambda W_t)$ and
Assumption~\ref{ass:smooth} gives
\begin{equation}
 f(W_{t+1})
 \le f(W_t)
 -\eta\langle\nabla f(W_t),H_t\rangle_F
 +L\eta^2/2\nf{H_t}^{2}.                          \label{eq:smoothness-step}
\end{equation}
Let $e_t=M_t-\nabla f(W_t)$. The linear term decomposes as
\begin{equation}
 \langle\nabla f(W_t),H_t\rangle_F
 =\langle M_t,D_t\rangle_F-\langle e_t,D_t\rangle_F
 +\lambda\langle\nabla f(W_t),W_t\rangle_F.                \label{eq:linear-decomposition}
\end{equation}

\paragraph{Step 1. Alignment of the NS directions with momentum.}
We use Lemma~\ref{lem:ns} to control the alignment
$\langle M_t,D_t\rangle_F$ and the norm $\nf{D_t}$.

\vspace{-0.5em}
\paragraph{Step 2. Bounds on \texorpdfstring{$\lambda W_t$ and $H_t$}{lambda Wt and Ht}}

From \eqref{eq:parameters}, \eqref{eq:kappa}, and \eqref{constant:particular},
$
 0<\eta\lambda
 \le C_2/(2 C_3T)<1.  $
Lemma~\ref{lem:ns} and the update in Algorithm~\ref{alg:muon}
therefore imply
\[
 \nf{W_{t+1}}
 \le(1-\eta\lambda)\nf{W_t}+\eta C_3
 \le\nf{W_t}+\eta C_3.
\]
For $0\le t\le T$, \eqref{eq:parameters} gives
\begin{equation}
\lambda\nf{W_t}
 \le\lambda\bigl(\nf{W_0}+ C_3\eta t\bigr)
 \le C_2/2, \qquad
 \nf{H_t}\le C_4=C_2/2+C_3.
                        \label{eq:direction-control}
\end{equation}
We can now bound the linear term  \eqref{eq:linear-decomposition}.
Lemma~\ref{lem:ns}, \eqref{eq:direction-control}, and
$\nf{M_t}\ge\nf{\nabla f(W_t)}-\nf{e_t}$ yield
\begin{align}
 \langle\nabla f(W_t),H_t\rangle_F
 &\ge  C_2\nf{M_t}- C_2\varepsilon/4
      - C_3\nf{e_t}-\lambda\nf{W_t}\nf{\nabla f(W_t)} \notag\\
 &\ge \bigl( C_2-\lambda\nf{W_t}\bigr)\nf{\nabla f(W_t)}
      -( C_2+ C_3)\nf{e_t}
      -C_2\varepsilon/4 \notag\\
 &\ge \frac{ C_2}{2}\nf{\nabla f(W_t)}
      -( C_2+ C_3)\nf{e_t}
      -\frac{ C_2\varepsilon}{4}.                       \label{eq:alignment}
\end{align}
Substituting \eqref{eq:alignment} and \eqref{eq:direction-control} into
\eqref{eq:smoothness-step} gives
\begin{equation}
 f(W_{t+1})
 \le f(W_t)-\frac{ C_2\eta}{2}\nf{\nabla f(W_t)}
 +\eta( C_2+ C_3)\nf{e_t}
 +\frac{ C_2\eta\varepsilon}{4}
 +\frac{L\eta^2 C_4^2}{2}.                                 \label{eq:descent}
\end{equation}
\vspace{-1.5em}
\paragraph{Step 3. Tracking error of the Nesterov-style momentum.}
\begin{lemma}[Momentum tracking]\label{lem:tracking}
Under the assumptions and parameter choices of Theorem~\ref{thm:main},
\begin{align}
 \frac1T\hspace{-0.1em}\sum\nolimits_{t=0}^{T-1}\hspace{-0.1em}\E\nf{M_t\hspace{-0.1em}-\hspace{-0.1em}\nabla f(W_t)}
 \le{}&\frac{\nf{B_{-1}\hspace{-0.1em}-\hspace{-0.1em}\nabla f(W_0)}}{c_2}T^{-1/2}
 \hspace{-0.1em}+\hspace{-0.1em}\Big(
 \frac{Lc_1 C_4}{c_2}
 \hspace{-0.1em}+\hspace{-0.1em}\sqrt{2c_2}\,\frac{\sigma}{\sqrt{b_{\mathrm{mb}}}}
 \Big)T^{-1/4}\hspace{-0.1em}.                                        \nonumber
\end{align}
\end{lemma}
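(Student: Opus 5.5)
The plan is to write the Nesterov momentum error as a geometrically weighted sum of three sources: initialization bias, gradient drift, and stochastic noise. I would then bound each source separately in expectation and average over $t$.

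Write $S_t=\nabla f(W_t)$, $\xi_t=G_t-S_t$ and $E_t=B_t-S_t$.

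\textbf{Step 1 (recursion and unrolling).} From Line~5 of Algorithm~\ref{alg:muon},
\[
E_t=\beta E_{t-1}+\beta(S_{t-1}-S_t)+(1-\beta)\xi_t,\qquad E_0=\beta(B_{-1}-S_0)+(1-\beta)\xi_0 .
\]
Unrolling gives
\[
E_t=\beta^{t+1}(B_{-1}-S_0)+\sum_{k=1}^{t}\beta^{t-k+1}(S_{k-1}-S_k)+(1-\beta)\sum_{k=0}^{t}\beta^{t-k}\xi_k .
\]
Line~6 gives $M_t-S_t=\beta E_t+(1-\beta)\xi_t$. By the triangle inequality I split $\E\nf{M_t-S_t}$ into a bias term, a drift term and a noise term. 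The noise term collects all $\xi_k$, including the extra $(1-\beta)\xi_t$.

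\textbf{Step 2 (bias and drift).} The bias term is at most $\beta^{t+1}\nf{B_{-1}-S_0}$. Summing over $t$ gives at most $\nf{B_{-1}-S_0}/(1-\beta)$, so its average is at most $\nf{B_{-1}-S_0}/(Tc_2T^{-1/2})=\nf{B_{-1}-S_0}\,T^{-1/2}/c_2$.

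For the drift, Assumption~\ref{ass:smooth} and the bound $\nf{H_t}\le C_4$ from \eqref{eq:direction-control} give $\nf{S_{k-1}-S_k}\le L\eta C_4$. Hence the drift term is at most $L\eta C_4\sum_{j\ge1}\beta^j\le L\eta C_4/(1-\beta)$. Substituting the parameter choices \eqref{eq:parameters} turns this into $Lc_1C_4T^{-1/4}/c_2$.

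\textbf{Step 3 (noise).} This is the only delicate step. The issue is that $W_k$, and hence $S_k$, depends on past noise, so the $\xi_k$ are not independent. They do, however, form a martingale difference sequence: $\xi_k$ has conditional mean zero given $\mathcal F_{k-1}$, since $W_k$ is $\mathcal F_{k-1}$-measurable. Its conditional second moment is at most $\sigma^2/b_{\mathrm{mb}}$ by Assumption~\ref{ass:noise}.

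The noise term equals $\sum_{k\le t}\alpha_{t,k}\xi_k$ with weights
\[
\alpha_{t,k}=\beta(1-\beta)\beta^{t-k}\ \ (k<t),\qquad \alpha_{t,t}=\beta^2(1-\beta)+(1-\beta).
\]
By Jensen, $\E\nf{\cdot}\le\sqrt{\E\nf{\cdot}^2}$. Orthogonality of martingale increments kills the cross terms, so
\[
\E\Big\|\sum\nolimits_k\alpha_{t,k}\xi_k\Big\|_F^2\le\frac{\sigma^2}{b_{\mathrm{mb}}}\sum\nolimits_k\alpha_{t,k}^2 .
\]
I would then check that
\[
\sum\nolimits_k\alpha_{t,k}^2\le\frac{\beta^2(1-\beta)^2}{1-\beta^2}+O\big((1-\beta)^2\big)\le 2(1-\beta),
\]
using $(1-\beta)^2/(1-\beta^2)\le 1-\beta$ and $1-\beta\le1$. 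Since $1-\beta=c_2T^{-1/2}$, this yields $\sqrt{2c_2}\,\sigma T^{-1/4}/\sqrt{b_{\mathrm{mb}}}$.

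The condition $T\ge c_2^2$ is what ensures $1-\beta\le1$ here. Equivalently, it gives $c_2T^{-1/2}\le\sqrt{c_2}\,T^{-1/4}$, which would absorb the lone $(1-\beta)\xi_t$ term if it were bounded separately rather than folded into the weights.

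\textbf{Step 4.} Averaging the three bounds over $t=0,\dots,T-1$ gives the stated inequality.
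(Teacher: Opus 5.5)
Your proposal is correct and follows essentially the paper's route. You unroll the Nesterov momentum into a bias term, a drift term controlled by $L\eta C_4$ (via smoothness and $\|H_t\|_F\le C_4$), and a noise term whose second moment is bounded using vanishing cross terms followed by Jensen; your martingale-difference justification of those vanishing cross terms is in fact more careful than the paper's appeal to ``independent sampling.''

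There is one small slip. The weight on $\xi_t$ is $\beta(1-\beta)+(1-\beta)=1-\beta^2$, not $\beta^2(1-\beta)+(1-\beta)$. Also, the exact constant $2$ does not follow from an unspecified $O((1-\beta)^2)$ term. It is best obtained, as in the paper, from the exact sum $(1-\beta^2)^2+(1-\beta)^2\beta^4/(1-\beta^2)=(1-\beta)(1+2\beta-2\beta^3)/(1+\beta)\le 2(1-\beta)$.
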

Summing \eqref{eq:descent}, taking expectations, and using Lemma~\ref{lem:tracking}, \eqref{eq:parameters}, and $f(W_t)\ge f_{\inf}$, we obtain
\begin{align}
 \frac1T\hspace{-0.2em}\sum_{t=0}^{T-1}\hspace{-0.1em}\E\hspace{-0.1em}\nf{\nabla f(W_t)}
\hspace{-0.15em}&\le\hspace{-0.25em}
\Big[\frac{2(f(W_0)\hspace{-0.1em}-\hspace{-0.1em}f_{\inf})}{C_2c_1}
   \hspace{-0.1em}+\hspace{-0.1em}\frac{2(C_2\hspace{-0.1em}+\hspace{-0.1em}C_3)LC_4c_1}{C_2c_2}  
   \hspace{-0.1em}+\hspace{-0.1em}\frac{2\sqrt{2}(C_2\hspace{-0.1em}+\hspace{-0.1em}C_3)\sqrt{c_2}\sigma}{C_2\sqrt{b_{\mathrm{mb}}}}   \hspace{-0.1em}+\hspace{-0.1em}\frac{c_3}{2}
 \Big]T^{-\hspace{-0.1em}1/4} \notag\\
 &\quad 
    +\frac{2( C_2+ C_3)}{ C_2c_2}
   \nf{B_{-1}-\nabla f(W_0)}\,T^{-1/2}
 +\frac{Lc_1 C_4^2}{ C_2}T^{-3/4}.                     \label{eq:pre-balanced-rate}
\end{align}
For fixed $c_2$, the first two terms in the bracket are minimized by
$c_1^2=(f(W_0)-f_{\inf})c_2/[( C_2+ C_3)L C_4]$.
Then optimizing over $c_2$ and choosing $c_3$ to match the
resulting statistical scale gives
\vspace{-0.25em}
\begin{align}
    &\qquad\qquad\qquad\quad  c_1=\frac{2^{1/4}(f(W_0)-f_{\inf})^{3/4}b_{\mathrm{mb}}^{1/4}}
 {( C_2+ C_3)^{3/4}(C_2/2 + C_3)^{1/4}(L \sigma^2)^{1/4}},\nonumber\\
 &
    c_2=\Big(
 \frac{2(C_2/2 + C_3)L(f(W_0)-f_{\inf})b_{\mathrm{mb}}}{( C_2+ C_3)\sigma^2}
 \Big)^{1/2},~
 c_3=\Big(\frac{L(f(W_0)-f_{\inf})\sigma^2}{b_{\mathrm{mb}}}\Big)^{1/4}.
 \label{eq:balanced-constants}
\end{align}\vspace{-0.25em}
For these choices, the bracket in \eqref{eq:pre-balanced-rate} equals
\[
 \Big[
   2^{11/4}\Big(1 + \frac{C_3}{C_2}\Big)^{3/4}\Big(\frac{1}{2} + \frac{C_3}{C_2}\Big)^{1/4} 
  +\frac12
 \Big]\Big(\frac{L(f(W_0)-f_{\inf})\sigma^2}{b_{\mathrm{mb}}}\Big)^{1/4}. \qedhere
\]
\end{proof}

\section{Advantage of practical NS} \label{sec:advPNS}
Our convergence results do not reveal an advantage of practical NS over standard NS.
As discussed for standard NS with WD,
the NS iterations may leave very
 small singular values far from $1$, potentially slowing convergence.
Practical NS was designed to amplify them faster by increasing
the singular-value map's slope at zero \citep{jordan2024muon},
and the following ill-conditioned quadratic illustrates its optimization advantage.
The detailed proof is provided in Appendix~\ref{app:proof}.



\begin{proposition}[Ill-conditioned quadratic]
\label{prop:two-level}
Let $h,\ell,\kappa\in\mathbb N$, with $\kappa\gg1$, $\ell=h\kappa$, $r=h+\ell$.
Define
\[
Q=\operatorname{diag}(\kappa I_h,I_\ell),\qquad
f(W)=\frac12\operatorname{tr}(W^\top QW),\qquad
W_0=I_r.
\]
The high-curvature block accounts for most of the gradient energy,
while each block contributes half of the objective value.
We consider practical NS or standard NS with $k=2$, and normalize
without an $\varepsilon$-stabilizer, stopping when the gradient vanishes.

\textbf{(i) One-step contraction.} Denote 
$
\tau_0\hspace{-0.1em}=\hspace{-0.1em}\nf{\nabla f(W_0)\hspace{-0.1em}}=\hspace{-0.1em}\sqrt{h\kappa(\kappa+1)},~
u=q_5(\kappa/\tau_0),~v=q_5(1/\tau_0).
$
For deterministic Muon without momentum or WD, the respective optimal stepsizes minimizing $f(W_1)$ for the two NS choices yield
\begin{equation}
\frac{f(W_1)}{f(W_0)}
=\frac{h\kappa\ell(u-v)^2}
{(h\kappa+\ell)(h\kappa u^2+\ell v^2)}=
\frac{(u-v)^2}{2(u^2+v^2)}
\le
\left(\frac{u-v}{u+v}\right)^2.  \nonumber
\end{equation}

Consider the example with $h=1$ and $\ell=\kappa=100$, five practical and standard NS
iterations give $(u,v)\approx(0.7020,0.6968)$ and
$(1,0.2274)$, respectively, leaving $0.0014\%$ and $28.3789\%$
of $f(W_0)$ after the one-step update.

For fixed $h$ and $\kappa\to\infty$, substituting
$u=q_5(1/\sqrt h)+\mathcal O(\kappa^{-1})$ and
$v=a^5/(\kappa\sqrt h)+\mathcal O(\kappa^{-2})$ gives
\[
\frac{f(W_1)}{f(W_0)}
=\frac12-\frac{a^5}{\kappa\sqrt h\,q_5(1/\sqrt h)}
+\mathcal O(\kappa^{-2}).
\]
When $q_5(1/\sqrt h)$ is close to one for both NS
choices, the leading-order improvement is mainly governed by $a^5$,
which is approximately $484.8762$ for practical NS and $23.1743$
for standard NS. Thus, practical NS yields a substantially
stronger one-step contraction.

\textbf{(ii) Full optimization trajectory.}
Consider the example with $h=1$ and $\ell=\kappa=100$.
Let $W_t^{\mathrm P}$ and $W_t^{\mathrm S}$ denote the Muon iterates generated by practical and standard NS, respectively, 
using the optimal stepsizes at each iteration $t$ that minimize $f(W_{t+1}^{\mathrm P})$ and $f(W_{t+1}^{\mathrm S})$.
Then
\begin{equation}
\frac{f(W_t^{\mathrm P})}{f(W_0)}\le (6.4514\times10^{-5})^t,
\qquad
\frac{f(W_t^{\mathrm S})}{f(W_0)}\ge(0.1406)^t. \nonumber
\end{equation}
Further comparisons for different values of $h$ and $\kappa$
are provided in Table~\ref{tab:ns-one-step}.
\begin{table}[h]
\centering
\caption{One-step and full trajectory comparisons for the
ill-conditioned quadratic.
Panel (a) reports $f(W_1^{\mathrm P})/f(W_1^{\mathrm S})$;
panel (b) reports $T_{\mathrm P}/T_{\mathrm S}$, where
$T_{\mathrm P}$ and $T_{\mathrm S}$ are the numbers of iterations required
by practical and standard NS, respectively, to reduce the objective
to $10^{-6}$ of its initial value.
}
\vspace{0.25em}
\label{tab:ns-one-step}
\small
\setlength{\tabcolsep}{10pt}
\setlength{\arrayrulewidth}{0.5pt}
\begin{tabular}{r|rrrrr}
\multicolumn{6}{c}{ \textbf{(a) One-step contraction}} \\[3pt]
\hline
$h\backslash\kappa$
& $100$ & $10^3$ & $10^4$ & $10^5$ & $10^6$ \\
\hline
$1$   & $4.9749\times10^{-5}$ & $0.0762$ & $0.8655$ & $0.9865$ & $0.9987$ \\
$10$  & $5.4271\times10^{-7}$ & $0.7462$ & $0.9744$ & $0.9974$ & $0.9997$ \\
$100$ & $0.0839$ & $0.8685$ & $0.9868$ & $0.9987$ & $0.9999$ \\
\hline
\noalign{\vskip 8pt}
\multicolumn{6}{c}{\textbf{(b) Full trajectory (numerical simulations)}} \\[3pt]
\hline
$h\backslash\kappa$
& $100$ & $10^3$ & $10^4$ & $10^5$ & $10^6$ \\
\hline
$1$   & $0.1250$ & $0.0376$ & $0.0176$ & $0.0112$ & $0.0280$ \\
$10$  & $0.0208$ & $0.0440$ & $0.0033$ & $0.0012$ & $0.0011$ \\
$100$ & $0.0298$ & $0.0157$ & $0.0175$ & $0.0263$ & $0.0288$ \\
\hline
\end{tabular}
\end{table}
\end{proposition}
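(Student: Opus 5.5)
The plan is to exploit the block-diagonal structure of the problem, so that every Muon iterate stays of the form $W_t=\operatorname{diag}(\alpha_t I_h,\gamma_t I_\ell)$ and everything reduces to two scalars. At $W_0=I_r$ we have $\nabla f(W_0)=QW_0=\operatorname{diag}(\kappa I_h,I_\ell)$. This matrix is symmetric positive semidefinite, with singular values $\kappa$ (multiplicity $h$) and $1$ (multiplicity $\ell$). Hence $\tau_0=\sqrt{h\kappa^2+\ell}=\sqrt{h\kappa(\kappa+1)}$, using $\ell=h\kappa$. Lemma~\ref{lem:ns-polynomials}(i) then gives $D_0=\operatorname{diag}(uI_h,vI_\ell)$ with $u=q_5(\kappa/\tau_0)$ and $v=q_5(1/\tau_0)$. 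Without momentum or WD, $W_1=\operatorname{diag}((1-\eta u)I_h,(1-\eta v)I_\ell)$, and therefore
\[
f(W_1)=\tfrac12\bigl(h\kappa(1-\eta u)^2+\ell(1-\eta v)^2\bigr).
\]
Minimizing this quadratic in $\eta$ gives $\eta^\star=(h\kappa u+\ell v)/(h\kappa u^2+\ell v^2)$. Substituting back and applying Lagrange's identity yields the minimum value $\tfrac12 h\kappa\ell(u-v)^2/(h\kappa u^2+\ell v^2)$. Dividing by $f(W_0)=\tfrac12(h\kappa+\ell)$ gives the first expression. Setting $\ell=h\kappa$ gives $(u-v)^2/(2(u^2+v^2))$, and the final inequality is just $(u+v)^2\le 2(u^2+v^2)$.

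The numerical values for $h=1$, $\kappa=\ell=100$ come from direct evaluation. Here $\tau_0=\sqrt{10100}$, so $\kappa/\tau_0\approx0.995$ and $1/\tau_0\approx0.00995$. Iterating $q$ five times for $p_{\mathrm P}$ and for $p_{\mathrm S}^{(2)}$ gives the stated $(u,v)$ and hence the stated percentages.

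For the asymptotics with fixed $h$ and $\kappa\to\infty$, expand $\kappa/\tau_0=(h(1+1/\kappa))^{-1/2}=1/\sqrt h+\mathcal O(\kappa^{-1})$; smoothness of $q_5$ then gives $u=q_5(1/\sqrt h)+\mathcal O(\kappa^{-1})$. Likewise $1/\tau_0=1/(\kappa\sqrt h)+\mathcal O(\kappa^{-2})$, and $q_5(s)=a^5s+\mathcal O(s^3)$ near $0$ because $q_5'(0)=\chi_5(0)=a^5$ and $q_5$ is odd. Writing $v/u=\delta$ small,
\[
\frac{(u-v)^2}{2(u^2+v^2)}=\tfrac12(1-2\delta+\mathcal O(\delta^2)),
\]
with $\delta=a^5/(\kappa\sqrt h\,q_5(1/\sqrt h))+\mathcal O(\kappa^{-2})$. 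This gives the displayed expansion. The values $a_{\mathrm P}^5\approx484.88$ and $(15/8)^5\approx23.17$ are arithmetic.

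For part (ii), I would use the scale invariance of the update. Write $W_t=\operatorname{diag}(\alpha_tI_h,\gamma_tI_\ell)$. Then $\nabla f(W_t)=\operatorname{diag}(\kappa\alpha_tI_h,\gamma_tI_\ell)$, and its normalized singular values depend only on the ratio $\rho_t=\kappa|\alpha_t|/|\gamma_t|$. If the signs flip, the singular vectors absorb the sign, and $D_t$ carries the sign of the gradient. So each step is a two-dimensional problem: with $u_t,v_t$ computed from $\rho_t$, the optimal step contracts $f$ by the factor $R(\rho_t)=h\kappa\alpha^2\ell\gamma^2(u_t-v_t)^2/\bigl[(h\kappa\alpha^2+\ell\gamma^2)(\ldots)\bigr]$, obtained by repeating the part (i) computation with weights $h\kappa\alpha_t^2$ and $\ell\gamma_t^2$. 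The optimal step also drives the weighted residuals to a specific new ratio, $\alpha_{t+1}/\gamma_{t+1}=-\ell\gamma_t v_t/(h\kappa\alpha_t u_t)\cdot(\ldots)$, which gives an explicit map $\rho_{t+1}=g(\rho_t)$.

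The bounds $(6.4514\times10^{-5})^t$ and $(0.1406)^t$ then follow by showing that the orbit of $\rho$ either lands after one step in a fixed point or 2-cycle of $g$, or stays in an interval $J$ on which $\sup_J R_{\mathrm P}\le6.4514\times10^{-5}$ and $\inf_J R_{\mathrm S}\ge0.1406$. This step is the main obstacle. It requires rigorous control of a fivefold-composed polynomial map and an invariance argument for $g$. I would first compute the orbit numerically to identify the attracting cycle. I would then certify an invariant interval around it via monotonicity and interval-arithmetic bounds on $q_5$, and conclude by induction on $t$. The table entries in (b) are simulations and need no proof.
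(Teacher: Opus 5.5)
Your part (i) follows the paper's argument essentially step for step. Your strategy for part (ii) is also the paper's: reduce to a scalar recursion for the block ratio, find an interval containing the initial ratio that the recursion maps into itself, bound the per-step contraction there, and multiply.

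\textbf{The gap.} Part (ii) is never carried out. The invariant intervals and the two constants are the whole content of the claim, and the formulas you set up for them are wrong. Repeating the part (i) computation with weights $h\kappa\alpha_t^2$ and $\ell\gamma_t^2$ is valid only after rescaling the directions to $u_t/|\alpha_t|$ and $v_t/|\gamma_t|$. Doing so gives
\[
\frac{f(W_{t+1})}{f(W_t)}=\frac{h\kappa\ell\,(|\gamma_t|u_t-|\alpha_t|v_t)^2}{(h\kappa\alpha_t^2+\ell\gamma_t^2)(h\kappa u_t^2+\ell v_t^2)},
\qquad
\frac{\alpha_{t+1}}{\gamma_{t+1}}=-\operatorname{sgn}(\alpha_t\gamma_t)\,\frac{\ell v_t}{h\kappa u_t}.
\]
So the numerator is not proportional to $(u_t-v_t)^2$ in general, and the new ratio has no factor $\gamma_t/\alpha_t$. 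For $h=1$, $\ell=\kappa=100$ and $z_t=|\alpha_t|/|\gamma_t|$, this becomes $z_{t+1}=\Psi(z_t)=v_t/u_t$ and $f(W_{t+1})/f(W_t)=(1-z_t\Psi(z_t))^2/\bigl[(1+z_t^2)(1+\Psi(z_t)^2)\bigr]$. The practical contraction is tiny because $z\Psi(z)\approx1$; your $(u_t-v_t)^2$ form does not capture this.

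\textbf{The missing step.} With the correct recursion it is short.

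For practical NS, take $J=[0.9920,1]$. The normalized singular values then lie in $[0.9949,0.9950]$ and $[0.0099,0.0100]$. On these ranges, interval propagation through the five steps (as in the proof of Lemma~\ref{lem:ns}) shows $q_{\mathrm P,5}$ is monotone, giving $u\in[0.7020,0.7021]$ and $v\in[0.6967,0.7002]$. Hence $\Psi(J)\subset J$ and $0.9920^2\le z\Psi(z)<1$, so the per-step ratio is at most $(1-0.9920^2)^2/(1+0.9920^2)^2\le6.4514\times10^{-5}$.

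For standard NS you need no fixed point or 2-cycle. The bounds $s\le q_{\mathrm S,5}(s)\le a_{\mathrm S}^5s$ give $z\Psi(z)\le a_{\mathrm S}^5/100<1/4$ for every $z>0$. The interval $J=[1/5,1]$ is invariant for two reasons:
\begin{itemize}
\item $v\le u$ by monotonicity.
\item $q_{\mathrm S}(s)\ge(475/256)s$ on $[0,1/8]$, and all five NS inputs stay below $a_{\mathrm S}^4/100<1/8$. Hence $v\ge q_{\mathrm S,5}(1/\sqrt{10100})\ge(475/256)^5/\sqrt{10100}>1/5$.
\end{itemize}
Since $z,\Psi(z)\le1$, the ratio is at least $(3/4)^2/4=9/64>0.1406$.

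The standard-NS orbit jumps to $z_1\approx0.227$ immediately. What the argument needs is this wide interval containing $z_0=1$ together with uniform bounds, not a neighbourhood of a numerically located cycle.
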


\section{Experiments} \label{sec:simu}

As full gradient evaluation is impractical for large language models,
we examine Muon's $\mathcal{O}(T^{-\hspace{-0.1em}1/4})$ convergence rate
using ResNet-32 \citep{He_2016_CVPR} with 118M parameters (Figure~\ref{fig:resnet_speed}, left), where its
Frobenius gradient norm follows an overall trend qualitatively similar
to the $T^{-1/4}$ reference curve.
We further examine Muon's theoretical $\sqrt{d}$ improvement over AdamW
in the Frobenius norm using ResNet-32 of varying dimensions
(Figure~\ref{fig:resnet_speed}, right), 
where the normalized
Muon-to-AdamW gradient norm ratios, denoted by
$\sqrt{d}\|\nabla f(W_{\mathrm{Muon}})\|_F/
\|\nabla f(W_{\mathrm{AdamW}})\|_F$,
remain broadly stable across dimensions despite fluctuations, suggesting a growing relative advantage
with dimension.

We also show on a 124M-parameter GPT-2-style model \citep{radford2019language} that a large factor
$(1\hspace{-0.1em}-\hspace{-0.1em}\epsilon_5)^{-\hspace{-0.1em}1}$
arising from the worst-case NS orthogonalization error 
does not necessarily  imply
slow convergence (Figure~\ref{fig:124loss}, left),
even when it is on the order of $10^7$ for standard NS and $10^5$
for practical NS (Figure~\ref{fig:124loss}, right).
See Appendices~\ref{app:convergence} and~\ref{app:error} for further details,
including the setup, larger-model results, and hyperparameter sweeps.




\begin{figure}[h]
    \centering 
    \includegraphics[width=1\linewidth]{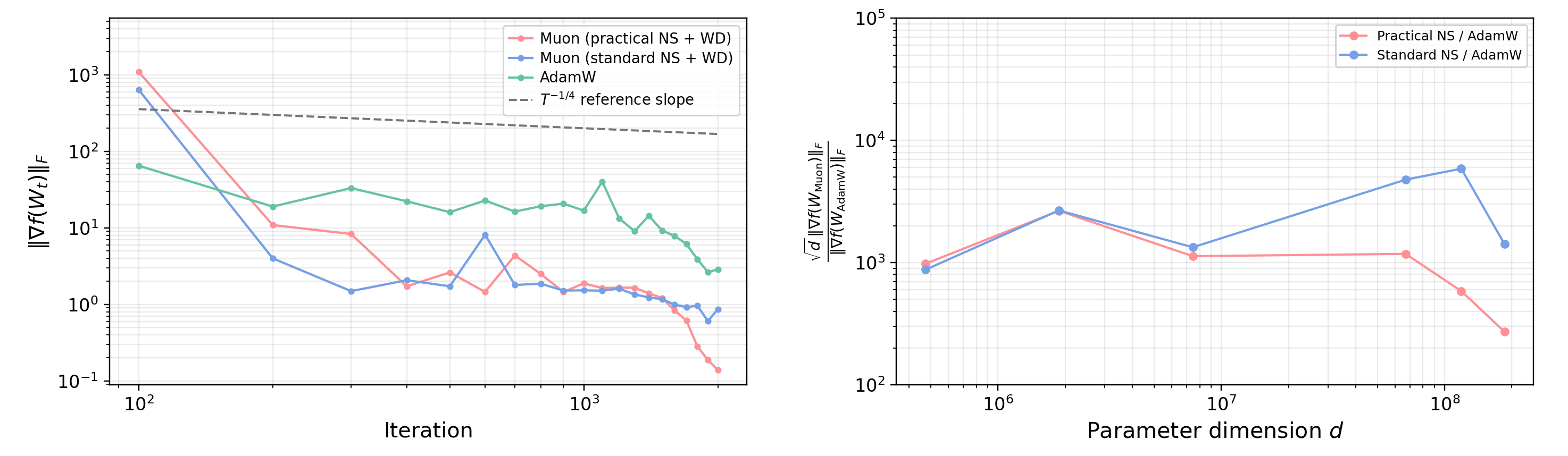}
\caption{Empirical $\mathcal{O}(T^{-1/4})$ convergence and $\sqrt{d}$ improvement over AdamW on ResNet-32.}    \label{fig:resnet_speed}
\end{figure}

\begin{figure}[h]
    \centering
    \includegraphics[width=1\linewidth]{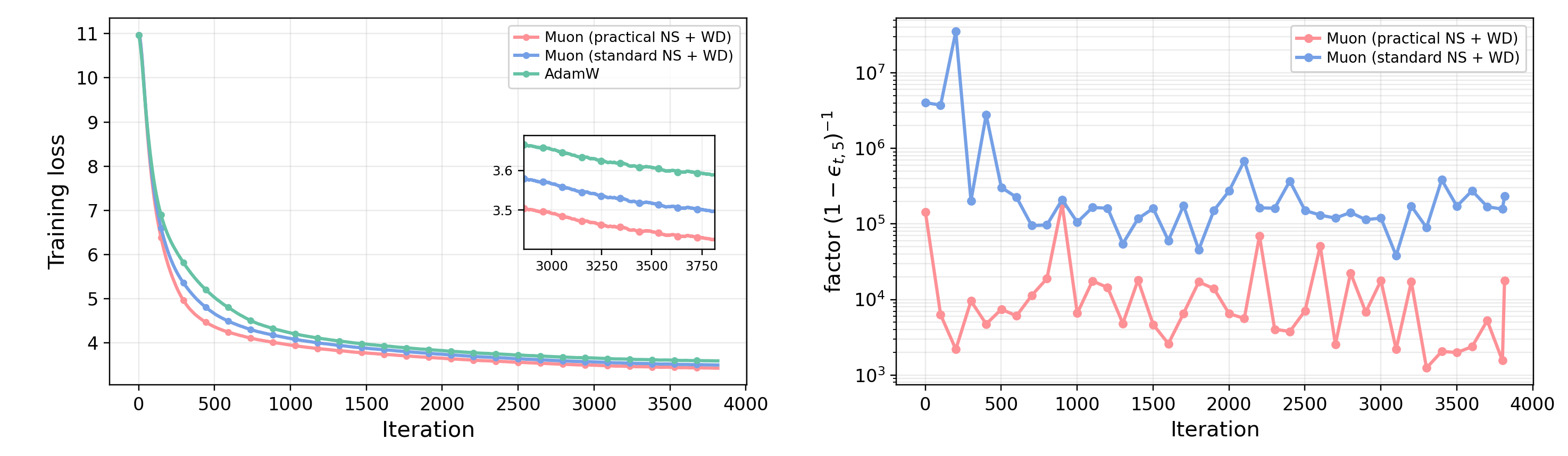}
    \caption{Training loss and orthogonalization error factor
    $(\hspace{-0.1em}1\hspace{-0.09em}-\hspace{-0.05em}\epsilon_{t\hspace{-0.05em},\hspace{-0.05em}5}\hspace{-0.1em})^{\hspace{-0.1em}-\hspace{-0.1em}1}$ on\hspace{0.17em}the\hspace{0.14em}124M\hspace{0.2em}GPT-2-style\hspace{0.17em}model.}    
    \label{fig:124loss}
\end{figure}


\section{Conclusion} \label{sec:con}
In this paper, we provided an optimization interpretation and established convergence 
for Muon with practical Newton--Schulz iterations and decoupled weight decay.
Our interpretation reveals a dynamic regularization effect that vanishes as stationarity is approached, preserving the original objective as the convergence target.
On this basis, we established an $\mathcal{O}(T^{-1/4})$ convergence rate in the stochastic nonconvex setting, 
with a $\sqrt{d}$ improvement in dimension dependence over the best known AdamW's convergence rate in terms of the expected Frobenius norm of the gradient.
Future work of particular interest includes understanding Muon's advantages and improving its performance.

\subsection*{AI use statement}


In this work, we used generative AI tools to improve the grammar of the manuscript.
We have not used generative AI tools for research ideation,
theoretical analysis, or the interpretation of experimental results.
We have reviewed all AI-assisted work to ensure that it preserves
the intended meaning and technical content.
We take responsibility for the final content of this work,
including text, claims or artifacts produced with the aid of generative AI.

\subsubsection*{Acknowledgments}
This work was supported in part by the National Natural Science
Foundation of China under Grants 62503365, 62573319, 62271424,
and 62088101.

\bibliography{iclr2027_conference}
\bibliographystyle{iclr2027_conference}

\appendix

\section{Additional related work} \label{app:realted}

\paragraph{Convergence of AdamW.}

Although the convergence of Adam and its variants has been extensively
studied \citep{reddi2018convergence,zhang2022adam}, decoupled WD complicates both
the optimization interpretation and convergence analysis of AdamW.
\citet{xie2024implicit} show that, if deterministic AdamW converges,
its limit is a KKT point of an $\ell_\infty$-constrained problem.
\citet{zhou2024adamw} provide the first convergence analysis of AdamW through 
a surrogate objective combining the original loss 
with a dynamic weighted $\ell_2$ regularizer.
However, their analysis requires an exponentially decreasing of the WD
coefficient, progressively weakening regularization.
More recently, \citet{li2026frac} establish the best known convergence rate
\eqref{rate:adamW} for stationarity of the original objective
with a WD coefficient held fixed across iterations.
Inspired by these works, we investigate the role of WD in Muon 
while additionally accounting for its interaction with practical NS, 
thereby providing an optimization interpretation and convergence analysis for practical Muon.

\paragraph{Newton--Schulz iterations.}
NS iterations approximately orthogonalize matrices through
repeated matrix polynomial updates
\citep{bjorck1971iterative,higham2008functions}.
Standard Taylor-based polynomials are designed to progressively improve the orthogonalization accuracy, 
whereas Muon's practical polynomial is tuned to move more rapidly toward orthogonalization within a small fixed number of iterations
while allowing deviations from exact orthogonalization \citep{jordan2024muon}.
Recent methods such as Polar Express \citep{amsel2026polar} and CANS \citep{grishina2025accelerating} accelerate orthogonalization using a sequence of NS polynomials, each optimized for the singular-value range produced by the previous iteration.

\paragraph{Optimization interpretations of Muon.}
Muon's update direction is often interpreted as a steepest
descent direction for a surrogate objective under
a suitable norm constraint.
Examples include spectral-norm steepest descent
\citep{bernstein2024old} and spectral-ball LMO formulations
\citep{pethick2025training} for Muon with exact orthogonalization,
as well as Frank--Wolfe interpretations
\citep{pethick2025training,sfyraki2026lions}
and the Lion-$\mathcal{K}$ framework \citep{chen2026muon},
which additionally account for WD.
Beyond explaining the update direction, the Frank--Wolfe
and Lion-$\mathcal{K}$ interpretations also recast Muon with
exact orthogonalization and WD as a method for minimizing
the original objective under a spectral-norm constraint
on the parameters.
A complementary perspective studies Muon through matrix
preconditioning. \citet{bernstein2024old} interpret Muon's
exact orthogonalization as Shampoo preconditioning without
accumulation, while \citet{lau2025polargrad,ma2026preconditioning}
interpret orthogonalization as preconditioning with inverse
square roots of gradient Gram matrices, which can offer benefits such as reduced gradient anisotropy
and faster convergence on certain structured matrix problems.
Motivated by these perspectives, we also interpret Muon's
update direction as a descent direction for an underlying
objective and provide a corresponding preconditioning
interpretation.

\paragraph{Understanding Muon's advantages over Adam and AdamW.}
Beyond the success discussed in the introduction,
existing work investigates the mechanisms underlying Muon's
advantages over Adam and AdamW
from the perspectives of long-tail learning, directional
curvature, and convergence complexity.
For long-tail learning, these advantages are linked to
Muon's more uniform update singular values, which can
reduce the dominance of frequent patterns and promote
more balanced learning across frequencies
\citep{wang2026tail,vasudeva2026muon,wang2026associative}.
Regarding curvature, the preconditioning interpretation discussed above
provides one theoretical perspective.
\citet{su2025isotropic} offers another through an isotropic curvature
model for a single optimization step, in which gradient orthogonalization
is optimal when the curvature penalty has a sufficiently sharp kink.
Empirical studies complement these theoretical perspectives by showing
that Muon exhibits lower normalized directional curvature than
Adam at matched validation loss \citep{wang2026curvature} and
allocates a larger fraction of its update to low-curvature
Hessian subspaces than AdamW \citep{kim2026amuse}.
In terms of convergence complexity,
\citet{ma2026preconditioning} and \citet{paquette2026phases}
compare Muon with exact orthogonalization against SignGD and SignSGD, respectively,
while \citet{kim2026muon} compare Muon with standard NS
against SGD with momentum, obtaining sharper rank dependence
in convergence rates for nuclear norm stationarity.
We complement these perspectives by directly comparing the convergence rate we establish for practical Muon in Theorem~\ref{thm:main} 
with the best known AdamW's rate \eqref{rate:adamW}, showing Muon's sharper dimension dependence when stationarity is measured in the Frobenius norm.

\section{Complete proofs} \label{app:proof}

This appendix provides the complete proofs of Lemma~\ref{lem:ns},
Lemma~\ref{lem:tracking}, and Proposition~\ref{prop:two-level}.
We begin with Lemma~\ref{lem:ns}, deriving the sharp constants omitted from the main text.

\begin{lemnsrestated}[NS direction alignment]
For both practical and standard NS,
the momentum $M_t$ and the resulting
direction $D_t$ in Algorithm~\ref{alg:muon} satisfy
\begin{align}
        \langle M_t,D_t\rangle_F
    \ge C_2\nf{M_t}-C_2\varepsilon/4,
    \qquad
    \nf{D_t}\le C_3, \nonumber
\end{align}
where $C_2\hspace{-0.1em}=\hspace{-0.1em}\min_{s\in[0,1]}\chi_5(s)\hspace{-0.1em}>\hspace{-0.1em}0,~C_1\hspace{-0.1em}=\hspace{-0.1em}\max_{s\in[0,1]}\chi_5(s)\hspace{-0.1em}>\hspace{-0.1em}0$,
as defined above, and
\begin{align}
C_3=\min\{C_1,
q_{\max}\sqrt{r}\},\qquad
q_{\max}=\max\nolimits_{s\in[0,1]}q_5(s),\qquad
r=\min\{m,n\}. \nonumber
\end{align}
 In particular, 
 \vspace{-0.5em}
\[
    (C_2,C_1,q_{\max})=
    \begin{cases}
    (0.6964\ldots,\,484.8762\ldots,\,1.2023\ldots),
        & \text{practical NS},\\
    (1,\,a^5,\,1),
        & \text{standard NS}.
    \end{cases}
\]
\end{lemnsrestated}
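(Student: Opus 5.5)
The argument has two parts. The inequalities follow from the scalar bounds \eqref{eq:both}, exactly as in the main-text proof. The extra work is to justify the sharp constants $(C_2,C_1,q_{\max})$ in \eqref{constant:particular}.

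\textbf{Inequalities.} By Lemma~\ref{lem:ns-polynomials}, $D_t=U_t\operatorname{diag}(q_5(s_{i,t}))V_t^\top$. This gives
\[
\langle M_t,D_t\rangle_F=\tau_t\sum\nolimits_i s_{i,t}q_5(s_{i,t})\ge C_2\tau_t\sum\nolimits_i s_{i,t}^2=C_2\nf{M_t}^2/\tau_t .
\]
I then split on whether $\nf{M_t}\ge\varepsilon$. If so, $\tau_t=\nf{M_t}$ and the bound is exactly $C_2\nf{M_t}$. Otherwise $\tau_t=\varepsilon$, and the bound $\nf{M_t}^2/\varepsilon\ge\nf{M_t}-\varepsilon/4$ is the completed square $(\nf{M_t}-\varepsilon/2)^2\ge0$.

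For the norm bound, $\nf{D_t}^2=\sum_i q_5(s_{i,t})^2$. Bounding each term by $C_1^2s_{i,t}^2$ and using $\sum_i s_{i,t}^2\le 1$ gives $\nf{D_t}\le C_1$. Bounding each term by $q_{\max}^2$ over at most $r$ terms gives $\nf{D_t}\le q_{\max}\sqrt r$.

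\textbf{Standard NS constants.} Write $\chi_5(s)=\prod_{j=0}^4 p(q_j(s)^2)$. Each $q_j(s)\in[0,1]$ and $p_{\mathrm S}^{(k)}(x)\ge1$ on $[0,1]$, since it is $1$ plus nonnegative terms. Hence $\chi_5\ge1$, with equality at $s=1$ because $q$ fixes $1$ and $p_{\mathrm S}^{(k)}(1)=1$, so $C_2=1$. Since $q$ is nondecreasing with $q(1)=1$, we get $q_5\le1$, and $q_{\max}=1$ is attained at $s=1$.

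For $C_1$ I need $\chi_5$ to be maximized at $0$. I will show $x\mapsto p(x)$ is nonincreasing on $[0,1]$, which holds because its coefficients in $(1-x)$ are nonnegative. Each $q_j$ is nondecreasing in $s$, so each factor $p(q_j(s)^2)$ is nonincreasing in $s$. The product of nonnegative nonincreasing factors is maximized at $s=0$, where $q_j(0)=0$ and each factor equals $a$. This gives $C_1=a^5$.

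\textbf{Practical NS constants.} For the maximum, each factor satisfies $p_{\mathrm P}(x)\le a_{\mathrm P}$ whenever $x\in[0,-b_{\mathrm P}/c_{\mathrm P}]\approx[0,2.35]$. One checks that $q_{\mathrm P}$ maps $[0,1.21]$ into itself, so all $q_j(s)^2\le1.47$ stay in that interval. Hence $\chi_5\le a^5=\chi_5(0)$, giving $C_1=a_{\mathrm P}^5\approx484.8762$.

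\textbf{Main obstacle.} The remaining practical-NS values, $C_2\approx0.6964$ and $q_{\max}\approx1.2023$, come from a nonmonotone degree-$3^5$ polynomial. No clean closed-form argument is available for them. My plan is a rigorous numerical certification. I would partition $[0,1]$ into subintervals and apply interval arithmetic to $q_5$ and $\chi_5$, using the Lipschitz bound $|q'|\le\max|q_{\mathrm P}'|$ propagated through the five compositions.

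Near $s=0$ I would instead use $\chi_5(s)\to a^5$, which is far above $C_2$. This removes the small-$s$ region, where Lipschitz constants blow up relative to the scale of $q_5(s)$.

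The resulting certified enclosures would locate the minimizer of $\chi_5$ near $s=1$ and pin $C_2$ and $q_{\max}$ to the stated digits. Only positivity and finiteness of these constants are needed for the theorem, and these were already established in Proposition~\ref{prop:dynamic-regularization}.
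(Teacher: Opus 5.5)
Your proofs of the two inequalities, of all the standard-NS constants, and of $C_1=a_{\mathrm P}^5$ for practical NS are correct and essentially match the paper.

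The gap is in the practical-NS values $C_2=0.6964\ldots$ and $q_{\max}=1.2023\ldots$. These are part of the statement, so the remark that the theorem only needs positivity does not discharge them. You do not prove them. Instead you defer to an interval-arithmetic certification that is never carried out, on the grounds that no clean argument exists. In fact a short structural argument does exist.

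The plan is also not yet a proof as written:
\begin{itemize}
\item The small-$s$ region is excluded only through the limit $\chi_5(s)\to a^5$, with no explicit cutoff.
\item At $s=1$, where $\chi_5$ attains its minimum at the endpoint, $q_{\mathrm P,5}(1)\approx0.69644$ exceeds $0.6964$ by only about $4\times10^{-5}$. Your propagated Lipschitz constant is of order $\max|q_{\mathrm P}'|^5\approx10^3$. So the enclosure must be refined to width of order $10^{-8}$ there, unless you certify the sign of the derivative near $s=1$.
\item A pure enclosure yields only digits, not the identity $C_2=\chi_5(1)$. A derivative-sign certificate is needed anyway, and that is what the paper's argument supplies.
\end{itemize}

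The missing idea is an invariant-interval analysis of the one-step map $q_{\mathrm P}$. It reduces everything to a handful of scalar evaluations. Let $s_-\approx0.5545<s_+\approx1.0501$ be the positive critical points of $q_{\mathrm P}$. Set $\bar q_{\mathrm P}=q_{\mathrm P}(s_-)\approx1.2023$ and $\underline q_{\mathrm P}=q_{\mathrm P}(s_+)\approx0.6818$.

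\emph{The value of $q_{\max}$.} This sharpens your own check that $q_{\mathrm P}$ maps $[0,1.21]$ into itself. The interval $[0,\bar q_{\mathrm P}]$ is invariant because $q_{\mathrm P}(\bar q_{\mathrm P})\approx0.9465<\bar q_{\mathrm P}$. Since $q_{\mathrm P}(0)=0$ and $s_-\in[0,1]$, the intermediate value theorem gives $q_{\mathrm P}([0,1])=[0,\bar q_{\mathrm P}]$. Hence $q_{\mathrm P,j}([0,1])=[0,\bar q_{\mathrm P}]$ for every $j\ge1$, and $q_{\max}=\bar q_{\mathrm P}$ exactly.

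\emph{The value of $C_2$.} Two facts suffice.
\begin{itemize}
\item \emph{Orbit bound.} $p_{\mathrm P}$ is decreasing on $[0,\underline q_{\mathrm P}^2]$ with $p_{\mathrm P}(\underline q_{\mathrm P}^2)\approx1.66>1$, so $q_{\mathrm P}(s)\ge s$ on $[0,\underline q_{\mathrm P}]$. Also $[\underline q_{\mathrm P},\bar q_{\mathrm P}]$ is invariant. So each orbit is nondecreasing while it stays below $\underline q_{\mathrm P}$, and is trapped once it enters $[\underline q_{\mathrm P},\bar q_{\mathrm P}]$. This gives $q_{\mathrm P,5}(s)\ge\min\{s,\underline q_{\mathrm P}\}$, and hence $q_{\mathrm P,5}(s)\ge q_{\mathrm P,5}(1)s$ on $[0,s_\star]$ with $s_\star=\underline q_{\mathrm P}/q_{\mathrm P,5}(1)\approx0.979$. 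In particular $\chi_5\ge1$ on $(0,\underline q_{\mathrm P}]$, which is the quantitative replacement for your small-$s$ heuristic.
\item \emph{Monotonicity near $1$.} Propagate $[s_\star,1]$ through four NS steps. Then $q_{\mathrm P,j}(s)$ lies in $(s_-,s_+)$ for $j=0,1,3$ and in $(s_+,\infty)$ for $j=2,4$. So $q_{\mathrm P,5}'(s)=\prod_{j=0}^4q_{\mathrm P}'(q_{\mathrm P,j}(s))$ has three negative and two positive factors. Thus $q_{\mathrm P,5}$ is decreasing on $[s_\star,1]$, and $q_{\mathrm P,5}(s)\ge q_{\mathrm P,5}(1)\ge q_{\mathrm P,5}(1)s$ there.
\end{itemize}
Together these give $C_2=q_{\mathrm P,5}(1)$ exactly, using only a few one-variable evaluations rather than a global certification over $[0,1]$.
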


\begin{proof}
{\bf (i)}
The proof of Proposition~\ref{prop:dynamic-regularization}
ensures that $\chi_5$, $C_1$, $C_2$, and $q_{\max}$ are
well defined. Their definitions directly give
\begin{align*}
    C_2s\le q_5(s)\le C_1s,
    \qquad
    0\le q_5(s)\le q_{\max},
    \qquad s\in[0,1].
\end{align*}
We first establish the bounds in Lemma~\ref{lem:ns} and then
determine the sharp constants $C_1$, $C_2$, and $q_{\max}$ for both practical and standard NS.
By Lemma~\ref{lem:ns-polynomials},
$D_t=U\operatorname{diag}(q_5(s_{i,t}),\ldots,q_5(s_{r_t,t})V^\top$.
Then \eqref{eq:both} yields
\[
 \langle M_t,D_t \rangle_F
 =\tau_t\sum\nolimits_i s_{i,t}q_5(s_{i,t})
 \ge C_2\tau_t\sum\nolimits_i s_{i,t}^2= C_2\nf{M_t}^2/\tau_t,
\]
where
 $\nf{M_t}^2/\tau_t\ge \nf{M_t}-\varepsilon/4$, since for $\nf{M_t}\le\varepsilon$ the inequality reduces to
$(\nf{M_t}-\varepsilon/2)^2/\varepsilon\ge0$, whereas for $\nf{M_t}\ge\varepsilon$, the left-hand side equals $\nf{M_t}$.

Moreover, \eqref{eq:both} together with $\sum_i s_{i,t}^2=\nf{M_t}^2/\tau_t^2\le1$ gives
\[
 \nf{D_t}=\sqrt{\sum\nolimits_iq_5(s_{i,t})^2}
 \le\min\Big\{C_1\sqrt{\sum\nolimits_i s_{i,t}^2},
 q_{\max}\sqrt{r}\Big\}
 \le C_3.
\]

{\bf (ii)}
We then determine the sharp constants $C_1$, $C_2$, and $q_{\max}$, which are illustrated in Figure~\ref{sharp:constant}.
For practical and standard NS, respectively, we denote $\chi_5$
by $\chi_{\mathrm P,5}$ and $\chi_{\mathrm S,5}^{(k)}$, and
$q_{\max}$ by $q_{\mathrm P,\max}$ and $q_{\mathrm S,\max}^{(k)}$.
We first consider the practical NS and establish the following
bounds with both constants being sharp:
\begin{align}
    q_{\mathrm P,5}(1)s
    \le q_{\mathrm P,5}(s)
    \le a_{\mathrm P}^5s,
    \qquad \forall s\in[0,1]. \nonumber
\end{align}
Hence $C_2=q_{\mathrm P,5}(1)=0.6964\ldots$ and $C_1=a_{\mathrm P}^5=484.8762\ldots$.

\begin{figure}[H]
    \centering
    \includegraphics[width=1\linewidth]{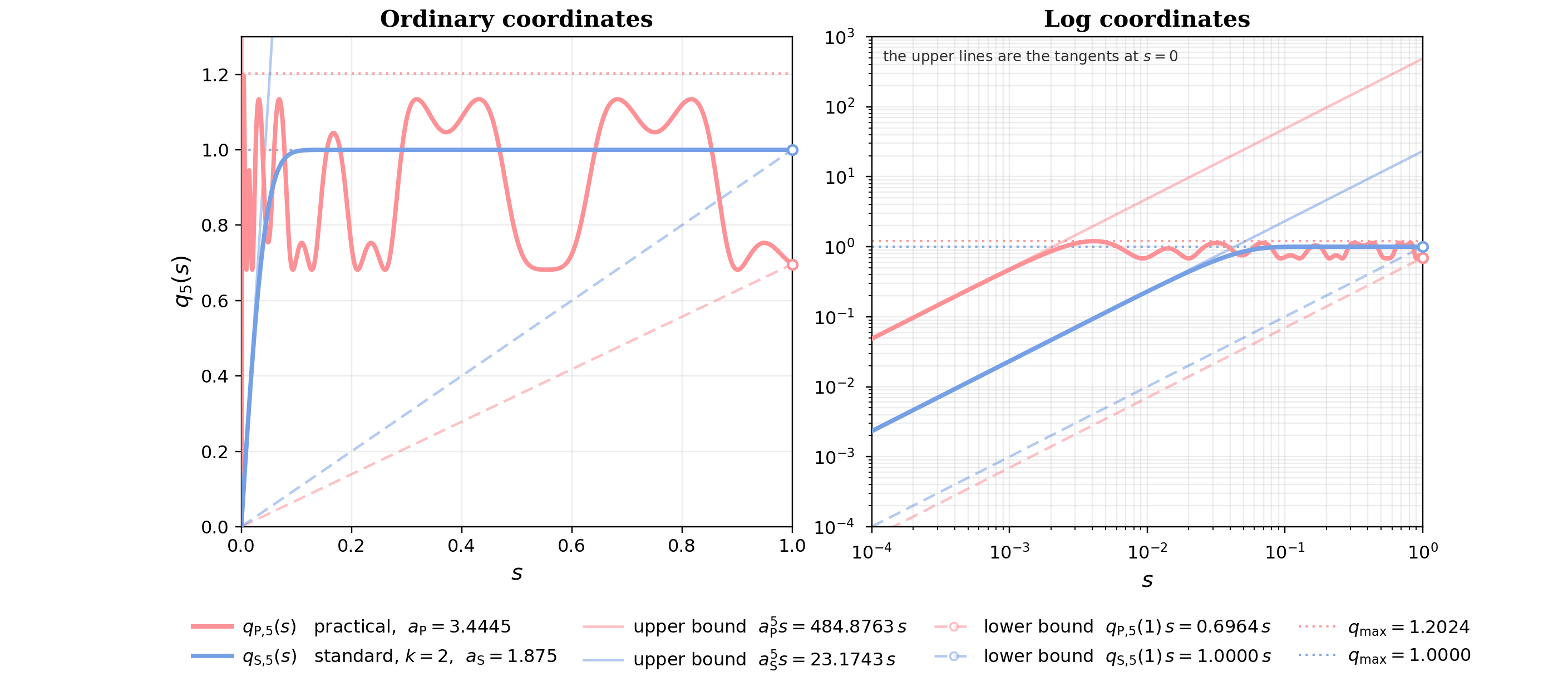}
    \caption{
        Sharp bounds for five-step Newton--Schulz maps.
    }
    \label{sharp:constant}
\end{figure}
The two positive roots of
$q_{\mathrm P}'(s)=a_{\mathrm P}+3b_{\mathrm P}s^2
+5c_{\mathrm P}s^4$ are
\[
s_-\hspace{-0.1em}=\hspace{-0.1em}
\sqrt{\frac{-3b_{\mathrm P}-\sqrt{9b_{\mathrm P}^2-20a_{\mathrm P}c_{\mathrm P}}}
{10c_{\mathrm P}}}\hspace{-0.1em}=\hspace{-0.1em}
0.5545\ldots,
~s_+\hspace{-0.1em}=\hspace{-0.1em}
\sqrt{\frac{-3b_{\mathrm P}+\sqrt{9b_{\mathrm P}^2-20a_{\mathrm P}c_{\mathrm P}}}
{10c_{\mathrm P}}}
\hspace{-0.1em}=\hspace{-0.1em}1.0501\ldots.
\]
We have $\bar{q}_{\mathrm P}=\max_{s\in[0,1]}q_{\mathrm P}(s)=q_{\mathrm P}(s_-)=1.2023\ldots$, and 
denote $\underline q_{\mathrm P}=q_{\mathrm P}(s_+)=0.6818\ldots$.
The monotonicity intervals of $q_{\mathrm P}$, together with
$q_{\mathrm P}(\bar{q}_{\mathrm P})=0.9465\ldots<\bar{q}_{\mathrm P}$, gives
\[
 q_{\mathrm P}([0,\bar{q}_{\mathrm P}])\subset[0,\bar{q}_{\mathrm P}],
 \qquad
 q_{\mathrm P}([\underline q_{\mathrm P},\bar{q}_{\mathrm P}])
 \subset[\underline q_{\mathrm P},\bar{q}_{\mathrm P}].
\]
Since $q_{\mathrm P}$ is continuous, $q_{\mathrm P}(0)=0$,
and $q_{\mathrm P}(s_-)=\bar q_{\mathrm P}$ with
$s_-\in[0,1]\subset[0,\bar q_{\mathrm P}]$,
the intermediate value theorem \citep{rudin2021principles} and the invariance of
$[0,\bar q_{\mathrm P}]$ give
\[
    q_{\mathrm P}([0,1])
    =q_{\mathrm P}([0,\bar q_{\mathrm P}])
    =[0,\bar q_{\mathrm P}].
\]
By induction,
$q_{\mathrm P,j}([0,1])=[0,\bar q_{\mathrm P}]$
for $j=1,\ldots,5$. Therefore,
\[
    q_{\mathrm{P},\max}
    =\max\nolimits_{s\in[0,1]}q_{\mathrm P,5}(s)
    =\bar q_{\mathrm P}=1.2023\ldots.
\]
Recall that $p_{\mathrm P}(s)=a_{\mathrm P}+b_{\mathrm P}s+c_{\mathrm P}s^2$ and 
$q_{\mathrm P}(s)=sp_{\mathrm P}(s^2)$.
A direct calculation shows that $p_{\mathrm P}$ is decreasing on $[0,\underline q_{\mathrm P}^2]$ and
$
 p_{\mathrm P}(\underline q_{\mathrm P}^2)
 =1.6636\ldots>1,
$
so $q_{\mathrm P}(s)\ge s$ for $s\in[0,\underline q_{\mathrm P}]$.
Consequently, considering the process $q_{\mathrm P,0}(s)=s$ and $q_{\mathrm P,j+1}(s)=q_{\mathrm P}(q_{\mathrm P,j}(s))=q_{\mathrm P,j}(s)p(q_{\mathrm P,j}(s)^2)$ for $j=0,\ldots,4$,
every orbit starting from $s\in[0,1]$ either remains below
$\underline q_{\mathrm P}$ and is
nondecreasing, or enters the invariant interval
$[\underline q_{\mathrm P},\bar q_{\mathrm P}]$. Hence
\begin{equation}
 q_{\mathrm P,5}(s)\ge\min\{s,\underline q_{\mathrm P}\},
 \qquad 0\le s\le1.      \nonumber
\end{equation}

Set
$
 s_\star=\underline q_{\mathrm P}/q_{\mathrm P,5}(1)
 =0.9790\ldots .
$
For $s\in [0, s_\star]$,  
$
 q_{\mathrm P,5}(s)\ge\min\{s,\underline q_{\mathrm P}\}\ge q_{\mathrm P,5}(1)s.$
For $s\in[s_\star,1]$, we first analyse the monotonicity of
$q_{\mathrm P,5}$.
The following array tracks
$q_{\mathrm P,j}(s)$ after $j$ NS steps, with
$q_{\mathrm P,0}(s)=s$.
\[
\begin{array}{c|c|c|c}
\text{NS steps }j
& \text{Iterate}
& \text{Enclosing interval }I_j
& I_j\subset \\ \hline
0 & s
  & [0.9790,\,1.0000] & (s_-,s_+)\\
1 & q_{\mathrm P,1}(s)
  & [0.7010,\,0.7187] & (s_-,s_+)\\
2 & q_{\mathrm P,2}(s)
  & [1.0924,\,1.1137] & (s_+,+\infty)\\
3 & q_{\mathrm P,3}(s)
  & [0.6983,\,0.7209] & (s_-,s_+)\\
4 & q_{\mathrm P,4}(s)
  & [1.0897,\,1.1167] & (s_+,+\infty)
\end{array}
\]
These enclosures follow successively from
$[s_\star,1]\subseteq I_0$ and
$q_{\mathrm P}(I_j)\subseteq I_{j+1}$ for $j=0,\ldots,3$,
verified using the endpoint values and the monotonicity
of $q_{\mathrm P}$ on each $I_j$.
We know that
$q_{\mathrm P}'(q_{\mathrm P,j}(s))<0$ for $j=0,1,3$
and $q_{\mathrm P}'(q_{\mathrm P,j}(s))>0$ for $j=2,4$.
Thus the chain-rule product contains three negative factors
and two positive factors, and
$
 q_{\mathrm P,5}'(s)
 =\prod\nolimits_{j=0}^4 q_{\mathrm P}'(q_{\mathrm P,j}(s))<0.
$
Therefore, for $s\in [0, s_\star]$,  
$
 q_{\mathrm P,5}(s)
 \ge q_{\mathrm P,5}(1)
 \ge q_{\mathrm P,5}(1)s.
$
Together, we obtain
\begin{equation}
 q_{\mathrm P,5}(1)s\le q_{\mathrm P,5}(s),\qquad 0\le s\le1. \nonumber
\end{equation}
Since equality holds at $x=1$, $C_2=q_{\mathrm P,5}(1)$ is sharp.

Note that $\bar q_{\mathrm P}^2<-b_{\mathrm P}/c_{\mathrm P}$, invariance of
$[0,\bar q_{\mathrm P}]$ gives
$p_{\mathrm P}(q_{\mathrm P,j}(s)^2)\le p_{\mathrm P}(0)=a_{\mathrm P}$. Thus
\begin{align}
 q_{\mathrm P,5}(s)
 &=s\prod\nolimits_{j=0}^4
 p_{\mathrm P}(q_{\mathrm P,j}(s)^2)
 \le a_{\mathrm P}^5s, \nonumber
\end{align}
and
$C_1=\max_{s\in[0,1]}\chi_{\mathrm P,5}(s)=\chi_{\mathrm P,5}(0)=a_{\mathrm P}^5=484.8762\ldots$.

{\bf (iii)} We next consider the standard NS.
The monotonicity of $q_{\mathrm S}^{(k)}$, together with
$q_{\mathrm S}^{(k)}(0)=0$, and $q_{\mathrm S}^{(k)}(1)=1$  in Lemma~\ref{lem:ns-polynomials}  gives
$q_{\mathrm S}^{(k)}([0,1])=[0,1]$.
By induction, 
$q_{\mathrm S,j}^{(k)}([0,1])=[0,1]$ and
$q_{\mathrm S,\max}^{(k)}=1$.
Moreover, the definition $p_{\mathrm S}^{(k)}(s)
    =1+\sum_{\ell=1}^{k}
    \frac{(2\ell)!}{4^\ell(\ell!)^2}(1-s)^\ell$ yields
\[
    1\le p_{\mathrm S}(s)\le p_{\mathrm S}(0)=a,
    \qquad s\in[0,1].
\]
Thus,
\begin{align}    
    s\le q_{\mathrm S,5}(s)
    =s\prod\nolimits_{j=0}^{4}
    p_{\mathrm S}\bigl(q_{\mathrm S,j}(s)^2\bigr)\le\min\{a^5s,1\},
    \qquad s\in[0,1], \label{both:standard}
\end{align}
which implies $C_2=1$ since $q_{\mathrm S,5}(1)=1$, and $C_1=\max_{s\in[0,1]}\chi_{\mathrm S,5}^{(k)}(s)=\chi_{\mathrm S,5}^{(k)}(0)=a^5$.
\end{proof}

We next prove Lemma~\ref{lem:tracking}, which bounds the tracking error of the Nesterov-style momentum.

\begin{lemtrackingrestated}[Momentum tracking]
Under the assumptions and parameter choices of Theorem~\ref{thm:main},
\begin{align}
 \frac1T\hspace{-0.1em}\sum\nolimits_{t=0}^{T-1}\hspace{-0.1em}\E\nf{M_t\hspace{-0.1em}-\hspace{-0.1em}\nabla f(W_t)}
 \le{}&\frac{\nf{B_{-1}\hspace{-0.1em}-\hspace{-0.1em}\nabla f(W_0)}}{c_2}T^{-1/2}
 \hspace{-0.1em}+\hspace{-0.1em}\Big(
 \frac{Lc_1 C_4}{c_2}
 \hspace{-0.1em}+\hspace{-0.1em}\sqrt{2c_2}\,\frac{\sigma}{\sqrt{b_{\mathrm{mb}}}}
 \Big)T^{-1/4}\hspace{-0.1em}.                    \nonumber                        
\end{align}                           
\end{lemtrackingrestated}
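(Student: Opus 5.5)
The plan is to analyze the EMA buffer $B_t$ first and then transfer the bound to the Nesterov combination $M_t$. Write $\alpha=1-\beta=c_2T^{-1/2}\in(0,1]$; this holds by $T\ge c_2^2$. Let $\nabla_t=\nabla f(W_t)$ and $\zeta_t=G_t-\nabla_t$. The noise $\zeta_t$ is a martingale difference sequence with $\E\nf{\zeta_t}^2\le\sigma^2/b_{\mathrm{mb}}$ by Assumption~\ref{ass:noise}. Define $E_t=B_t-\nabla_t$. From $B_t=\beta B_{t-1}+\alpha G_t$ we get the recursion
\[
E_t=\beta E_{t-1}+\beta(\nabla_{t-1}-\nabla_t)+\alpha\zeta_t ,
\]
with the convention $\nabla_{-1}:=\nabla_0$, so that $E_{-1}=B_{-1}-\nabla_0$. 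Unrolling gives three pieces:
\[
E_t=\beta^{t+1}E_{-1}+\sum_{s\le t}\beta^{t-s+1}(\nabla_{s-1}-\nabla_s)+\alpha\sum_{s\le t}\beta^{t-s}\zeta_s .
\]
We bound each piece in expected Frobenius norm separately.

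The first piece is the initial bias. Its average over $t$ is at most $\frac{1}{T}\sum_t\beta^{t+1}\nf{E_{-1}}\le \nf{E_{-1}}/(\alpha T)=\nf{E_{-1}}T^{-1/2}/c_2$. This gives the first term of the lemma.

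The second piece is the drift. By Assumption~\ref{ass:smooth} and \eqref{eq:direction-control}, $\nf{\nabla_{s-1}-\nabla_s}\le L\eta\nf{H_{s-1}}\le L\eta C_4$. Hence the drift is at most $L\eta C_4\sum_k\beta^k\le L\eta C_4/\alpha = Lc_1C_4T^{-1/4}/c_2$.

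The third piece is the noise. By Jensen's inequality and orthogonality of martingale differences,
\[
\E\Bigl\|\alpha\sum_s\beta^{t-s}\zeta_s\Bigr\|_F\le \alpha\Bigl(\sum_k\beta^{2k}\Bigr)^{1/2}\sigma/\sqrt{b_{\mathrm{mb}}}\le\sqrt{\alpha/(1+\beta)}\,\sigma/\sqrt{b_{\mathrm{mb}}}\le \sqrt{\alpha}\,\sigma/\sqrt{b_{\mathrm{mb}}} .
\]
Here $\sqrt{\alpha}=\sqrt{c_2}T^{-1/4}$.

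Next we pass to $M_t$. Since $M_t=\beta B_t+\alpha G_t$, we have
\[
M_t-\nabla_t=\beta E_t+\alpha\zeta_t .
\]
Substituting the unrolled form shows that $M_t-\nabla_t$ is again a bias term plus a drift term plus a weighted sum of the $\zeta_s$. The bias and drift are each multiplied by $\beta\le1$, so the same bounds apply. The noise weights are $\alpha\beta^{t-s+1}$ for $s\le t$, plus an extra $\alpha$ on $\zeta_t$. We bound the sum of their squares by $\alpha^2\bigl(1+\sum_k\beta^{2k}\bigr)\le\alpha^2+\alpha\le2\alpha$. This is the source of the factor $\sqrt{2c_2}$. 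Averaging over $t$ then yields exactly the claimed three terms.

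The main subtlety I expect is the noise step. One must keep the noise as a single weighted martingale sum and use second moments, rather than apply the triangle inequality to each $\zeta_s$. The triangle inequality would lose the $\sqrt{\alpha}$ gain and spoil the $T^{-1/4}$ rate. Orthogonality holds because $\zeta_s$ is conditionally mean zero given the history up to $W_s$, while the weights are deterministic. A smaller point is that the drift bound uses $\nf{H_t}\le C_4$ for all $t\le T$, which \eqref{eq:direction-control} already provides.
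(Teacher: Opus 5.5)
Your argument is essentially the paper's. The paper unrolls $M_t$ directly instead of passing through $E_t=B_t-\nabla f(W_t)$, but it reaches the same decomposition of $M_t-\nabla f(W_t)$ into three pieces: the bias $\beta^{t+2}(B_{-1}-\nabla f(W_0))$, the drift $-\sum_{j=1}^t\beta^{t-j+2}(\nabla f(W_j)-\nabla f(W_{j-1}))$, and a weighted noise sum. It bounds them exactly as you do: geometric series for the first two, and second moments plus Cauchy--Schwarz for the third.

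One step needs fixing. In $\beta E_t+\alpha\zeta_t$ the two contributions to $\zeta_t$ add, so its coefficient is $\alpha\beta+\alpha=1-\beta^2$. Its square is therefore $(1-\beta^2)^2=\alpha^2(1+2\beta+\beta^2)$, not $\alpha^2(1+\beta^2)$: you summed the squares of the two contributions separately and dropped the cross term $2\alpha^2\beta$. As a result your intermediate bound $\alpha^2\bigl(1+\sum_k\beta^{2k}\bigr)$ can fail, for example when $\beta>1/2$ and $t$ is large, which is exactly the regime $\beta=1-c_2T^{-1/2}$ of interest. The conclusion nevertheless survives, because the correct sum of squares satisfies
\[
(1-\beta^2)^2+\alpha^2\sum\nolimits_{k=2}^{t+1}\beta^{2k}\le\frac{\alpha(1+2\beta-2\beta^3)}{1+\beta}\le 2\alpha .
\]
This is precisely the computation carried out in the paper. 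With it, the $\sqrt{2c_2}\,\sigma/\sqrt{b_{\mathrm{mb}}}$ term and the rest of the lemma go through unchanged.
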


\begin{proof}
Unrolling the two momentum recursions defining $B_t$ and $M_t$ in
Algorithm~\ref{alg:muon} gives
\begin{align}
 B_t&=\beta^{t+1}B_{-1}
 +(1-\beta)\sum\nolimits_{j=0}^t\beta^{t-j}G_j, \notag\\
 M_t&=\beta^{t+2}B_{-1}
 +(1-\beta)\sum\nolimits_{j=0}^{t-1}\beta^{t-j+1}G_j
 +(1-\beta^2)G_t. 
                                          \label{eq:momentum-expanded}
\end{align}
Subtracting $\nabla f(W_t)$ from \eqref{eq:momentum-expanded}, 
using $G_j=\nabla f(W_j)+G_j-\nabla f(W_j)$, and the reindexing identity
\begin{align*}
&(1-\beta)\sum_{j=0}^{t-1}\nolimits\beta^{t-j+1}\nabla  f(W_j)-\beta^2\nabla  f(W_t)  \\
&=
\sum_{j=0}^{t-1}\nolimits\beta^{t-j+1}\nabla  f(W_j)
-\sum_{j=0}^{t-1}\nolimits\beta^{t-j+2}\nabla  f(W_j)
-\beta^2\nabla  f(W_t) \\
&=
\sum_{j=1}^{t}\nolimits\beta^{t-j+2}\nabla f(W_{j-1})
-\beta^{t+2}\nabla f(W_0)
-\sum_{j=1}^{t}\nolimits\beta^{t-j+2}\nabla f(W_j)  \\
&=
-\beta^{t+2}\nabla f(W_0)
-\sum_{j=1}^{t}\nolimits\beta^{t-j+2}
\bigl(\nabla f(W_j)-\nabla f(W_{j-1})\bigr),
\end{align*}
 we obtain
\begin{align}
 e_t={}&\beta^{t+2}\bigl(B_{-1}-\nabla f(W_0)\bigr)  -\sum\nolimits_{j=1}^t\beta^{t-j+2}
 \bigl(\nabla f(W_j)-\nabla f(W_{j-1})\bigr)+N_t,
                                                        \label{eq:tracking-identity}\\
 N_t={}&(1-\beta^2)\bigl(G_t-\nabla f(W_t)\bigr)  
 +(1-\beta)\sum_{j=0}^{t-1}\nolimits\beta^{t-j+1}
 \bigl(G_j-\nabla f(W_j)\bigr).                          \notag
\end{align}
Assumption~\ref{ass:smooth} and \eqref{eq:direction-control} imply
$\nf{\nabla f(W_j)-\nabla f(W_{j-1})}\le L\eta C_4$.
Thus the triangle inequality and the
geometric-series bounds in \eqref{eq:tracking-identity} give
\begin{equation}
 \frac1T\sum_{t=0}^{T-1}\nolimits\E\nf{e_t}
 \le\frac{\nf{B_{-1}-\nabla f(W_0)}}{T(1-\beta)}
 +\frac{L\eta C_4}{1-\beta}
 +\frac1T\sum_{t=0}^{T-1}\nolimits\E\nf{N_t}.                     \label{eq:tracking-triangle}
\end{equation}
By independent sampling and Assumption~\ref{ass:noise}, we have
\begin{align*}
 \E\hspace{-0.1em}\nf{N_t}^2
 \hspace{-0.1em}\le\hspace{-0.1em}\frac{\sigma^2}{b_{\mathrm{mb}}}
 \Big[(1\hspace{-0.1em}-\hspace{-0.1em}\beta^2)^2\hspace{-0.1em}+\hspace{-0.1em}
 (1\hspace{-0.1em}-\hspace{-0.1em}\beta)^2\sum_{j=0}^{t-1}\beta^{2(t-j+1)}\Big] 
 \hspace{-0.2em}\le\hspace{-0.1em}\frac{\sigma^2}{b_{\mathrm{mb}}}
 \frac{(1-\beta)(1+2\beta-2\beta^3)}{1+\beta}
 \hspace{-0.1em}\le\hspace{-0.1em}\frac{2(1\hspace{-0.1em}-\hspace{-0.1em}\beta)\sigma^2}{b_{\mathrm{mb}}}\hspace{-0.15em}.
\end{align*}
By the Cauchy--Schwarz inequality,
\[
\frac{1}{T}\sum_{t=0}^{T-1}\nolimits\E\nf{N_t}
\le
\Big(
\frac{1}{T}\sum_{t=0}^{T-1}\nolimits\E\nf{N_t}^2
\Big)^{1/2}
\le \frac{\sigma}{\sqrt{b_{\mathrm{mb}}}}
\sqrt{2(1-\beta)}.
\]
Substituting this bound, 
$\eta=c_1T^{-3/4}$, and $(1-\beta)=c_2T^{-1/2}$ into \eqref{eq:tracking-triangle} yields
the desired result.
\end{proof}




Finally, we prove Proposition~\ref{prop:two-level}, which illustrates the
optimization advantage of practical NS over standard NS on the ill-conditioned
quadratic.

\begin{repproposition}[Ill-conditioned quadratic]
Let $h,\ell,\kappa\in\mathbb N$, with $\kappa\gg1$, $\ell=h\kappa$, and $r=h+\ell$.
Define
\[
Q=\operatorname{diag}(\kappa I_h,I_\ell),\qquad
f(W)=\frac12\operatorname{tr}(W^\top QW),\qquad
W_0=I_r.
\]
The high-curvature block accounts for most of the gradient energy,
while each block contributes half of the objective value.
We consider practical NS or standard NS with $k=2$, and normalize
without an $\varepsilon$-stabilizer, stopping when the gradient vanishes.

\textbf{(i) One-step contraction.} Denote 
$
\tau_0\hspace{-0.1em}=\hspace{-0.1em}\nf{\nabla f(W_0)\hspace{-0.1em}}=\hspace{-0.1em}\sqrt{h\kappa(\kappa+1)},~
u=q_5(\kappa/\tau_0),~v=q_5(1/\tau_0).
$
For deterministic Muon without momentum or WD, the respective optimal stepsizes minimizing $f(W_1)$ for the two NS choices yield
\begin{equation}
\frac{f(W_1)}{f(W_0)}
=\frac{h\kappa\ell(u-v)^2}
{(h\kappa+\ell)(h\kappa u^2+\ell v^2)}=
\frac{(u-v)^2}{2(u^2+v^2)}
\le
\left(\frac{u-v}{u+v}\right)^2.  \nonumber
\end{equation}

Consider the example with $h=1$ and $\ell=\kappa=100$, five practical and standard NS
iterations give $(u,v)\approx(0.7020,0.6968)$ and
$(1,0.2274)$, respectively, leaving $0.0014\%$ and $28.3789\%$
of $f(W_0)$ after the one-step update.

For fixed $h$ and $\kappa\to\infty$, substituting
$u=q_5(1/\sqrt h)+\mathcal O(\kappa^{-1})$ and
$v=a^5/(\kappa\sqrt h)+\mathcal O(\kappa^{-2})$ gives
\[
\frac{f(W_1)}{f(W_0)}
=\frac12-\frac{a^5}{\kappa\sqrt h\,q_5(1/\sqrt h)}
+\mathcal O(\kappa^{-2}).
\]
When $q_5(1/\sqrt h)$ is close to one for both NS
choices, the leading-order improvement is mainly governed by $a^5$,
which is approximately $484.8762$ for practical NS and $23.1743$
for standard NS. Thus, practical NS yields a substantially
stronger one-step contraction.

\textbf{(ii) Full optimization trajectory.}
Consider the example with $h=1$ and $\ell=\kappa=100$.
Let $W_t^{\mathrm P}$ and $W_t^{\mathrm S}$ denote the Muon iterates generated by practical and standard NS, respectively, 
using the optimal stepsizes at each iteration $t$ that minimize $f(W_{t+1}^{\mathrm P})$ and $f(W_{t+1}^{\mathrm S})$.
Then
\begin{equation}
\frac{f(W_t^{\mathrm P})}{f(W_0)}\le (6.4514\times10^{-5})^t,
\qquad
\frac{f(W_t^{\mathrm S})}{f(W_0)}\ge(0.1406)^t. \nonumber
\end{equation}
Further comparisons for different values of $h$ and $\kappa$
are provided in Table~\ref{tab:ns-one-step}.
\end{repproposition}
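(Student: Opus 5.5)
Because every iterate stays simultaneously diagonal in the same basis, the whole proposition reduces to a scalar computation on the two curvature levels. At $W_0=I_r$ the gradient is $\nabla f(W_0)=QW_0=Q=\operatorname{diag}(\kappa I_h,I_\ell)$. Its compact SVD has $U=V=I_r$ and singular values $\kappa$ (multiplicity $h$) and $1$ (multiplicity $\ell$), so $\tau_0=\nf{Q}=\sqrt{h\kappa^2+\ell}=\sqrt{h\kappa(\kappa+1)}$ using $\ell=h\kappa$. By Lemma~\ref{lem:ns-polynomials}(i), $D_0=\operatorname{diag}(uI_h,vI_\ell)$ with $u=q_5(\kappa/\tau_0)$ and $v=q_5(1/\tau_0)$. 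Then $W_1=\operatorname{diag}((1-\eta u)I_h,(1-\eta v)I_\ell)$ and
\[
f(W_1)=\tfrac12\bigl[h\kappa(1-\eta u)^2+\ell(1-\eta v)^2\bigr],
\]
a convex quadratic in $\eta$. Its minimizer is $\eta^\star=(h\kappa u+\ell v)/(h\kappa u^2+\ell v^2)$. Substituting and simplifying gives the Cauchy--Schwarz defect
\[
\frac{f(W_1)}{f(W_0)}=\frac{h\kappa\,\ell\,(u-v)^2}{(h\kappa+\ell)(h\kappa u^2+\ell v^2)},
\]
with $f(W_0)=\tfrac12(h\kappa+\ell)$. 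Setting $\ell=h\kappa$ yields $(u-v)^2/(2(u^2+v^2))$. The final bound holds because $2(u^2+v^2)\ge(u+v)^2$ and $u,v>0$ (positivity comes from the proof of Proposition~\ref{prop:dynamic-regularization}).

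For the numerical example ($h=1$, $\kappa=100$), I would evaluate $\kappa/\tau_0=\sqrt{100/101}$ and $1/\tau_0=1/\sqrt{10100}$. I would then iterate $q_{\mathrm P}$ and $q_{\mathrm S}^{(2)}$ five times, as explicit rational/interval arithmetic, to certify $(u,v)$ and the percentages. For the asymptotics with $h$ fixed and $\kappa\to\infty$:
\begin{itemize}
\item $\kappa/\tau_0=h^{-1/2}(1+1/\kappa)^{-1/2}=h^{-1/2}+\mathcal O(\kappa^{-1})$, so by smoothness of $q_5$, $u=q_5(1/\sqrt h)+\mathcal O(\kappa^{-1})$.
\item $1/\tau_0=1/(\kappa\sqrt h)+\mathcal O(\kappa^{-2})$, and $q_5(s)=a^5s+\mathcal O(s^3)$ since $q_5'(0)=\chi_5(0)=a^5$ and $q_5$ is odd, so $v=a^5/(\kappa\sqrt h)+\mathcal O(\kappa^{-2})$.
\end{itemize}
Expanding $(u-v)^2/(2(u^2+v^2))=\tfrac12\bigl(1-2uv/(u^2+v^2)\bigr)=\tfrac12-v/u+\mathcal O(v^2)$ gives the stated expansion.

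For part (ii), the iterates remain of the form $W_t=\operatorname{diag}(x_tI_h,y_tI_\ell)$, where positive diagonal entries may become negative, so singular values are absolute values. The gradient is $\operatorname{diag}(\kappa x_tI_h,y_tI_\ell)$, and by oddness of $q_5$ the Muon direction is $\operatorname{diag}(\operatorname{sign}(x_t)u_tI_h,\operatorname{sign}(y_t)v_tI_\ell)$. Here $u_t=q_5(\kappa|x_t|/\tau_t)$, $v_t=q_5(|y_t|/\tau_t)$, and $\tau_t=\sqrt{h\kappa^2x_t^2+\ell y_t^2}$. The per-step ratio depends only on the ratio $\rho_t=\kappa|x_t|/|y_t|$, so the analysis becomes a one-dimensional dynamical system.

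The optimal step gives $f(W_{t+1})/f(W_t)=$ the analogous Cauchy--Schwarz defect in $(x_t,y_t,u_t,v_t)$. The plan is to track the reachable set of $\rho_t$ (or of the normalized pair) along the trajectory and bound this ratio uniformly over it:
\begin{itemize}
\item For practical NS, bound the ratio above by $6.4514\times10^{-5}$.
\item For standard NS, bound it below by $0.1406$.
\end{itemize}
Multiplying the per-step ratios gives the geometric bounds.

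I expect this uniform per-step control along the trajectory to be the main obstacle, because the orbit of $\rho_t$ is not obviously confined. I would first compute the map $\rho_t\mapsto\rho_{t+1}$ explicitly and look for an invariant interval, or a short cycle, that it enters after the first step. I would then bound $q_5$ on the corresponding singular-value ranges using the enclosure technique from the proof of Lemma~\ref{lem:ns} (interval images under $q$ on monotone pieces). This certifies the two constants, with the standard-NS lower bound using the monotonicity and the bound $q_{\mathrm S,5}^{(2)}(s)\le\min\{a^5s,1\}$ from \eqref{both:standard}.
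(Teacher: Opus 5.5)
Your part (i) is the paper's argument: the block-diagonal reduction, the convex quadratic in $\eta$, the identity $(h\kappa+\ell)(h\kappa u^2+\ell v^2)-(h\kappa u+\ell v)^2=h\kappa\ell(u-v)^2$, and $q_5'(0)=a^5$ with $u$ bounded away from $0$ for the asymptotics. Your reduction of (ii) to a scalar recursion in $|x_t|/|y_t|$ also matches the paper.

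The gap is that (ii) stops exactly where its content lies. You call confinement of the orbit the main obstacle but never establish it, so neither $6.4514\times10^{-5}$ nor $0.1406$ is derived. The missing observation concerns the exact line search. The vector $(|x_t|-\eta^\star_t u_t,\,|y_t|-\eta^\star_t v_t)$ is orthogonal to $(u_t,v_t)$, hence a multiple of $(v_t,-u_t)$. With $z_t=|x_t|/|y_t|$ (your $\rho_t/\kappa$), this gives the closed recursion $z_{t+1}=\Psi(z_t):=v(z_t)/u(z_t)$. The per-step ratio is then $(1-z_t\Psi(z_t))^2/\bigl((1+z_t^2)(1+\Psi(z_t)^2)\bigr)$. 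Since $\Psi$ depends on $z$ only through two normalized singular values that barely move over the relevant range, an interval containing $z_0=1$ is invariant from the first step. No cycle search or burn-in is needed.

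To close the argument as the paper does, treat the two NS choices separately.

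\emph{Practical NS.} For $z\in[0.9920,1]$, interval propagation through the monotone pieces of $q_{\mathrm P}$ (your enclosure technique) gives $u\in[0.7020,0.7021]$ and $v\in[0.6967,0.7002]$. Hence $\Psi([0.9920,1])\subset[0.9920,1]$ and $z\Psi(z)\in[0.9920^2,1)$. Using $(1+z^2)(1+\Psi^2)\ge(1+z\Psi)^2$, the ratio is at most $\bigl((1-0.9920^2)/(1+0.9920^2)\bigr)^2\le 6.4514\times10^{-5}$. The stated constant is exactly this expression, so a looser enclosure such as $[0.99,1]$ (giving about $1.0\times10^{-4}$) would not prove the statement.

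\emph{Standard NS.} The invariant interval is $[1/5,1]$. The bound $\Psi\le1$ follows from monotonicity of $q^{(2)}_{\mathrm S,5}$, because $z\ge1/100$. The bound $\Psi\ge1/5$ requires a quantitative lower bound $q^{(2)}_{\mathrm S,5}(1/\sqrt{10100})>1/5$. The paper gets this from $q^{(2)}_{\mathrm S}(s)\ge\frac{475}{256}s$ on $[0,1/8]$; the value $v\approx0.2273$ from (i) also works. The linear bounds $s\le q^{(2)}_{\mathrm S,5}(s)\le\min\{a^5s,1\}$ alone cannot supply this: they give only $v\ge1/\sqrt{10100}<1/100$, which does not even keep $z_{t+1}\ge1/100$. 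With invariance in hand, $s\le q^{(2)}_{\mathrm S,5}(s)\le a_{\mathrm S}^5s$ yields $z\Psi(z)\le a_{\mathrm S}^5/100<1/4$ for every $z$. Together with $z,\Psi\le1$, which gives $(1+z^2)(1+\Psi^2)\le4$, each step retains at least $(3/4)^2/4=9/64>0.1406$ of the objective.
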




\begin{proof}
\noindent\textbf{(i) One-step contraction.}
The normalized gradient and the NS direction are
\[
\frac{\nabla f(W_0)}{\tau_0}
=\operatorname{diag}\left(
\frac{\kappa}{\tau_0}I_h,\frac1{\tau_0}I_\ell
\right),
\qquad
D_0=\operatorname{diag}(uI_h,vI_\ell).
\]
Hence,
\[
f(W_0-\eta D_0)
=\frac12\big(
h\kappa(1-\eta u)^2+\ell(1-\eta v)^2
\big),
\]
and the optimal stepsize minimizing $f(W_0-\eta D_0)$ is
\[
\eta_0^\star
=\frac{h\kappa u+\ell v}{h\kappa u^2+\ell v^2}.
\]
With \(\eta=\eta_0^\star\), we have
\begin{align*}
\frac{f(W_1)}{f(W_0)}
&=1-\frac{(h\kappa u+\ell v)^2}
{(h\kappa+\ell)(h\kappa u^2+\ell v^2)}
=\frac{h\kappa\ell(u-v)^2}
{(h\kappa+\ell)(h\kappa u^2+\ell v^2)}
=\frac{(u-v)^2}{2(u^2+v^2)}
\le\left(\frac{u-v}{u+v}\right)^2,
\end{align*}
where the last two steps use $\ell=h\kappa$ and
$(u+v)^2\le2(u^2+v^2)$.

For $h=1$ and $\ell=\kappa=100$, direct evaluation of the
five NS iterations gives
\[
(u_{\mathrm P},v_{\mathrm P})
\approx(0.7020\ldots,0.6967\ldots),
\qquad
(u_{\mathrm S},v_{\mathrm S})
\approx(1,0.2273\ldots).
\]
Hence, the values of $f(W_1)/f(W_0)$ are approximately
$1.4118\times10^{-5}$ and $0.2837\ldots$ for practical and standard NS,
respectively, which gives the stated percentages.

Next, fix $h$ and let $\kappa\to\infty$ with
$\ell=h\kappa\in\mathbb N$. We have
\[
\frac{\kappa}{\tau_0}
=\frac1{\sqrt h}(1+\kappa^{-1})^{-1/2}
=\frac1{\sqrt h}+\mathcal O(\kappa^{-1}),
\qquad
\frac1{\tau_0}=\frac1{\kappa\sqrt h}(1+\kappa^{-1})^{-1/2}
=\frac1{\kappa\sqrt h}+\mathcal O(\kappa^{-2}).
\]
Since $q(s)=as+bs^3+cs^5$, its fivefold composition 
satisfies $q_5'(0)=a^5$. Consequently,
\[
u=q_5(1/\sqrt h)+\mathcal O(\kappa^{-1}),
\qquad
v=\frac{a^5}{\kappa\sqrt h}+\mathcal O(\kappa^{-2}),
\]
and, because 
$u>0$,
\[
\frac vu
=\frac{a^5}{\kappa\sqrt h\,q_5(1/\sqrt h)}
+\mathcal O(\kappa^{-2}).
\]
Using $(1-z)/(1+z)=1-2z+\mathcal O(z^2)$ as $z\to0$ yields
\[
\frac{f(W_1)}{f(W_0)}
\le
\left(
1-\frac{2a^5}{\kappa\sqrt h\,q_5(1/\sqrt h)}
\right)^2
+\mathcal O(\kappa^{-2}).
\]
In fact, the exact contraction formula also gives
\[
\frac{f(W_1)}{f(W_0)}
=\frac12-\frac{v/u}{1+(v/u)^2}
=\frac12-
\frac{a^5}{\kappa\sqrt h\,q_5(1/\sqrt h)}
+\mathcal O(\kappa^{-2}).
\]
When $q_5(1/\sqrt h)$ is close to one,
 the leading-order comparison is governed by
$a^5$, with
$a_{\mathrm P}^5\approx484.8762$ and
$a_{\mathrm S}^5\approx23.1743$.

\medskip
\noindent\textbf{(ii) Full optimization trajectory.}
Set $h=1$ and $\ell=\kappa=100$.
The block structure is preserved throughout the iterations, so write
\[
W_t=\operatorname{diag}(x_t,y_tI_{100}),
\qquad x_0=y_0=1.
\]
Then
\[
f(W_t)=50(x_t^2+y_t^2),
\qquad
\nabla f(W_t)=\operatorname{diag}(100x_t,y_tI_{100}).
\]
Whenever $x_t,y_t\ne0$, define
\[
z_t=\frac{|x_t|}{|y_t|},
\quad
u(z)=q_5\left(\frac{100z}{\sqrt{10^4z^2+100}}\right),
\quad
v(z)=q_5\left(\frac1{\sqrt{10^4z^2+100}}\right),
\quad
\Psi(z)=\frac{v(z)}{u(z)}.
\]
With $u_t=u(z_t)$ and $v_t=v(z_t)$, the NS direction is
\[
D_t=\operatorname{diag}
\bigl(\operatorname{sgn}(x_t)u_t,
      \operatorname{sgn}(y_t)v_tI_{100}\bigr),
\]
 where $\operatorname{sgn}(\cdot)$ denotes the sign function.
Since $x_t=\operatorname{sgn}(x_t)|x_t|$ and
$y_t=\operatorname{sgn}(y_t)|y_t|$,  we have
\[
f(W_t-\eta D_t)
=50\big(
(|x_t|-\eta u_t)^2+(|y_t|-\eta v_t)^2
\big)
\]
Differentiating with respect to $\eta$ gives
\[
\frac{\mathrm d}{\mathrm d\eta}f(W_t-\eta D_t)
=100\big(
\eta(u_t^2+v_t^2)-|x_t|u_t-|y_t|v_t
\big),~
\eta_t^\star
=\frac{|x_t|u_t+|y_t|v_t}{u_t^2+v_t^2}.
\]

Substitution into the update gives
\begin{align*}
x_{t+1}
&=\operatorname{sgn}(x_t)
\frac{v_t(|x_t|v_t-|y_t|u_t)}{u_t^2+v_t^2},\\
y_{t+1}
&=\operatorname{sgn}(y_t)
\frac{u_t(|y_t|u_t-|x_t|v_t)}{u_t^2+v_t^2}.
\end{align*}

If $|x_t|v_t-|y_t|u_t=0$, both new blocks vanish and the
algorithm reaches the minimizer.
Otherwise,  
\begin{align}
    z_{t+1}
&=\frac{|x_{t+1}|}{|y_{t+1}|}
=\frac{v_t}{u_t}
=\Psi(z_t),  \nonumber\\
\frac{f(W_{t+1})}{f(W_t)}
&=\frac{x_{t+1}^2+y_{t+1}^2}{x_t^2+y_t^2}
=\frac{(u_t^2+v_t^2)(|x_t|v_t-|y_t|u_t)^2}
{(x_t^2+y_t^2)(u_t^2+v_t^2)^2} \nonumber\\
&=\frac{(|x_t|v_t-|y_t|u_t)^2}
{(x_t^2+y_t^2)(u_t^2+v_t^2)}
=\frac{|y_t|^2u_t^2
(z_t\frac{v_t}{u_t}-1)^2}
{|y_t|^2(1+z_t^2)\,
u_t^2(1+(\frac{v_t}{u_t})^2)} \nonumber\\
&=\frac{(1-z_t\Psi(z_t))^2}
{(1+z_t^2)(1+\Psi(z_t)^2)}. \label{eq:two-level-recursion}
\end{align}

We next find an interval $J$ containing $z_0=1$ such that
$\Psi(J)\subset J$.  
We also bound $z\Psi(z)$ on $J$ to derive the convergence bounds
for the objective value.

 \medskip
\noindent\emph{1) Practical NS.}
We show that $z_t\in[0.9920,1]$ for all $t$, which keeps the
numerator in \eqref{eq:two-level-recursion} small.

Suppose $z\in[0.9920,1]$. The normalized singular values satisfy
\[
\frac{100z}{\sqrt{10^4z^2+100}}
\in[0.9949\ldots,0.9950\ldots],
\qquad
\frac1{\sqrt{10^4z^2+100}}
\in[0.0099\ldots,0.0100\ldots].\]
On these two intervals, $q_{\mathrm P,5}$ is decreasing and
increasing, respectively, as verified by direct interval propagation
through the five NS steps, following the same argument as in
the proof of Lemma~\ref{lem:ns}.
A direct calculation gives
\[
0.7020\ldots\le u(z)\le0.7021\ldots,
\qquad
0.6967\ldots\le v(z)\le0.7002\ldots.
\]
Consequently,
\[
0.9920
<
\frac{0.6967\ldots}{0.7021\ldots}
\le\Psi(z)
\le\frac{0.7002\ldots}{0.7020\ldots}
<1.
\]
Since $z_0=1$ and $z_{t+1}=\Psi(z_t)$, induction
gives $z_t\in[0.9920,1]$ for every $t$.
In particular,
\[
0.9920^2\le z_t\Psi(z_t)<1.
\]
Applying \eqref{eq:two-level-recursion}, we obtain
\[
\frac{f(W_{t+1}^{\mathrm P})}{f(W_t^{\mathrm P})}
\le
\frac{(1-0.9920^2)^2}{(1+0.9920^2)^2}
\le 6.4514\times10^{-5}.
\]
Multiplying the above inequality yields the desired result.

\medskip
\noindent\emph{2) Standard NS.}
We show that $z_t\in J_{\mathrm S}=[1/5,1]$ for all $t$.
Suppose $z\in[1/5,1]$.  The normalized singular values satisfy
$1/\sqrt{10^4z^2+100}\ge s_0=1/\sqrt{10100}$.
Recall that $a_{\mathrm S}=15/8$. By \eqref{both:standard},  
\[
s\le q_{\mathrm S,5}^{(2)}(s)
\le\min\{a_{\mathrm S}^5s,1\},\qquad s\in[0,1].
\]
For the first four NS step ($j=0,\ldots,4$), we have
\[
q_{\mathrm S,j}^{(2)}(s_0)
\le a_{\mathrm S}^js_0
<\frac{a_{\mathrm S}^4}{100}<\frac18.
\]
Thus, at each of the five NS steps, we can apply
\[
q_{\mathrm S}^{(2)}(s)
=\frac{15}{8}s-\frac54s^3+\frac38s^5
\ge\left(\frac{15}{8}-\frac5{256}\right)s
=\frac{475}{256}s,
\qquad s\in[0,1/8],
\]
which gives
\[
q_{\mathrm S,5}^{(2)}(s_0)
\ge\big(\frac{475}{256}\big)^5s_0
>\frac1{101}\big(\frac{475}{256}\big)^5
>\frac15.
\]
For $z\in J_{\mathrm S}$, the normalized singular values satisfy
$s_0\le 1/\sqrt{10^4z^2+100}\le 100z/\sqrt{10^4z^2+100}\le1$.
Monotonicity of $q_{\mathrm S,5}^{(2)}$ therefore yields
\[
\frac15<v(z)\le u(z)\le1,
\qquad
\frac15<\Psi(z)=\frac{v(z)}{u(z)}\le1,
\]
which proves the claimed inclusion. Moreover,
\[
z\Psi(z)
\le z\frac{a_{\mathrm S}^5/\sqrt{10^4z^2+100}}{100z/\sqrt{10^4z^2+100}}
=\frac{a_{\mathrm S}^5}{100}<\frac14.
\]
Starting from $z_0=1$, \eqref{eq:two-level-recursion} thus gives,
\[
\frac{f(W_{t+1}^{\mathrm S})}{f(W_t^{\mathrm S})}
=\frac{(1-z_t\Psi(z_t))^2}
{(1+z_t^2)(1+\Psi(z_t)^2)}
\ge\frac{(1-1/4)^2}{4}
=\frac9{64}=0.140625.
\]
Multiplying the above inequality yields the desired result.
\end{proof}



\clearpage

\section{Experimental details on ResNet-32}\label{app:convergence}


This appendix details the ResNet-32 experiments used to examine Muon's convergence rate and its advantage over AdamW in dimension dependence, 
including the experimental setup, gradient norm convergence rates, accuracy and loss results, and hyperparameter sweeps.

\subsection{Experimental setting}

\paragraph{Models and data.}



We use ResNet-32 models of varying dimensions measured by the number of model parameters.
By changing the base channel width
$w\in\{16,\allowbreak 32,\allowbreak 64,\allowbreak
192,\allowbreak 256,\allowbreak 320\}$,
we obtain models with
$\{0.47,\allowbreak 1.86,\allowbreak 7.41,\allowbreak 66.46,\allowbreak 118.11,\allowbreak 184.50\}$ million parameters,
respectively.
For each choice of $w$, the model consists of an initial $3\times3$ convolution followed by BatchNorm
and ReLU, and three residual stages with channel widths $w$, $2w$, and $4w$,
respectively.
Each stage contains five residual blocks, each comprising two
$3\times3$ convolutions with BatchNorm, while the first block of the second and
third stages downsamples the feature map.
The resulting features are then globally
average pooled and passed through a linear classifier to produce logits for
100 classes. 
All models are trained from random initialization.

We train the models on CIFAR-100 \citep{krizhevsky2009learning},
using the standard training set containing 50,000 images
and test set containing 10,000 images.
All images are normalized per channel, with no data augmentation.

\paragraph{Compared optimizers and hyperparameters.}

We compare three optimizers: Muon with practical NS and WD,
Muon with standard NS ($k=2$) and WD, and AdamW.

Muon is applied to convolutional and fully connected weight matrices,
with convolutional kernels reshaped into two-dimensional matrices,
while an auxiliary AdamW optimizer updates biases and normalization parameters
with a fixed stepsize of $10^{-3}$ and zero WD.
Both Muon variants use five NS steps,
momentum coefficient $\beta=0.95$, and normalization floor
$\varepsilon=10^{-7}$.
For the AdamW baseline, AdamW updates all parameters,
with biases and normalization parameters excluded from WD.
It uses first- and second-moment coefficients
$(\beta_1,\beta_2)=(0.9,0.999)$ and numerical stability constant
$\hat{\varepsilon}=10^{-8}$.
The selected stepsizes and WD coefficients are reported in
Table~\ref{tab:resnet32-selected-hyperparameters},
and the stepsize schedule consists of a $5\%$ linear warmup followed by
cosine decay to zero.
Specifically, for each dimension and optimizer, we jointly sweep the stepsize and WD coefficient,
training each setting for 2,000 iterations with a batch size of 128 and a fixed
random seed, and select the setting with the lowest final validation loss.
The resulting sweep landscapes are shown in
Appendix~\ref{subapp:res:sweep} (Figure~\ref{fig:resnet_sweep}).
The final experiments use three random seeds.
The forward and backward passes do not use BF16 autocast, while model
parameters and optimizer states are kept in FP32,
with CUDA TF32 operations enabled.
Neither gradient clipping nor shape dependent Muon scaling is used.




\begin{table}[H]
\centering
\caption{Selected stepsize and WD pairs for ResNet-32.}
\label{tab:resnet32-selected-hyperparameters}
\small
\begin{tabular}{lccc}
\hline
Dimension (M) &  Muon (Practical NS + WD) & Muon (Standard NS + WD)& AdamW \\
\hline
0.47   & $(0.1,0.0625)$  & $(0.1,0.0625)$   & $(0.01,0.0625)$ \\
1.86   & $(0.05,0.1)$    & $(0.05,0.1)$     & $(0.01,0.15)$ \\
7.41   & $(0.02,0.3)$    & $(0.05,0.1)$     & $(0.001,0.04)$ \\
66.46  & $(0.05,0.15)$   & $(0.0075,0.35)$  & $(0.001,0.0625)$ \\
118.11 & $(0.01,0.6)$    & $(0.1,0.0625)$   & $(0.0003,0.1)$ \\
184.50 & $(0.02,0.4)$    & $(0.01,0.35)$    & $(0.0003,0.2)$ \\
\hline
\end{tabular}
\end{table}


\paragraph{Evaluation metric.}
We measure stationarity at every iteration 
using the Frobenius norm of the gradient computed over the entire training set, $\|\nabla f(W)\|_F$.
To compare the dimension dependence of Muon and AdamW, at iteration
2,000, we report
$\sqrt{d}\|\nabla f(W_{\mathrm{Muon}})\|_F/
\|\nabla f(W_{\mathrm{AdamW}})\|_F$,
where  $W_{\mathrm{Muon}}$ and $W_{\mathrm{AdamW}}$ denote the parameters
obtained by Muon and AdamW, respectively.
If this ratio remains approximately constant as $d$ increases,
the results support Muon's $\sqrt{d}$ improvement in dimension dependence
over AdamW established by our theoretical comparison.
Training and test losses are recorded every 100 iterations, with final top-1 accuracy evaluated on the full test set.

\clearpage
\subsection{Gradient norm convergence rates}

\begin{figure}[h]
    \centering 
    \includegraphics[width=1.0\linewidth]{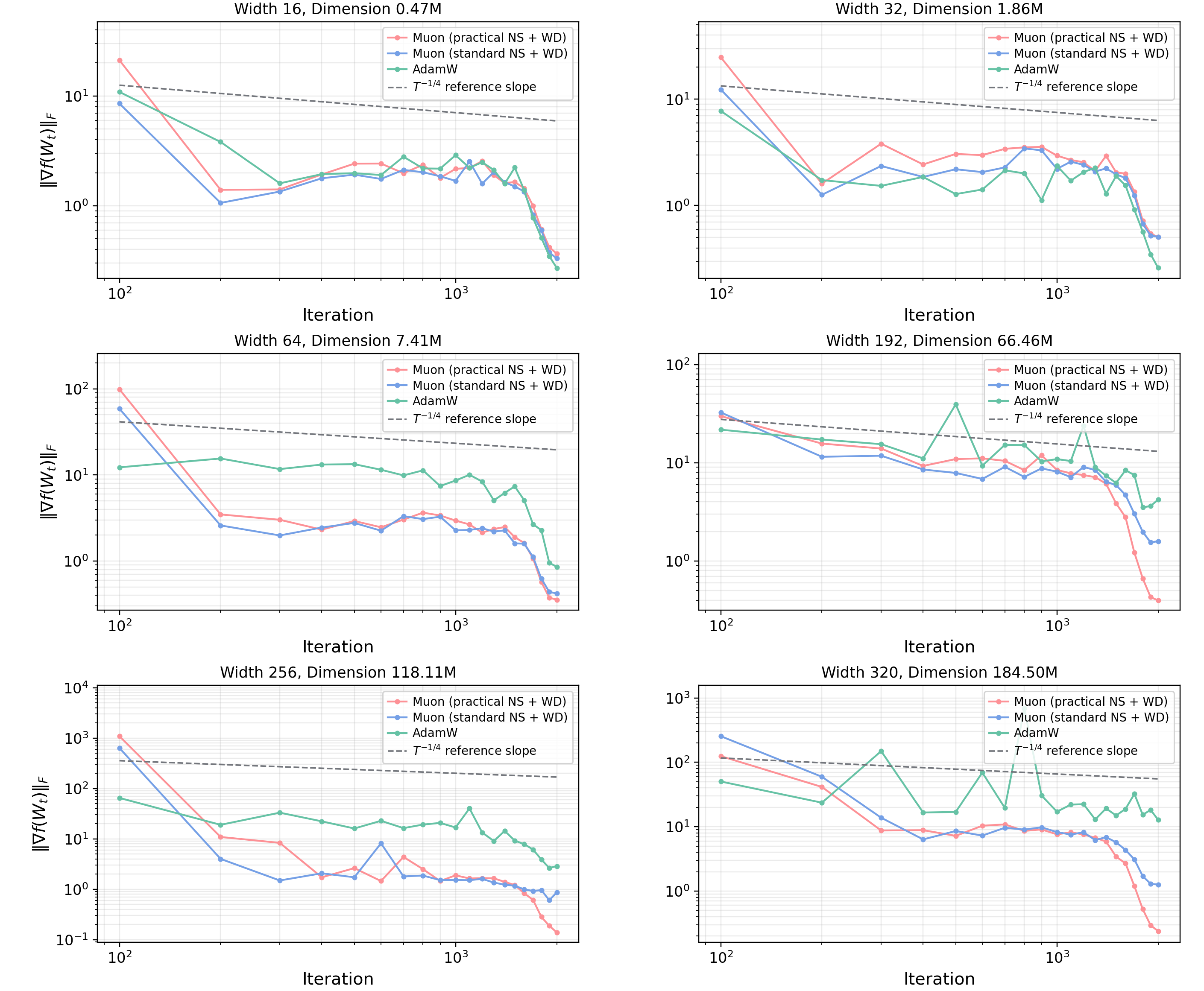}
    \caption{Frobenius gradient norm convergence rates for ResNet-32 models of varying dimensions.
    The gradient norms exhibit an overall $T^{-1/4}$ trend, with Muon generally converging faster than AdamW.}
\end{figure}

\subsection{Accuracy and loss results}

\begin{table}[H]
\centering
\caption{CIFAR-100 test top-1 accuracy (
over three seeds) for ResNet-32 models of varying dimensions after 2,000 iterations.
Muon consistently achieves higher accuracy than AdamW.}
\label{tab:resnet32-top1-accuracy}
\scriptsize
\setlength{\tabcolsep}{3pt}
\resizebox{\linewidth}{!}{%
\begin{tabular}{lcccccc}
\hline
Algorithm \textbackslash{} Dimension (M)   &   0.47 & 1.86 &   7.41 &    66.46 &   118.11 &  184.50 \\
\hline
Muon (practical + WD) & $59.64\pm0.21$ & $66.35\pm0.38$ & $70.22\pm0.25$ & $73.66\pm0.18$ & $72.74\pm0.62$ & $74.57\pm0.04$ \\
 Muon (standard + WD)  & $58.71\pm0.36$ & $65.59\pm0.31$ & $68.97\pm0.38$ & $72.28\pm0.20$ & $70.88\pm0.40$ & $72.92\pm0.15$ \\
AdamW          & $45.95\pm1.15$ & $52.29\pm1.88$ & $57.03\pm0.55$ & $59.79\pm1.61$ & $62.69\pm0.37$ & $63.17\pm0.62$ \\
\hline
\end{tabular}%
}
\end{table}

\begin{figure}[h]
    \centering 
    \includegraphics[width=.9\linewidth]{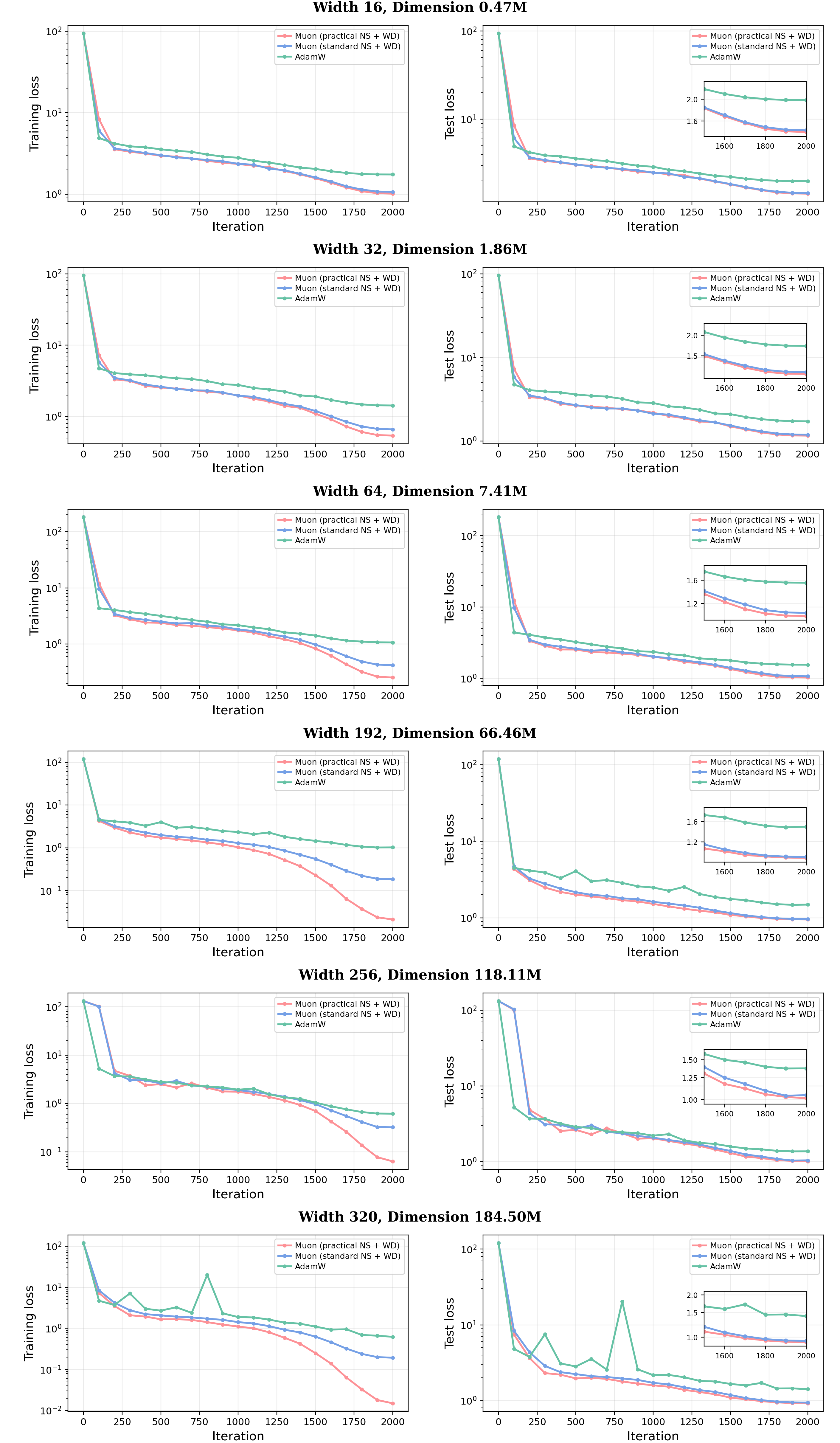}
    \caption{Loss curves for ResNet-32 models of varying dimensions. Muon consistently achieves lower training and test losses than AdamW.}
    \label{fig:resnet_loss}
\end{figure}

\clearpage

\subsection{Hyperparameter sweep} \label{subapp:res:sweep}

\begin{figure}[h]
    \centering 
    \includegraphics[width=0.79\linewidth]{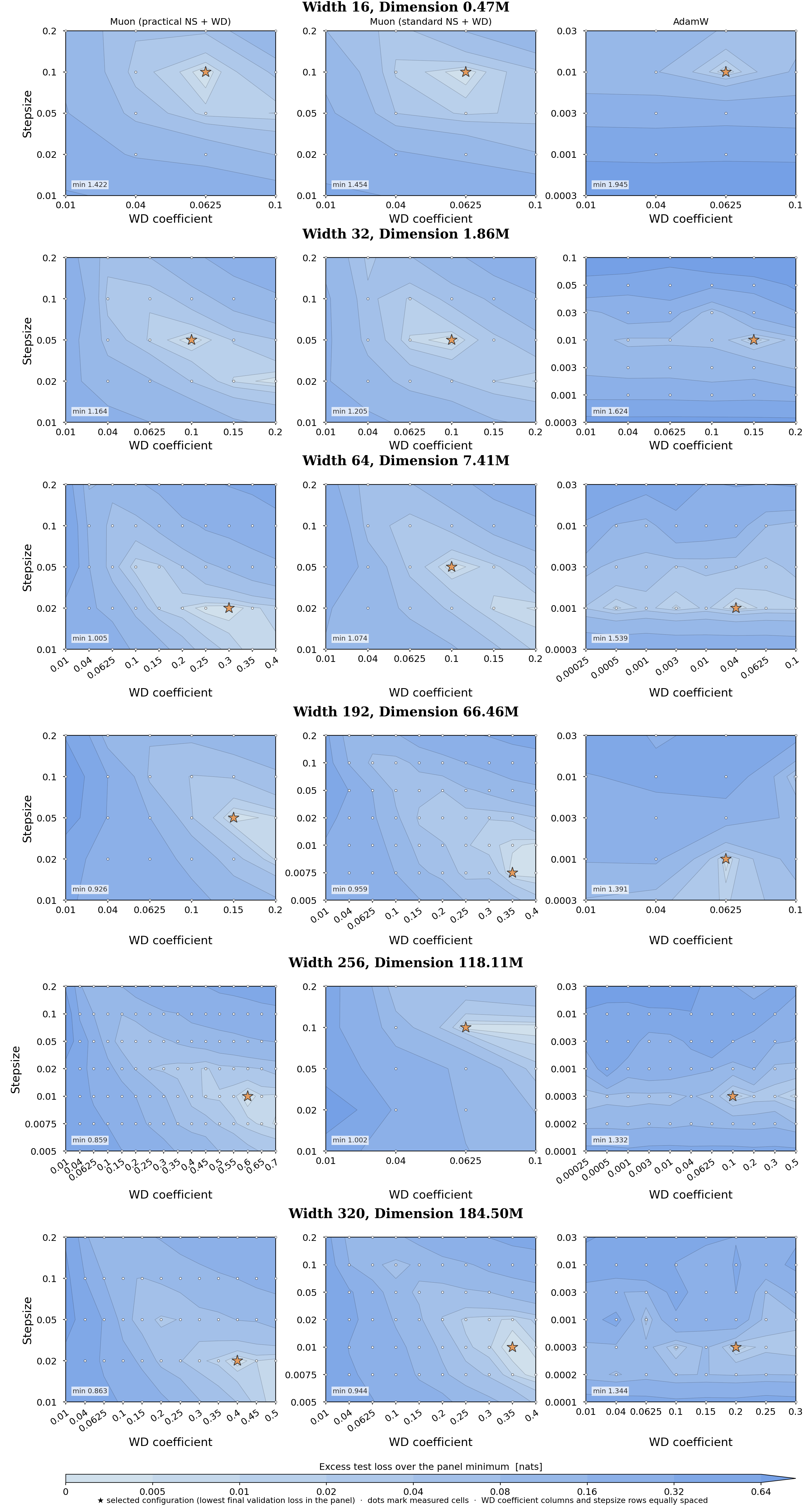}
    \caption{Joint sweep over stepsize and WD for ResNet-32 models of varying dimensions.}
    \label{fig:resnet_sweep}
\end{figure}

\clearpage

\section{Experimental details on language models} \label{app:error}

This appendix details the experiments on the 124M GPT-2-style model
and presents additional results on the larger 355M GPT-2-style and 0.6B Qwen3-style models \citep{yang2025qwen3},
further showing that large factors arising from worst-case NS orthogonalization errors need not slow Muon's convergence.
Specifically, we provide the experimental setup, loss curves, singular value distributions, maximum orthogonalization errors and the resulting factors, and hyperparameter sweeps.

\subsection{Experimental setting}
\paragraph{Models and data.}
We use three models: 124M- and 355M-parameter GPT-2-style models, and a 0.6B-parameter Qwen3-style model.
The 124M model has 12 layers, hidden size 768, and 12 attention heads.  The 355M model has 24 layers, hidden
size 1,024, and 16 attention heads.  The 0.6B model has 28 layers, hidden size
1,024, 16 query heads, 8 key-value heads, and an intermediate size of 3,072.
All models are trained from random initialization.

We train the 124M, 355M, and 0.6B models for 500M, 200M, and 300M tokens, respectively, 
and evaluate them on a separate validation set containing 5M tokens.
Training data come from the FineWeb-Edu sample-10BT subset \citep{penedo2024fineweb}.
The 124M and 355M GPT-2-style models use the GPT-2 tokenizer, while the 0.6B Qwen3-style model uses the Qwen3 tokenizer.
All models use a sequence length of 512 tokens,
 and a batch size of 131,072 tokens per optimizer update.

\paragraph{Task and metrics.}

We consider autoregressive language modeling with a next-token prediction
objective and report training and validation cross-entropy losses.
To illustrate that the worst-case orthogonalization error does not
necessarily imply slow convergence, we track the singular-value
distributions and orthogonalization errors $\epsilon_{t,5}$
of selected momentum matrices, along with the factors
$(1-\epsilon_{t,5})^{-1}$, motivated by the factor
$(1-\epsilon_5)^{-1}$ in the convergence analysis of standard NS
without WD by \citet{kim2026muon}.

Training loss is recorded at every iteration, and validation loss is
evaluated every 200 iterations for the 124M model and every 100 iterations
for the 355M and 0.6B models. Each evaluation is performed using
20 batches for the 124M and 355M GPT-2-style models and 10 batches for the 0.6B Qwen3-style model.

The singular values of momentum matrices for representative
attention and MLP weights in the first, middle, and last transformer
blocks are computed using FP64 SVDs without affecting training
and recorded at every iteration.
Let $s_{i,t}^{l}$ denote the $i$-th normalized singular value
of momentum matrix $l$ at iteration $t$.
The orthogonalization error $\epsilon_{t,5}$ is computed as follows:
\[
\epsilon_{t,5}^{(\hat{a})}
=
\max_{l}\max_i
\left|q_{\hat{a},5}(s_{i,t}^{l})-1\right|,
\qquad \hat{a}\in\{\mathrm P, \mathrm S \}.
\]

\paragraph{Compared optimizers and hyperparameters.}
We use the three optimizers as in
Appendix~\ref{app:convergence}.
Muon is applied to two-dimensional hidden-layer matrices, while
embeddings, output heads, biases, and normalization parameters
are updated by an auxiliary AdamW optimizer.
The auxiliary AdamW uses a fixed stepsize of $6\times10^{-4}$ for the 124M model
and $3\times10^{-4}$ for the 355M and 0.6B models.
It applies the selected WD of the corresponding Muon variant to embeddings and output-head matrices,
while biases and normalization parameters use zero WD.
For the AdamW baseline, all parameters are updated by AdamW,
with the selected WD applied to matrix parameters
and zero WD applied to biases and normalization parameters.

The optimizer settings follow Appendix~\ref{app:convergence}, except where otherwise specified.
AdamW uses $\beta_2=0.95$.
The selected stepsizes and WD coefficients are reported in Table~\ref{tab:selected-hparams},
and the stepsize schedule consists of a $2\%$ linear warmup followed by cosine decay to $10\%$ of the selected value.
Specifically, for each model and optimizer, we jointly sweep the stepsize and WD coefficient,
training each setting with a fixed random seed for $30\%$ of the corresponding
training schedule, i.e., 1,145, 458, and 687 iterations for the 124M, 355M,
and 0.6B models, respectively, and select the setting with the lowest final
validation loss. The resulting sweep landscapes are shown in
Appendix~\ref{subapp:sweep} (Figure~\ref{fig:wd-momentum-sweep}).
The final experiments use three seeds for the 124M and 355M models and one seed for the 0.6B model.
The forward and backward passes use BF16 autocast.

\begin{table}[htbp]
\centering
\caption{Selected stepsize and WD pairs for language model training.}
\label{tab:selected-hparams}
\small
\begin{tabular}{lccc}
\hline
Model & Muon (Practical NS + WD) & Muon (Standard NS + WD) & AdamW \\
\hline
124M GPT-2-style  & $(0.02,0.000625)$ & $(0.02,0.0025)$ & $(0.001,0.01)$ \\
355M GPT-2-style  & $(0.02,0.02)$ & $(0.02,0.0025)$ & $(0.0003,0.00125)$ \\
0.6B Qwen3-style & $(0.04,0.0025)$ & $(0.02,0.0025)$ & $(0.0015,0.000625)$ \\
\hline
\end{tabular}
\end{table}



\newpage
\subsection{Loss curves}

\begin{figure}[h]
    \centering 
    \includegraphics[width=1.0\linewidth]{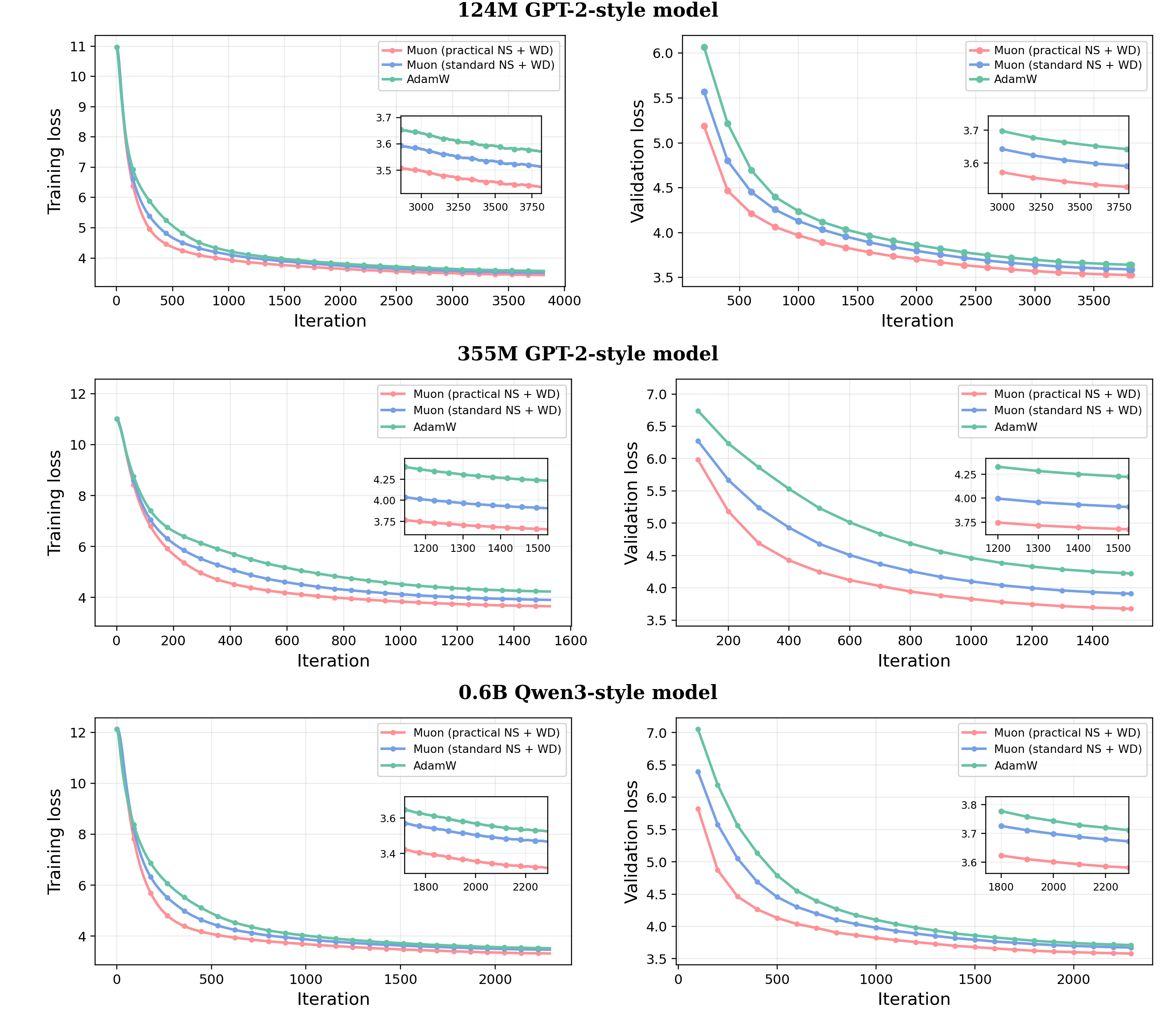}
     \caption{Loss curves across the three language models.
     Muon consistently achieves lower training and validation losses than AdamW.}
    \label{fig:larger_loss}
\end{figure}

\clearpage

\subsection{Orthogonalization error}

\begin{figure}[h]
    \centering 
    \includegraphics[width=1.0\linewidth]{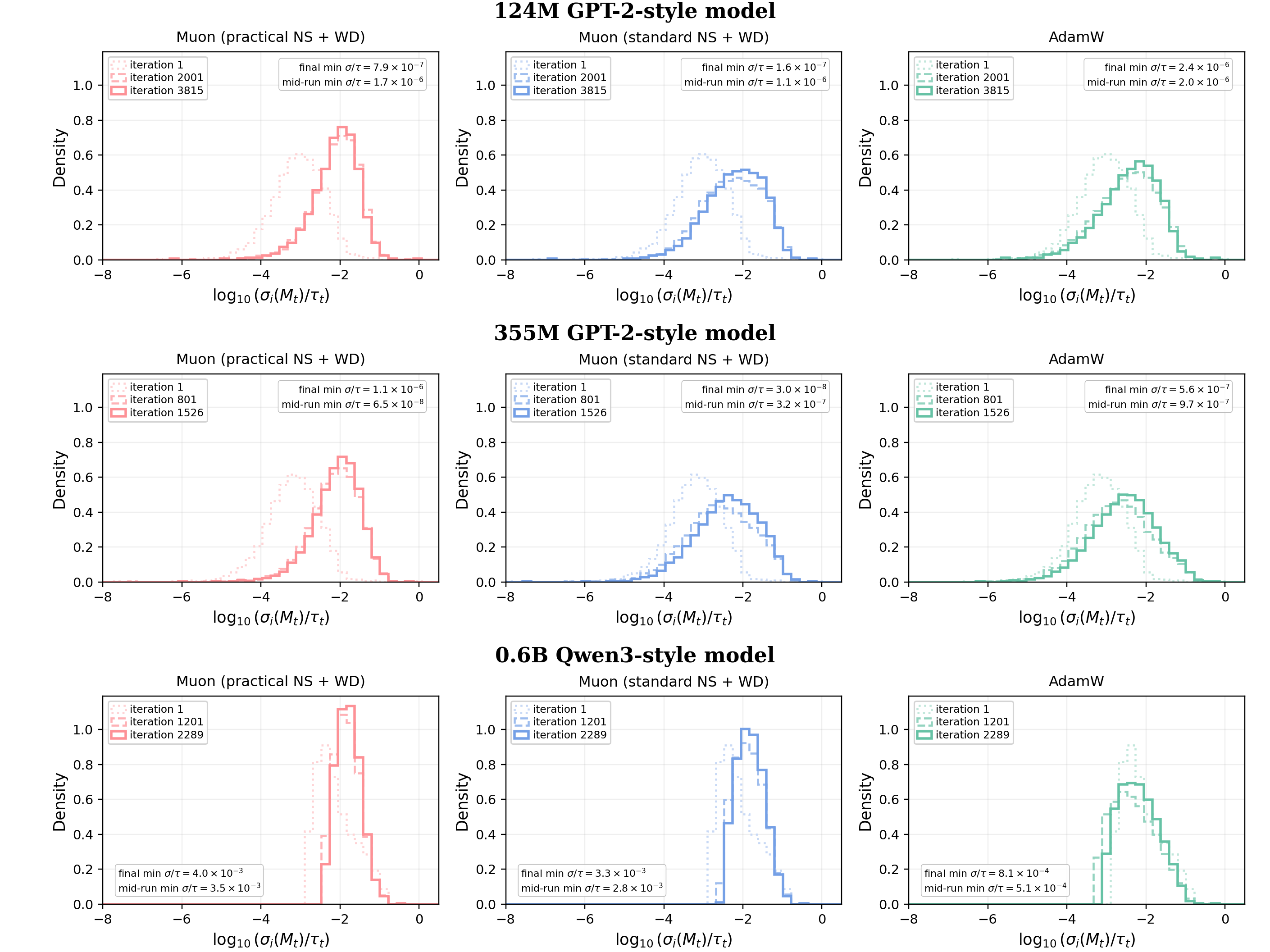}
    \caption{
    Momentum singular-value distributions for the query projection
in the middle transformer block of the three language models.
The distributions span a broad range and include very small normalized singular values.}
    \label{fig:sigular}
\end{figure}

\begin{figure}[h]
    \centering 
    \includegraphics[width=1.0\linewidth]{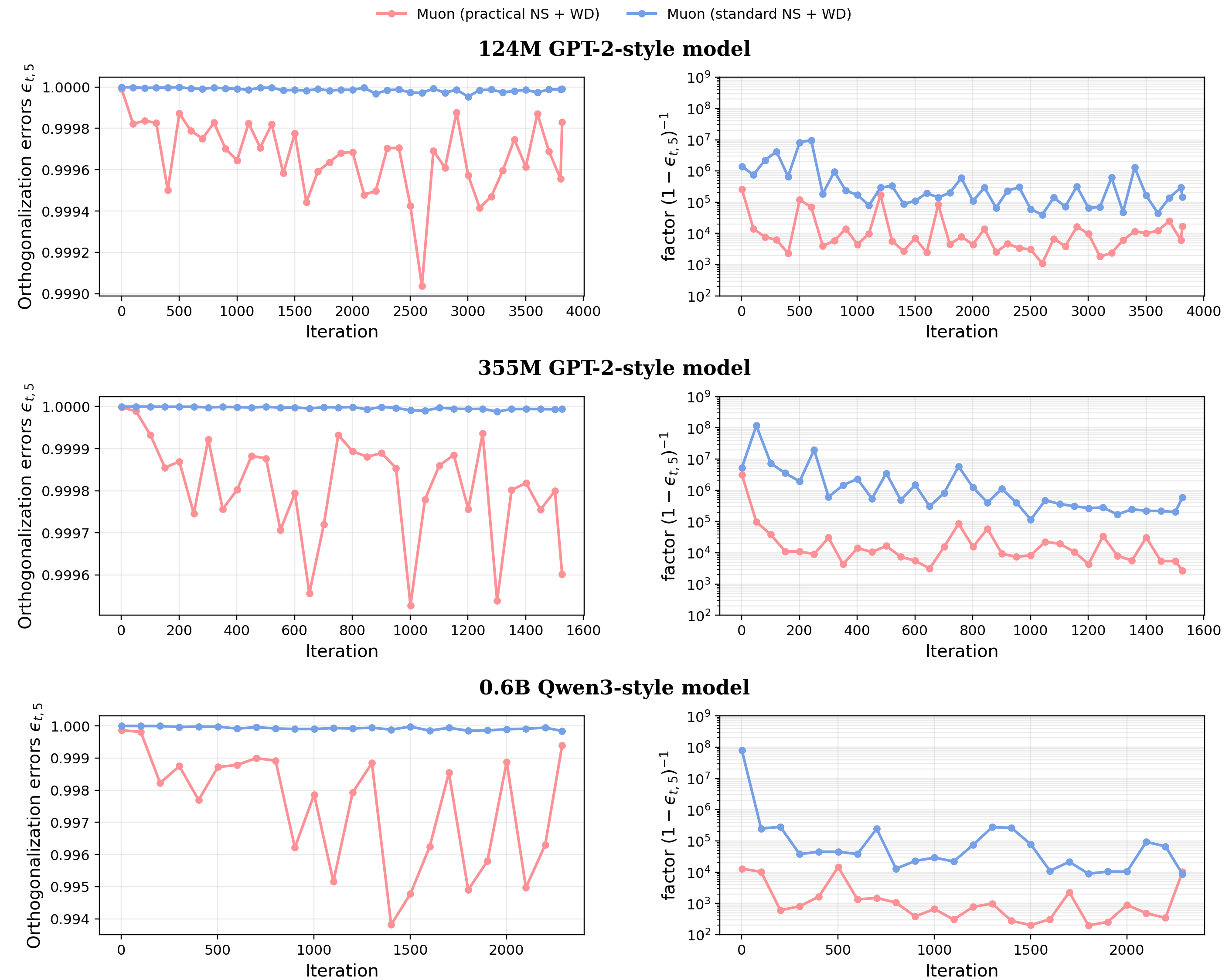}
    \caption{
    Orthogonalization errors $\epsilon_{t,5}$ and the corresponding
    factors $(1-\epsilon_{t,5})^{-1}$ for practical and standard NS across
    the three language  models. Practical NS consistently exhibits smaller errors and
    factors, while the large factor values need not imply slow convergence.
    }
    \label{fig:error:factor}
\end{figure}

\clearpage
\subsection{Hyperparameter sweep} \label{subapp:sweep}

\begin{figure}[h]
    \centering
    \includegraphics[width=\linewidth]{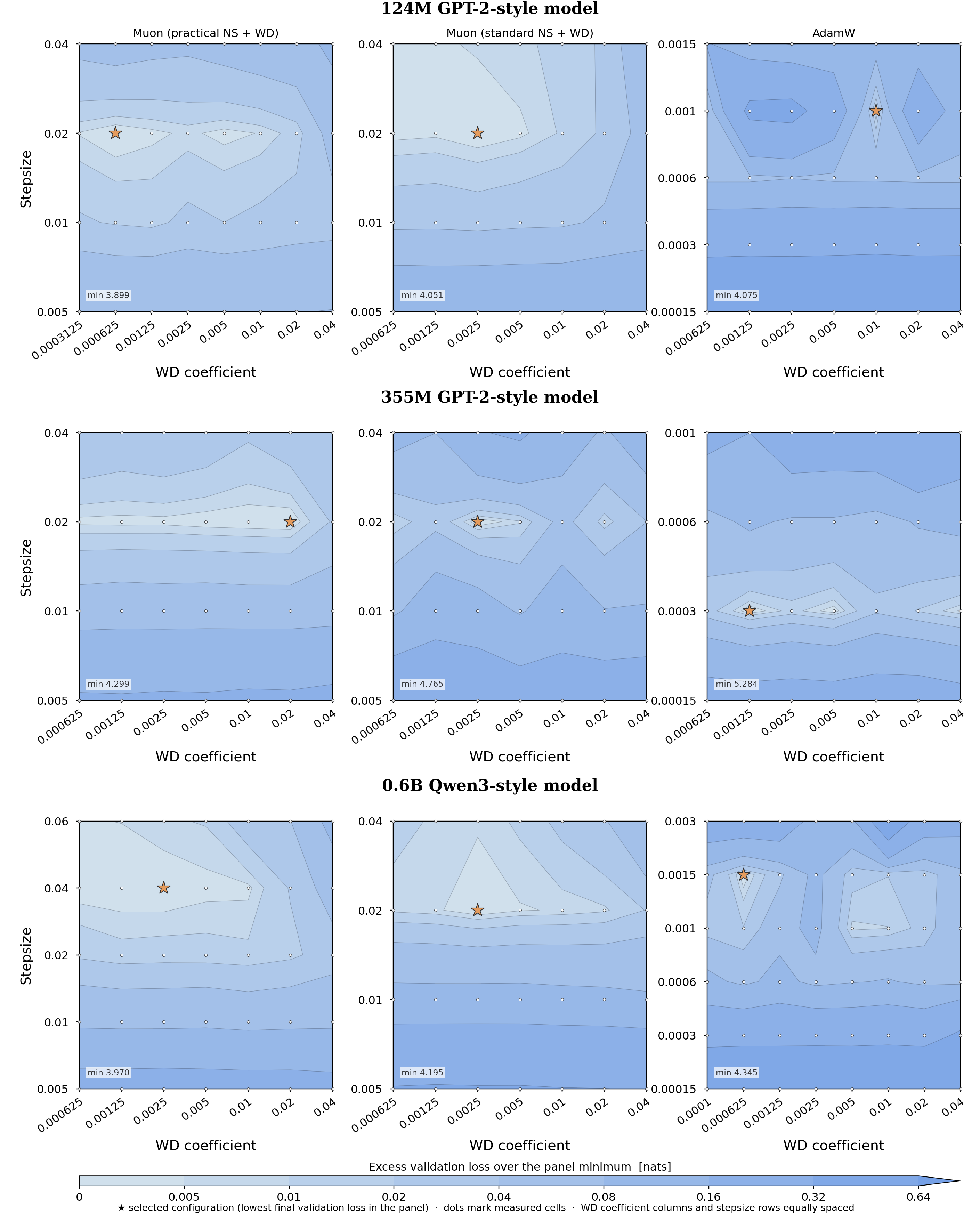}
    \caption{Joint sweep over stepsize and WD for language model training.}    \label{fig:wd-momentum-sweep}
\end{figure}

\newpage

\section{Interpreting standard NS with WD}

As discussed in Section~\ref{sec:interpret}, Muon with standard NS admits a more direct optimization interpretation through its fixed-point condition, 
since $q_{\mathrm S,5}$ is strictly increasing on $[0,1]$ and hence invertible.

\paragraph{Dynamic objective.}
Recall the scalar fixed-point condition 
$
q_{\mathrm S,5}(G_t/\tau_t)+\lambda W_t=0
$
from
Section~\ref{sec:interpret}, where
$q_{\mathrm S,5}$ denotes $q_{\mathrm S,5}^{(k)}$
for arbitrary $k\in\mathbb N$.
Applying the odd inverse $q_{\mathrm S,5}^{-1}$ gives
\[
G_t+\tau_t q_{\mathrm S,5}^{-1}(\lambda W_t)=0.
\]
We obtain a regularizer by integrating the second term on the
left-hand side with respect to $W$.
Extending this construction to matrices yields
\begin{equation}
\widetilde\Phi_t(W)
=f(W)+\widetilde{\mathcal R}_t(W),
\qquad
\widetilde{\mathcal R}_t(W)
=\frac{\tau_t}{\lambda}
\sum_{i=1}^{r}
\int_0^{\lambda\sigma_i(W)}
\tilde q_{\mathrm S,5}^{-1}(s)\,\mathrm ds.
\label{eq:standard-dynamic-objective}
\end{equation}
Here, $\tau_t$ is held fixed when differentiating in $W$,
and $\sigma_i(W)$ denotes the $i$-th singular value of $W$.
The extended inverse is defined by
\[
\tilde q_{\mathrm S,5}^{-1}(s)
=
\begin{cases}
q_{\mathrm S,5}^{-1}(s), & 0\le s\le1,\\
s, & s>1,
\end{cases}
\]
where $q_{\mathrm S,5}^{-1}:[0,1]\to[0,1]$ denotes the inverse
of $q_{\mathrm S,5}$ restricted to $[0,1]$.
For negative arguments, we use the odd extension.
Alternatively, one may set $\tilde q_{\mathrm S,5}^{-1}(s)=+\infty$ for $s>1$, but we omit discussion of this case for brevity.
The following proposition establishes the equivalence between
the Muon update using standard NS and optimization of the dynamic objective $\widetilde\Phi_t(W)$.
\begin{proposition}
\label{prop:standard-dynamic-regularization}

\textbf{(i) Properties of the regularizer.}
\begin{enumerate}[label=\arabic*),leftmargin=*,nosep]
\item
The regularizer $\widetilde{\mathcal R}_t$ in
\eqref{eq:standard-dynamic-objective} is continuously
differentiable and $\lambda\tau_t/C_1$-strongly convex.

\item
For an SVD
$W=U_W\operatorname{diag}(\sigma_1(W),\ldots,\sigma_r(W))V_W^\top$,
\[
\nabla\widetilde{\mathcal R}_t(W)
=\tau_t U_W\operatorname{diag}\Bigl(
\tilde q_{\mathrm S,5}^{-1}(\lambda\sigma_1(W)),\ldots,
\tilde q_{\mathrm S,5}^{-1}(\lambda\sigma_r(W))
\Bigr)V_W^\top.
\]

\item
The following bounds hold:
\[
\frac{\lambda\tau_t}{2C_1}\|W\|_F^2
\le\widetilde{\mathcal R}_t(W)
\le\frac{\lambda\tau_t}{2}\|W\|_F^2,
\qquad
\|\nabla\widetilde{\mathcal R}_t(W)\|_F
\le\lambda\tau_t\|W\|_F.
\]
\end{enumerate}

\textbf{(ii) Fixed-point equivalence and descent alignment.}
Ignoring the momentum by setting $\beta=0$,
Muon update in Algorithm~\ref{alg:muon} 
using standard NS has the following properties:
\begin{enumerate}[label=\arabic*),leftmargin=*,nosep]
\item
The update direction and the stochastic gradient vanish
simultaneously, i.e.,
\[
H_t=0
\quad\Longleftrightarrow\quad
\widehat{\nabla}\widetilde\Phi_t(W_t)=0, \qquad \widehat{\nabla}\widetilde\Phi_t(W_t)
=G_t+\nabla\widetilde{\mathcal R}_t(W_t).
\]
\item
The update direction $H_t$ is positively aligned with
the stochastic gradient $\widehat{\nabla}\widetilde\Phi_t(W_t)$, i.e.,
\[
\langle
\widehat{\nabla}\widetilde\Phi_t(W_t),H_t
\rangle_F
\ge \frac{\tau_t}{C_1}\|H_t\|_F^2.
\]

\end{enumerate}

\noindent\textbf{(iii) Exact generalized Frank--Wolfe equivalence.}
With $\beta=0$, and $\gamma=\eta\lambda\in(0,1]$,  Muon
update in Algorithm~\ref{alg:muon} using standard NS is exactly
a stochastic generalized Frank--Wolfe step on the dynamic
objective \eqref{eq:standard-dynamic-objective},
with stepsize $\gamma$.

Specifically, Muon selects its update direction by seeking
the greatest decrease in the regularized linear model,
whose unique minimizer is
\begin{equation}
\operatorname*{arg\,min}_{W\in\mathbb R^{m\times n}}
\left\{
\langle G_t,W\rangle_F+\widetilde{\mathcal R}_t(W)
\right\}
=-\frac{D_t}{\lambda}.
\label{eq:standard-gfw-oracle}
\end{equation}
The Muon update therefore takes the equivalent form
\begin{equation}
W_{t+1}
=(1-\gamma)W_t
+\gamma\big(-\frac{D_t}{\lambda}\big),
\label{eq:standard-gfw-update}
\end{equation}
which moves a fraction $\gamma$ of the way from $W_t$
to the unique minimizer $-D_t/\lambda$ of the regularized
linear subproblem.

\textbf{(iv) Recovery of the original objective.}
Consider the deterministic setting where $G_t=\nabla f(W_t)$.
If $\|\nabla f(W_t)\|_F\to0$, then $\tau_t\to\varepsilon$.
Together with the
boundedness of $\{W_t\}$, which WD naturally promotes, the bounds in (i) imply that
both $\widetilde{\mathcal R}_t(W_t)$ and
$\|\nabla\widetilde{\mathcal R}_t(W_t)\|_F$ are asymptotically
$\mathcal O(\varepsilon)$.
Thus, $\widetilde\Phi_t$ approximates the original objective
near stationarity.

\end{proposition}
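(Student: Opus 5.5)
The argument reduces to scalar facts about the odd, strictly increasing function $g=\tilde q_{\mathrm S,5}^{-1}$ on $\mathbb R$. The matrix statements then follow from spectral-function calculus (von Neumann trace inequality / Lewis' theorem on orthogonally invariant functions).

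\textbf{Scalar facts.} From \eqref{both:standard}, $s\le q_{\mathrm S,5}(s)\le\min\{a^5s,1\}$ on $[0,1]$, and $q_{\mathrm S,5}$ is strictly increasing with $q_{\mathrm S,5}(1)=1$. Hence on $[0,1]$ the inverse satisfies
\[
s/a^5\le g(s)\le s ,
\]
and the extension $g(s)=s$ for $s>1$ is continuous at $1$. I would further show that $g$ is nondecreasing with difference quotients at least $1/C_1=a^{-5}$. For the inverse this amounts to $q_{\mathrm S,5}'\le a^5$, which follows from the chain rule together with $q_{\mathrm S}^{(k)\prime}(s)\le a$ on $[0,1]$. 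That bound holds because $q_{\mathrm S}^{(k)\prime}$ equals the derivative of the Taylor truncation of $s(s^2)^{-1/2}$, so it is nonnegative and maximal at $0$. I expect this derivative bound to be the main obstacle and would verify it by explicit differentiation of $s\,p_{\mathrm S}^{(k)}(s^2)$. Under this bound, the function $\psi(x)=\frac{\tau_t}{\lambda}\int_0^{\lambda x}g$ is $C^1$, even, and convex. Its derivative $\tau_t g(\lambda x)$ is $(\lambda\tau_t/C_1)$-monotone, so $\psi$ is $\lambda\tau_t/C_1$-strongly convex.

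\textbf{Part (i).} $\widetilde{\mathcal R}_t=\sum_i\psi(\sigma_i(W))$ is an absolutely symmetric spectral function. Lewis' theorem therefore gives differentiability, convexity, and the gradient formula in (i)2. Strong convexity transfers because $\widetilde{\mathcal R}_t-\frac{\lambda\tau_t}{2C_1}\|W\|_F^2$ is the spectral function of $\psi-\frac{\lambda\tau_t}{2C_1}x^2$, which is convex and even. Integrating $s/C_1\le g(s)\le s$ gives the two-sided bound in (i)3, and the gradient-norm bound follows from $|g(s)|\le|s|$.

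\textbf{Part (ii).} With $\beta=0$, write $G_t=U\operatorname{diag}(\sigma_i)V^\top$, so that $D_t=U\operatorname{diag}(q_{\mathrm S,5}(\sigma_i/\tau_t))V^\top$. First, $H_t=0$ means $W_t=-D_t/\lambda$, whose singular values $q_{\mathrm S,5}(\sigma_i/\tau_t)/\lambda$ lie in $[0,1/\lambda]$. The gradient formula then gives $\nabla\widetilde{\mathcal R}_t(W_t)=-\tau_t U\operatorname{diag}(\sigma_i/\tau_t)V^\top=-G_t$. Conversely, if $G_t+\nabla\widetilde{\mathcal R}_t(W_t)=0$, then $W_t$ and $G_t$ share singular vectors with $\tau_t g(\lambda\sigma_i(W_t))=\sigma_i$. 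Since $\sigma_i/\tau_t\le1$, we get $\lambda\sigma_i(W_t)=q_{\mathrm S,5}(\sigma_i/\tau_t)$, so $H_t=0$. For the alignment inequality, note that $\lambda^{-1}\times$ the map $Y\mapsto\nabla\widetilde{\mathcal R}_t(Y/\lambda)$ equals $\tau_t\,\nabla\Gamma$ with $\Gamma$ spectral, and that $D_t$ satisfies $\tau_t g(D_t)=G_t$ in the spectral sense, i.e. $\nabla\widetilde{\mathcal R}_t(-D_t/\lambda)=-G_t$. Hence
\[
\widehat\nabla\widetilde\Phi_t(W_t)=\nabla\widetilde{\mathcal R}_t(W_t)-\nabla\widetilde{\mathcal R}_t(-D_t/\lambda),
\]
with $W_t-(-D_t/\lambda)=H_t/\lambda$. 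Strong monotonicity of $\nabla\widetilde{\mathcal R}_t$ with modulus $\lambda\tau_t/C_1$ then gives $\langle\widehat\nabla\widetilde\Phi_t,H_t\rangle_F\ge\frac{\tau_t}{C_1}\|H_t\|_F^2$.

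\textbf{Parts (iii) and (iv).} For (iii), the subproblem objective is strongly convex, so its minimizer is unique. By the computation above, the first-order condition $G_t+\nabla\widetilde{\mathcal R}_t(W)=0$ is met at $W=-D_t/\lambda$. Rewriting $W_{t+1}=(1-\eta\lambda)W_t-\eta D_t$ yields \eqref{eq:standard-gfw-update} with $\gamma=\eta\lambda$. Part (iv) follows directly from the bounds in (i)3, since $\tau_t\to\varepsilon$ and $\{W_t\}$ is bounded.
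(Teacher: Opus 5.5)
Your proposal follows the paper's proof essentially step for step. The shared steps are: the inverse bounds $s/C_1\le \tilde q_{\mathrm S,5}^{-1}(s)\le s$ and the increment bound obtained from $q_{\mathrm S,5}'\le a^5$; strong convexity via the singular-value convexity rule applied to $\psi-\frac{\lambda\tau_t}{2C_1}x^2$; the gradient formula via Lewis' singular-value calculus; the identity $\nabla\widetilde{\mathcal R}_t(-D_t/\lambda)=-G_t$ combined with strong monotonicity for (ii); and strong convexity for uniqueness in (iii).

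The derivative bound you flag as the obstacle is handled exactly as you plan. Explicit differentiation gives $q_{\mathrm S}'(s)=a(1-s^2)^k$ on $[0,1]$. Your remark about Taylor truncations is not itself a justification, so this computation is what you need. Your direct argument for the converse in (ii)1 is valid, and it is an alternative to the paper's shortcut of substituting $\widehat\nabla\widetilde\Phi_t(W_t)=0$ into the alignment inequality.
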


\begin{proof}
We first establish bounds on $\tilde q_{\mathrm S,5}^{-1}$.
The standard NS map satisfies
\[
q_{\mathrm S}'(s)=q_{\mathrm S}'(0)(1-s^2)^k.
\]
Indeed, let $\omega_\ell=(2\ell)!/[4^\ell(\ell!)^2]$.
Differentiating
$q_{\mathrm S}(s)=s\sum_{\ell=0}^{k}\omega_\ell(1-s^2)^\ell$
and using $(2\ell+1)\omega_\ell=2(\ell+1)\omega_{\ell+1}$ gives
\begin{align*}
q_{\mathrm S}'(s)
&=\sum\nolimits_{\ell=0}^{k}\omega_\ell(1-s^2)^\ell   + 
\big(2(1-s^2)-2\big)\sum\nolimits_{\ell=0}^{k}\omega_\ell\ell(1-s^2)^{\ell-1}          \\
&=\sum\nolimits_{\ell=0}^{k}(2\ell+1)\omega_\ell(1-s^2)^\ell
 -\sum\nolimits_{\ell=0}^{k-1}2(\ell+1)\omega_{\ell+1}(1-s^2)^\ell\\
&=(2k+1)\omega_k(1-s^2)^k.
\end{align*}
Evaluating at $s=0$ yields
$q_{\mathrm S}'(0)=(2k+1)\omega_k$, proving the identity.
The chain rule then gives
\[
0<q_{\mathrm S,5}'(s)
=\prod\nolimits_{j=0}^{4}q_{\mathrm S}'(q_{\mathrm S,j}(s))
\le \bigl(q_{\mathrm S}'(0)\bigr)^5=C_1,
\qquad s\in[0,1),
\]
where the last equality follows from $C_1=a^5$ and,
by direct differentiation of $q_{\mathrm S}(s)=s\,p_{\mathrm S}(s^2)$,
$
q_{\mathrm S}'(0)=p_{\mathrm S}(0)=a.
$
Thus, $q_{\mathrm S,5}$ is strictly increasing and
$C_1$-Lipschitz on $[0,1]$.
Since $q_{\mathrm S,5}(1)=1$ and $C_1\ge1$ ($
p_{\mathrm S}(0)=1+\sum_{\ell=1}^{k}\frac{(2\ell)!}{4^\ell(\ell!)^2}>1$),
its extension $\tilde q_{\mathrm S,5}(s)$
  preserves both properties on $[0,\infty)$, where $\tilde q_{\mathrm S,5}(s)=q_{\mathrm S,5}(s)$
for $s\in[0,1]$ and $\tilde q_{\mathrm S,5}(s)=s$ for $s>1$.
The inverse $\tilde q_{\mathrm S,5}^{-1}$ is therefore
continuous and satisfies $\tilde q_{\mathrm S,5}^{-1}(0)=0$.
Recalling from \eqref{both:standard} that
$s\le q_{\mathrm S,5}(s)\le C_1s$ for $s\in[0,1]$, we have
\begin{equation}
\frac{u}{C_1}\le \tilde q_{\mathrm S,5}^{-1}(u)\le u,
\qquad u\ge0.
\label{eq:standard-proof-inverse-bounds}
\end{equation}
For $u\ge v\ge0$, we have
\[
u-v
=\tilde q_{\mathrm S,5}(\tilde q_{\mathrm S,5}^{-1}(u))
 -\tilde q_{\mathrm S,5}(\tilde q_{\mathrm S,5}^{-1}(v))
\le C_1\bigl(\tilde q_{\mathrm S,5}^{-1}(u)-\tilde q_{\mathrm S,5}^{-1}(v)\bigr),
\]
which implies
\begin{equation}
\tilde q_{\mathrm S,5}^{-1}(u)-\tilde q_{\mathrm S,5}^{-1}(v)\ge\frac{u-v}{C_1}.
\label{eq:standard-proof-inverse-increment}
\end{equation}

\medskip
\noindent\textbf{(i) Properties of the regularizer.}
By definition,
\[
\widetilde{\mathcal R}_t(W)
-\frac{\lambda\tau_t}{2C_1}\|W\|_F^2
=\frac{\tau_t}{\lambda}\sum_{i=1}^{r}
\int_0^{\lambda\sigma_i(W)}
\left(\tilde q_{\mathrm S,5}^{-1}(u)-\frac{u}{C_1}\right)\,du.
\]
\eqref{eq:standard-proof-inverse-bounds} and \eqref{eq:standard-proof-inverse-increment} imply that
the integrand is nonnegative and nondecreasing, so the integral is nondecreasing and convex.
By the convexity rule for singular-value functions, the
displayed matrix function is convex.
Hence $\widetilde{\mathcal R}_t$ is
$(\lambda\tau_t/C_1)$-strongly convex.

Since $\tilde q_{\mathrm S,5}^{-1}$ is continuous and
$\tilde q_{\mathrm S,5}^{-1}(0)=0$, the differentiation rule for
singular-value functions implies that
$\widetilde{\mathcal R}_t$ is continuously differentiable.
Let $u_i$ and $v_i$ denote the $i$-th columns of $U_W$ and
$V_W$, respectively.
Holding $\tau_t$ fixed, the differentiation formula for
singular-value functions
\citep[Theorem~3.1]{lewis1995convex} gives
\begin{align*}
\mathrm d\widetilde{\mathcal R}_t(W)
&=\tau_t\sum_{i=1}^{r}
\tilde q_{\mathrm S,5}^{-1}(\lambda\sigma_i(W))
\,u_i^\top(\mathrm dW)v_i\\
&=\Big\langle
\tau_t\sum_{i=1}^{r}
\tilde q_{\mathrm S,5}^{-1}(\lambda\sigma_i(W))
\,u_iv_i^\top,\,
\mathrm dW
\Big\rangle_F.
\end{align*}
This formula also holds at repeated or zero singular values.
Therefore,
\[
\nabla\widetilde{\mathcal R}_t(W)
=\tau_t U_W\operatorname{diag}\Bigl(
\tilde q_{\mathrm S,5}^{-1}(\lambda\sigma_1(W)),\ldots,
\tilde q_{\mathrm S,5}^{-1}(\lambda\sigma_r(W))
\Bigr)V_W^\top.
\]
Integrating \eqref{eq:standard-proof-inverse-bounds} gives
\[
\frac{\lambda^2\sigma_i(W)^2}{2C_1}
\le
\int_0^{\lambda\sigma_i(W)}
\tilde q_{\mathrm S,5}^{-1}(u)\,du
\le
\frac{\lambda^2\sigma_i(W)^2}{2}.
\]
Multiplying by $\tau_t/\lambda$ and summing over $i$ yields
\[
\frac{\lambda\tau_t}{2C_1}\|W\|_F^2
\le\widetilde{\mathcal R}_t(W)
\le\frac{\lambda\tau_t}{2}\|W\|_F^2.
\]
Finally, the gradient formula and
$\tilde q_{\mathrm S,5}^{-1}(u)\le u$ imply
\[
\|\nabla\widetilde{\mathcal R}_t(W)\|_F^2
=\tau_t^2\sum_{i=1}^{r}
\bigl[\tilde q_{\mathrm S,5}^{-1}(\lambda\sigma_i(W))\bigr]^2
\le\lambda^2\tau_t^2\|W\|_F^2.
\]
Taking square roots completes part (i).

\medskip
\noindent\textbf{(ii) Fixed-point equivalence and descent alignment.}
When $\beta=0$, we have $M_t=G_t$.
Recall that $s_{i,t}=\sigma_{i,t}/\tau_t\in(0,1]$, and
\[
G_t=U_t\operatorname{diag}
(\sigma_{1,t},\ldots,\sigma_{r,t})V_t^\top,
\qquad
D_t=U_t\operatorname{diag}\Bigl(
q_{\mathrm S,5}(s_{1,t}),\ldots,
q_{\mathrm S,5}(s_{r,t})
\Bigr)V_t^\top.
\]
Since
$\tilde q_{\mathrm S,5}^{-1}(q_{\mathrm S,5}(s))=s$
for $s\in[0,1]$, the gradient formula in (i) yields
\begin{align} \label{d:regu}
\nabla\widetilde{\mathcal R}_t(-D_t/\lambda)
=-\tau_t U_t\operatorname{diag}
(s_{1,t},\ldots,s_{r,t})V_t^\top
=-U_t\operatorname{diag}
(\sigma_{1,t},\ldots,\sigma_{r,t})V_t^\top
=-G_t.
\end{align}
Consequently,
\[
\widehat{\nabla}\widetilde\Phi_t(W_t)
=G_t+\nabla\widetilde{\mathcal R}_t(W_t)
=\nabla\widetilde{\mathcal R}_t(W_t)
-\nabla\widetilde{\mathcal R}_t(-D_t/\lambda).
\]
By the $(\lambda\tau_t/C_1)$-strong convexity of
$\widetilde{\mathcal R}_t$ a+nd
$H_t=D_t+\lambda W_t$, we have
\begin{align*}
\big\langle
\widehat{\nabla}\widetilde\Phi_t(W_t),H_t
\big\rangle_F
&=\lambda\big\langle
\nabla\widetilde{\mathcal R}_t(W_t)
-\nabla\widetilde{\mathcal R}_t(-D_t/\lambda),
W_t+D_t/\lambda
\big\rangle_F\\
&\ge\frac{\lambda^2\tau_t}{C_1}
\|W_t+D_t/\lambda\|_F^2
=\frac{\tau_t}{C_1}\|H_t\|_F^2.
\end{align*}
If $H_t=0$, then $W_t=-D_t/\lambda$, and hence
\[
\widehat{\nabla}\widetilde\Phi_t(W_t)
=\nabla\widetilde{\mathcal R}_t(W_t)
-\nabla\widetilde{\mathcal R}_t(-D_t/\lambda)
=0.
\]
Conversely, if
$\widehat{\nabla}\widetilde\Phi_t(W_t)=0$,
then
\[
0
=\big\langle
\widehat{\nabla}\widetilde\Phi_t(W_t),H_t
\big\rangle_F
\ge\frac{\tau_t}{C_1}\|H_t\|_F^2.
\]
Since $\tau_t/C_1>0$, this implies $H_t=0$.
Thus,
\[
H_t=0
\quad\Longleftrightarrow\quad
\widehat{\nabla}\widetilde\Phi_t(W_t)=0.   
\]

\medskip
\noindent\textbf{(iii) Exact generalized Frank--Wolfe equivalence.}
Fix an iteration $t$ and a realization of $G_t$.
Throughout this step, $\tau_t$ is held fixed.
Since $\beta=0$, \eqref{d:regu} still holds.
Then, by the $(\lambda\tau_t/C_1)$-strong convexity of
$\widetilde{\mathcal R}_t$, for every
$W\in\mathbb R^{m\times n}$,
 \begin{align*}
\widetilde{\mathcal R}_t(W)
\ge 
\widetilde{\mathcal R}_t\Big(\frac{-D_t}{\lambda}\Big)
-\Big\langle G_t,W+\frac{D_t}{\lambda}\Big\rangle_F +\frac{\lambda\tau_t}{2C_1}
\Big\|W+\frac{D_t}{\lambda}\Big\|_F^2.
\end{align*}
Adding $\langle G_t,W\rangle_F$ to both sides gives
\begin{align*}
\langle G_t,W\rangle_F+\widetilde{\mathcal R}_t(W)
\ge{}&
\Big\langle G_t,-\frac{D_t}{\lambda}\Big\rangle_F
+\widetilde{\mathcal R}_t\Big(\frac{-D_t}{\lambda}\Big)
+\frac{\lambda\tau_t}{2C_1}
\Big\|W+\frac{D_t}{\lambda}\Big\|_F^2.
\end{align*}
Since $\lambda\tau_t/C_1>0$, the last term is strictly
positive whenever $W\ne-D_t/\lambda$.
Thus $-D_t/\lambda$ is the unique minimizer, proving
\eqref{eq:standard-gfw-oracle}.

Finally, substituting $\eta=\gamma/\lambda$ into the
Muon update gives
\[
W_{t+1}
=W_t-\eta(D_t+\lambda W_t)=W_t-\frac{\gamma}{\lambda}(D_t+\lambda W_t)
=(1-\gamma)W_t
+\gamma\big(-D_t/\lambda\big).  \qedhere
\]
\end{proof}

\end{document}